\def\cost{\mbox{cost}}
\def\dist{\mbox{dist}}
\def\opt{\mbox{opt}}
\def\argmax{\mathop{\arg\max}}
\def\argmin{\mathop{\arg\min}}
\def\ybf{\mathbf{y}}
\def\xbf{\mathbf{x}}
\newcommand{\ting}[1]{{\leavevmode\color{red}#1}}
\newcommand{\shiqiang}[1]{{\leavevmode\color{orange}#1}}
\newcommand{\hl}[1]{{\leavevmode\color{purple}#1}}
\def\sign{\mbox{sign}}
\newtheorem{definition}{Definition}[section]
\newtheorem{lemma}{Lemma}[section]
\newtheorem{theorem}{Theorem}[section]
\newcommand{\nosemic}{\renewcommand{\@endalgocfline}{\relax}}% Drop semi-colon ;
\newcommand{\dosemic}{\renewcommand{\@endalgocfline}{\algocf@endline}}% Reinstate semi-colon ;
\let\oldnl\nl% Store \nl in \oldnl
\newcommand{\nonl}{\renewcommand{\nl}{\let\nl\oldnl}}% Remove line number for one line
\def\BibTeX{{\rm B\kern-.05em{\sc i\kern-.025em b}\kern-.08em T\kern-.1667em\lower.7ex\hbox{E}\kern-.125emX}}
\begin{document}

\title{Joint Coreset Construction and Quantization for Distributed Machine Learning
}

 \author{\IEEEauthorblockN{Hanlin Lu\IEEEauthorrefmark{1}, Changchang Liu\IEEEauthorrefmark{2}, Shiqiang Wang\IEEEauthorrefmark{2}, Ting He\IEEEauthorrefmark{1}, Vijay Narayanan\IEEEauthorrefmark{1}, Kevin S. Chan\IEEEauthorrefmark{3}, Stephen Pasteris\IEEEauthorrefmark{4}}
 \IEEEauthorblockA{
 \IEEEauthorrefmark{1}Pennsylvania State University, University Park, PA, USA. Email: \{hzl263,tzh58,vxn9\}@psu.edu\\
 \IEEEauthorrefmark{2}IBM T. J. Watson Research Center, Yorktown, NY, USA. Email: \{Changchang.Liu33,wangshiq\}@us.ibm.com\\
 \IEEEauthorrefmark{3}Army Research Laboratory, Adelphi, MD, USA. Email: kevin.s.chan.civ@mail.mil \\
 \IEEEauthorrefmark{4} University College London, London, UK. Email: stephen.pasteris@gmail.com
 }
\thanks{\scriptsize This research was partly sponsored by the U.S. Army Research Laboratory and the U.K. Ministry of Defence under Agreement Number W911NF-16-3-0001. Narayanan was partly supported by
NSF 1317560. The views and conclusions contained in this document are those of the authors and should not be interpreted as representing the official policies of the U.S. Army Research Laboratory, the U.S. Government, the U.K. Ministry of Defence or the U.K. Government. The U.S. and U.K. Governments are authorized to reproduce and distribute reprints for Government purposes notwithstanding any copyright notation hereon. \looseness=-1
}
 }

\maketitle

\begin{abstract}
%Distributed data compression is of much importance and attracts lots of attention, which is our objective in this paper. \textit{Coreset} is one of the techniques to reduce the caridinality of dataset, while \textit{quantization} can be useful to reduce the number of bits representing one number. In order to give a better data compression and utilize the advantages of coreset and quantization, we explore the possible combination of both coreset and quantization. The error bound is given and efficient algorithms are proposed. After evaluations on more datasets and different machine learning models, the performance of our algorithms is verified.

Coresets are small, weighted summaries of larger datasets, aiming at providing provable error bounds for machine learning (ML) tasks while significantly reducing the communication and computation costs. To achieve a better trade-off between ML error bounds and costs, we propose the first framework to incorporate quantization techniques into the process of coreset construction. Specifically, we theoretically analyze the ML error bounds caused by a combination of coreset construction and quantization. Based on that, we formulate an optimization problem to minimize the ML error under a fixed budget of communication cost. To improve the scalability for large datasets, we identify two proxies of the original objective function, for which efficient algorithms are developed. For the case of data on multiple nodes, we further design a novel algorithm to allocate the communication budget to the nodes while minimizing the overall ML error. Through extensive experiments on multiple real-world datasets, we demonstrate the effectiveness and efficiency of our proposed algorithms for a variety of ML tasks. In particular, our algorithms have achieved more than 90\% data reduction with less than 10\% degradation in ML performance in most cases.
%In particular, our algorithms have achieved less than $10\%$ of degradation in ML performance at more than $90\%$ of data reduction. \hl{This is not true for all experiment results. maybe comment it out?}
%\cc{In particular, our algorithms can achieve up to 90\% data reduction with less than $10\%$ degradation in ML performance.
%\cc{\emph(I put data reduction first to highlight key contribution and use upto to be more rigorous)}
%\ting{We showed 90\% and 98\% reduction. How about this version?}
%{\cc{still thinking since the ML degradation can range from over 100\% （worst） to 0\% (best)}}\cc{also added \emph{in most cases} in the summary of experimental results}\looseness=-1 % Empirical evaluations verify the effectiveness of our distributed algorithm and show its advantages over previous distributed approaches.

\end{abstract}

\begin{IEEEkeywords}
Coreset, quantization, distributed machine learning, 
%data compression.
optimization
\end{IEEEkeywords}

\section{Introduction}

The rapid development of data capturing technologies, e.g., wearables and Internet of Things (IoT), has fueled the explosive growth of data-driven applications that employ various \emph{machine learning (ML)} models to unleash the valuable information hidden in the data. 
%With the rapid development of information technology, big data has become a hot topic of research in academia and industry. Data that were earlier locked up in local repositories are now being increasingly shared for enabling new applications and facilitating academic research in various fields. 
One key challenge for such applications is the high communication cost in training ML models over large distributed datasets. One approach to address this challenge is federated learning \cite{konevcny2016federated, smith2017federated, wang2018edge}, where distributed agents iteratively exchange model parameters to collectively train a global model. The exchanged parameters, however, are only useful for a single model, and separate communications are needed to train different models, limiting the efficiency in simultaneously training multiple ML models. When the goal is to train multiple models, which is the focus of our work in this paper, the alternative approach of collecting data summaries at a central location (e.g., a server) is often more efficient, as the summaries can be used to train multiple ML models, amortizing the communication cost.    

To reduce the original dataset into small summaries, several techniques have been proposed, which can be generally classified into: 1) sketching techniques for reducing the feature dimension \cite{phillips2016coresets, feldman2010coresets, sarlos2006improved}; and 2) coreset construction techniques for reducing the sample dimension \cite{Badoiu02STOC, agarwal2005geometric, clarkson2010coresets, har2004coresets, Feldman11STOC, Balcan13NIPS, Feldman16NIPS, Langberg10SODA, Feldman13SODA}. However, sketches change the feature space and thus require adaptations of the ML tasks, e.g., the feature space of a classifier needs to be modified to be applicable to the sketching results.  In comparison, coresets only reduce the cardinality of the datasets and preserve the feature space, making them directly applicable to the original ML tasks. Therefore, we focus on coreset-based data summarization. 
%construction techniques for reducing the communication costs without requiring changes to the settings of ML tasks.

Coresets \cite{Badoiu02STOC, agarwal2005geometric, clarkson2010coresets, har2004coresets, Feldman11STOC} are small, weighted versions of the original dataset, lying in the same feature space. Existing coreset construction algorithms focus on maximally reducing the cardinality with provable guarantees on the ML error. However, most of these algorithms are model-specific, i.e., constructing different coresets for training different ML models, which seriously limits their capability in reducing the communication cost when training multiple ML models. Recently, a \emph{robust coreset construction (RCC)} algorithm was proposed to address this issue \cite{coreset19:report}, where a clustering-based coreset was proved to be applicable for training a variety of ML models with provable error bounds.\looseness=-1 % while the communication cost can be significantly reduced.

However, existing coreset construction algorithms only reduce the number of data points, but not the number of bits required to represent each data point. The latter is the goal of quantization \cite{gray1998quantization, lloyd1982least}, where various techniques, from simple rounding-based quantizers to sophisticated vector quantizers, have been proposed to transform the data points from arbitrary values in the sample space to a set of discrete values that can be encoded by a smaller number of bits \cite{Sayood:2012:IDC:3050831, fischer1986pyramid, goodman1975robust, farvardin1987optimal}.   
%Quantization techniques \cite{gray1998quantization, lloyd1982least} aim to quantize the data points from arbitrary values in the sample space to a set of discrete values so that each quantized value can be indexed by a smaller number of bits. Different types of quantizers have been proposed in the literature to support different applications such as \cite{Sayood:2012:IDC:3050831, fischer1986pyramid, goodman1975robust, farvardin1987optimal}. One typical quantizer is \emph{rounding}, which directly reduces the precision bits used to represent a number. 

In this work, we propose the first framework to optimally integrate coreset construction and quantization. 
%extend the RCC algorithm by uniquely incorporating the quantization techniques to achieve a better tradeoff between ML error bounds and communication costs.  Since quantization aims to reduce the precision (and communication costs correspondingly), it is interesting to investigate whether the combination of coreset construction and quantization would result in a better trade-off between ML error bounds and the communication costs. 
Intuitively, under a given communication budget specifying the total number of bits to collect, there is a trade-off between collecting more data points at a lower precision and collecting fewer data points at a higher precision.  \emph{Jointly} configuring the quantizer and the coreset construction algorithm to achieve the best trade-off can potentially achieve a smaller ML error than using quantization or coreset construction alone. Our goal is to realize this potential by developing efficient algorithms to compute the optimal configuration parameters explicitly. 
%Specifically, we theoretically derive the ML error bounds after a combination of coreset construction and quantization. Furthermore, we formulate an optimization problem of minimizing the ML error bounds under a given budget of communication costs. We then propose two heuristic algorithms to solve this optimization problem in an efficient manner.

%Limited works have been proposed to construct coreset in a distributed setting \cite{Balcan13NIPS}, which cannot be directly applied to our problem of balancing cardinality and precision. To address this challenge, we propose a novel algorithm to effectively allocate the communication budget across multiple distributed agents with a negligible degradation of ML error bounds.

In summary, our contributions include:
\begin{enumerate}
 \item We are the first to incorporate quantization techniques into the coreset construction process. Based on rigorous analysis for the performance of a combination of coreset construction and quantization, we formulate an optimization problem to jointly configure the coreset construction algorithm and the quantizer to minimize the ML error under a given communication budget. 
 \item We propose two algorithms to solve the optimization by identifying proxies of the objective function that can be evaluated efficiently for large datasets. Through theoretical analysis as well as experimental evaluations, we demonstrate the effectiveness of the proposed algorithms in supporting diverse ML tasks.
 \item We further propose a novel algorithm to allocate the communication budget across multiple nodes to adapt our solutions to the distributed setting. %\shiqiang{(I think our notion of federated learning is different from the usual definition of federated learning, may need to think of a better term here, we can come back to this later after revising the rest of the paper)}. 
 Experimental results demonstrate the effectiveness of the proposed algorithm as well as its advantages over existing solutions. \looseness=-1 %distributed algorithms.  
\end{enumerate}

\section{Related work}
Coresets have been widely applied in shape fitting and clustering problems \cite{Feldman16NIPS}. Previous coreset construction algorithms can be classified into four categories: sampling-based algorithms \cite{Langberg10SODA, dasgupta2009sampling, lucic2017training, chen2009coresets}, SVD-based algorithms \cite{Feldman13SODA, feldman2013learning, Boutsidis13IT}, space-decomposition algorithms \cite{Har-Peled04STOC, coreset19:report}, and iterative algorithms \cite{Clarkson10TALG, Badoiu02STOC}. Sampling-based algorithms \cite{Langberg10SODA, dasgupta2009sampling, lucic2017training, chen2009coresets} leverage importance/sensitivity sampling and other advanced sampling techniques to select a subset of the original data points to form a coreset. SVD-based algorithms \cite{Feldman13SODA, feldman2013learning, Boutsidis13IT} use the singular value decomposition (SVD) %such as K-SVD 
to build coresets for ML tasks including dictionary design and sparse representation. Space-decomposition algorithms \cite{Har-Peled04STOC, coreset19:report} partition the original sample space based on different criteria and then select representative points from these partitions. Iterative algorithms \cite{Clarkson10TALG, Badoiu02STOC} select points according to pre-defined criteria to form a coreset in an iterative manner.

However, most existing coreset construction algorithms are model-specific \cite{Feldman11STOC}. That is, %even using the same construction algorithm, 
different coresets will be constructed for training different ML models, increasing the communication cost in collecting the coresets when training multiple ML models. % (with different parameter settings in the construction algorithm) for a different ML task. 
To address this issue, \emph{robust coreset construction (RCC)} has been recently proposed in \cite{coreset19:report}, where a single coreset can support a variety of ML models with provable error bounds. % theoretically as well as using multiple real-world datasets. 
Therefore, in this work, we focus on RCC as our choice of coreset construction algorithm. % investigating RCC algorithm and its combination with quantization techniques to further reduce the communication cost.

Quantization techniques \cite{gray1998quantization, lloyd1982least, Sayood:2012:IDC:3050831, fischer1986pyramid, goodman1975robust, farvardin1987optimal,han2015deep, zhou2017incremental, lin2016fixed} aim to quantize the data points to a set of discrete values so that each quantized value can be encoded by a smaller number of bits. Different quantizers have been proposed to support different applications \cite{Sayood:2012:IDC:3050831, fischer1986pyramid, goodman1975robust, farvardin1987optimal}. Recently, quantization has been leveraged to reduce the size of ML models without seriously degrading the model accuracy \cite{han2015deep, zhou2017incremental, lin2016fixed}. 
Existing quantizers can be classified into \emph{scalar quantizers} and \emph{vector quantizers}, where scaler quantizers apply quantization operations to each attribute of a data point, and vector quantizers \cite{gray1990vector, gersho2012vector} apply quantization to each data point as a whole. 
In this work, we focus on a simple rounding-based scalar quantizer due to its simplicity and broad applicability. %which directly reduces the precision bits for representing a number. 
However, we note that our analysis can be easily extended to any given quantizer. % in a straightforward manner. 

Despite extensive studies of coreset construction and quantization separately, 
to our knowledge, how to optimally combine them remains an open question. 
To this end, we propose the first framework to integrate coreset construction and quantization, by formulating and solving optimization problems to jointly configure the coreset construction algorithm and the quantizer at hand to achieve the optimal tradeoff between the ML error and the communication cost.  %based on which we further propose two efficient heuristic algorithms. Furthermore, we propose an effective allocation strategy to assign communication budget across multiple agents in the distributed setting which has shown significant advantage over previous algorithms.

\textbf{Roadmap.} Section~\ref{sec: Preliminaries} reviews the background on coreset and quantization. Section \ref{sec: Optimizing Tradeoff Between ML Error Bounds and Communication Costs} formulates the main problem. Section \ref{sec: Two Effective Heuristic Solutions} presents two algorithms based on strategic reformulation of the original problem. Section~\ref{sec: distributed setting } extends the solutions to distributed setting. Section~\ref{sec: Performance Evaluation} presents our experimental results. %and verifies our mathematical analysis. 
At last, Section~\ref{sec:Conclusion} concludes this paper.

\section{Preliminaries}\label{sec: Preliminaries}

In this section, we briefly review several definitions and algorithms that will be used in subsequent sections. % coreset construction and the rounding-based quantizer. 
Frequently used notations in this paper are listed in Table~\ref{tab:variables}.

\begin{table}[tb]
% \vspace{-.0em}
\small
\renewcommand{\arraystretch}{1.2}
\caption{Main notations} \label{tab:variables}
\vspace{-0.7em}
\centering
\begin{tabular}{cl}
  \hline
Variable & Definition  \\
  \hline
  CS & The operation of coreset construction  \\
  
  QT & The operation of quantization \\
  
  $\epsilon$, $\epsilon_i$ & Overall/local ML error \\
  
  $B$, $B_i$ & Global/local communication budget \\
  
  $\mathcal{Y}$, $\mathcal{Y}_i$ & Total/local original dataset \\
  
  $\mathcal{S}$ & Coreset \\
  
  $n$, $k$ & Cardinalities of $\mathcal{Y}$ and $\mathcal{S}$ \\
  
  $\ybf_i$, $y_{ij}$ & One data point and one attribute of the data point\\

  $b_0$, $b$ & $\#$bits for representing each attribute in $\mathcal{Y}$ and $\mathcal{S}$ \\

  $\Delta$ & Maximum quantization error \\
  
  $\xbf$, $\mathcal{X}$ & One solution and solution space for the ML task \\
  
  $\cost(\mathcal{Y}, \xbf)$ & cost function of the ML task \\
  
  $\rho$ & Lipschitz constant for the ML cost function \\
  
  %$\wmin$ & Smallest weight of original dataset\\
  
  $\opt(k)$  & Optimal $k$-means clustering cost for $\mathcal{Y}$\\
  
  $\opt_\infty(k)$ & Optimal $k$-center clustering cost for $\mathcal{Y}$ \\
  
  %$f(k)$ & $\opt(k) - \opt(2k)$  \shiqiang{This is probably not needed in this table since it's specific to one algorithm} \hl{fixed: deleted}\\
  
  %$\mathbb{S}$ & A series of CS and QT operations \shiqiang{Where is this used?} \hl{fixed: deleted}\\
  
  %$l$ & Length of $\mathbb{S}$ \shiqiang{Where is this used?} \hl{fixed: deleted}\\
  
  \multirow{2}{*}{$m_e$} & $\#$ of exponent bits in the floating point representation \\
  & of an attribute  \\
  
  %$\epsilon_i(B_i)$ & Local ML error w.r.t. local budget $B_i$ \\
  
  $N$ & Number of nodes in distributed setting \\
  \hline
  
\end{tabular}
 \vspace{-.5em}
\end{table}
\normalsize

\subsection{Data Representation}

Let $\mathcal{Y}$ denote the original dataset with cardinality $n := |\mathcal{Y}|$, dimension $d$, and precision $b_0$.  Each data point $\ybf_i\in \mathcal{Y}$ is a column vector in $d$-dimensional space, and each attribute $y_{ij}$ is represented as a floating point number with a sign bit, an $m_e$-bit exponent, and a ($b_0-1-m_e$)-bit significand.  Let $\mathbf{Y} := [\ybf_{1},...,\ybf_{i},...,\ybf_{n}]$ denote the matrix with column vectors $\ybf_{i}$. For simplicity of analysis, we assume that $y_{ij}$'s have been normalized to $[-1,1]$ with zero mean (i.e., $\frac{1}{n}\sum_i y_{ij}=0$ for each $j$). Let $\mu(\mathcal{Y}):= {1\over n}\sum_{\ybf_i\in \mathcal{Y}} \ybf_i$ denote the sample mean of $\mathcal{Y}$. \looseness=-1

%denote the original centered and normalized dataset in a $d$-dimensional space, i.e., $mean(Y) = \boldsymbol{0}^{1 \times d}$ and $\max/\min(Y) = \pm \boldsymbol{1}^{1 \times d}$ \shiqiang{I've never seen a notation like this...}. 
% \ting{multiple issues with defining the dataset as a matrix $Y$ globally: discussions of $y\in Y$ lose meaning, Eq. (2) requires the normalization to $[0,1]$, discussions of ``$d$-dimensional'' lose meaning, while Theorem V.1 still requires the centering. Suggest still treating $Y$ as a set globally and define another matrix to be the centered data matrix only for the EVD-MECB section}

% \shiqiang{TODO: Change notations according to definitions in this subsection (including Table I).}

\subsection{Coreset Construction}

A generic ML task can be considered as a cost minimization problem. Using $\mathcal{X}$ to denote the set of possible models, and $\cost(\mathcal{Y},\xbf)$ to denote the mismatch between the dataset $\mathcal{Y}$ and a candidate model $\xbf$, the problem seeks to find the model that minimizes $\cost(\mathcal{Y},\xbf)$. The cost function $\cost(\mathcal{Y},\xbf)$ is usually in the form of a summation $\cost(\mathcal{Y}, \xbf) = \sum_{\ybf \in \mathcal{Y}} \cost(\ybf, \xbf)$ or a maximization $\cost(\mathcal{Y},\xbf) = \max_{\ybf \in \mathcal{Y}} \cost(\ybf,\xbf)$, where $\cost(\ybf,\xbf)$ is the per-point cost that is model-specific. For example,  minimum enclosing ball (MEB) \cite{clarkson2010coresets} minimizes a maximum cost and $k$-means minimizes a sum cost. %, both of which will be evaluated in Section \ref{sec: Performance Evaluation}.   %   
%either a summation problem or maximum problem. Using $x \in \mathcal{X}$ to denote one feasible solution, the ML cost can be written as: $\cost(Y, x) = \sum_{p \in Y} \cost(p, x)$ or $\cost(Y, x) = \max_{p \in Y} \cost(p, x)$. The coreset of $Y$ can be regarded as a property or a error bound. The definition of a coreset $S$ is as follows.

A coreset $\mathcal{S}$ is a weighted (and often smaller) dataset that approximates $\mathcal{Y}$ in terms of costs.

\begin{definition}[$\epsilon_\textrm{CS}$-coreset~\cite{Feldman11STOC}]\label{def:epsilon-coreset}
A set $\mathcal{S}\subseteq \mathbb{R}^d$ with weights $u_{\mathbf{q}}$ ($\forall \mathbf{q}\in \mathcal{S}$) is an $\epsilon_\textrm{CS}$-coreset for $\mathcal{Y}$ with respect to (w.r.t.) $\cost(\mathcal{Y},\xbf)$ ($\xbf\in \mathcal{X}$) if $\forall \xbf\in \mathcal{X}$,
\begin{align}\label{eq: epsilon coreset definition}
(1-\epsilon_\textrm{CS})\cost(\mathcal{Y},\xbf) \leq \cost(\mathcal{S},\xbf) \leq (1+\epsilon_\textrm{CS})\cost(\mathcal{Y},\xbf),
\end{align}
where $\cost(\mathcal{S},\xbf)$ is defined as $\cost(\mathcal{S},\xbf) = \sum_{\mathbf{q} \in \mathcal{S}}\hspace{-.25em} u_{\mathbf{q}} \cost(\mathbf{q},\xbf)$ if $\cost(\mathcal{Y},\xbf)$ is a sum cost, and $\cost(\mathcal{S},\xbf) = \max_{\mathbf{q}\in \mathcal{S}}\cost(\mathbf{q},\xbf)$ if $\cost(\mathcal{Y},\xbf)$ is a maximum cost. \looseness=-1
\end{definition}

Definition~\ref{def:epsilon-coreset} also provides a performance measure for coresets: $\epsilon_{\textrm{CS},\mathcal{S}} := \sup_{\xbf\in \mathcal{X}}|\cost(\mathcal{Y},\xbf)-\cost(\mathcal{S},\xbf)|/\cost(\mathcal{Y},\xbf)$ measures the maximum relative error in approximating the ML cost function by coreset $\mathcal{S}$, called the \emph{ML error} of $\mathcal{S}$. The smaller $\epsilon_{\textrm{CS},\mathcal{S}}$, the better $\mathcal{S}$ is in supporting the ML task. \looseness=-1 % would result in a smaller ML cost. 

Although most coreset construction algorithms only provide guaranteed performance for specific ML tasks, a recent work \cite{coreset19:report} showed that using clustering centers, especially $k$-means clustering centers, as the coreset achieves guaranteed performances for a broad class of ML tasks with Lipschitz-continuous cost functions.  
In the sequel, we denote the optimal $k$-means clustering cost for $\mathcal{Y}$ by $\opt(k)$. It is known that the optimal $1$-means center of $\mathcal{Y}$ is the sample mean $\mu(\mathcal{Y})$.
%We use $\mu(Y)$ to denote the optimal $1$-means center of $Y$, which satisfies $\mu(Y) = {1\over |Y|}\sum_{y\in Y}y$. 

\subsection{Quantization}\label{subsec: quantizer analysis}

Quantization reduces the number of bits required to encode each data point by transforming it to the nearest point in a set of discrete points, the selection of which largely defines the quantizer. Our solution will utilize the maximum quantization error, defined as $\Delta := \max_{\ybf \in \mathcal{Y}} \dist(\ybf, \ybf')$, where $\ybf'$ denotes the quantized version of data point $\ybf$ and $\dist(\ybf,\ybf')$ is their Euclidean distance. Given a quantizer, $\Delta$ depends on the number of bits used to represent each quantized value. Below we analyze $\Delta$ for a simple but practical \emph{rounding-based quantizer} as a concrete example, but our framework also allows other quantizers. %, and $Y'$ to denote the set of quantized values corresponding to dataset $Y$. The maximum quantization error (per data point) can thus be computed as 

Let $y_{ij}$ denote  the $j$-th attribute of the $i$-th data point. 
The $b_0$-bit binary floating point representation of $y_{ij}$ is given by $(-1)^{\sign(y_{ij})} \times 2^{e_{ij}} \times (a_{ij}(0) + a_{ij}(1) \times 2^{-1} + \ldots  + a_{ij}(b_0-1-m_e) \times 2^{-(b_0-1-m_e)})$ \cite{IEEE754}.
%\begin{equation}\label{eq:y_ij}
%\begin{split}
%    y_{ij} &= (-1)^{\sign(y_{ij})} \times 2^{e_{ij}} \times (a_{ij}(0) + a_{ij}(1) \times 2^{-1} + \ldots \\
%    & + a_{ij}(b_0-1-m_e) \times 2^{-(b_0-1-m_e)}),
%\end{split}
%\end{equation}
Here, $\sign(y_{ij})$ is the sign of $y_{ij}$ ($0$: nonnegative, $1$: negative), $e_{ij}$ is an $m_e$-bit exponent, % either an $m_e$-bit signed integer or an $m_e$-bit unsigned integer minus a constant offset. 
and $a_{ij}(\cdot)\in \{0, 1\}$ are the significant digits, where $a_{ij}(0)\equiv 1$ and does not need to be stored explicitly% \cite{wiki:xxx}%(e.g., in IEEE 754 double-precision binary floating-point format, an unsigned integer $e'_{ij}$ in the exponent field is interpreted as an exponent of $e_{ij} = e'_{ij}-1023$, where $1023$ is the offset)
. 

Consider a scalar quantizer that rounds each $y_{ij}$ to $s$ significant digits. The quantized value equals
$y'_{ij} = (-1)^{\sign(y_{ij})} \times 2^{e_{ij}} \times (a_{ij}(0) + a'_{ij}(1) \times 2^{-1} + \ldots + a'_{ij}(s) \times 2^{-s})$,
%\begin{equation}
%\begin{split}
%    y'_{ij} &= (-1)^{\sign(y_{ij})} \times 2^{e_{ij}} \times (a_{ij}(0) + a_{ij}(1) \times 2^{-1} + \ldots\\
%    &+ a_{ij}(s)\times 2^{-s} + a'_{ij}(s) \times 2^{-s}),
%\end{split}{}    
%\end{equation}
where $a'_{ij}(s)\in \{0,1\}$ is the result of rounding the remaining digits ($0$: round down, $1$: round up). %$a_{ij}(s+1),\ldots,a_{ij}(b_0-1-m_e)$%($0$---round down, $1$---round up)
As $|y_{ij} - y'_{ij}|\leq 2^{e_{ij}-s}$ and $|y_{ij}| \geq 2^{e_{ij}}$, we have ${|y_{ij} - y'_{ij}|}/{|y_{ij}|} \leq 2^{-s}$. Hence, for $\mathcal{Y}$ in $\mathbb{R}^d$ where each attribute $y_{ij}$ is normalized to $[-1,1]$, the maximum quantization error of this quantizer is bounded by
\begin{align}\label{eq:Delta simplified}
\Delta \leq 2^{-s} \cdot \max_{\ybf_i\in \mathcal{Y}}\|\ybf_i\|. %\leq 2^{-s}\sqrt{d}.
\end{align}

% \shiqiang{TODO: Change to using $\max_{y_i\in Y}\|y_i\|$ in algorithm.}

\section{Optimal Combination of Coreset Construction and Quantization}\label{sec: Optimizing Tradeoff Between ML Error Bounds and Communication Costs}
In this section, we first analyze the ML error bounds based on the data summary computed by a combination of coreset construction and quantization, and then formulate an optimization problem to minimize the ML error under a given budget of communication cost.

\subsection{Workflow Design}\label{subsec: motivating CS+QT}

The first question in the integration of quantization (QT) into coreset construction (CS) is to determine the order of these two operations. Intuitively, QT is needed after CS since the CS algorithm can result in arbitrary values that cannot be represented using $b$ bits as specified for the quantizer. Therefore, we consider a pipeline where CS is followed by QT. \looseness=-1
% Moreover, all the steps (a series of CS and QT operations) before the final quantization can be viewed as one coreset construction algorithm \shiqiang{(The reason of this is not very obvious, do we need this?)} \hl{I think we can keep this since this explains why we consider only one CS before QT}. \shiqiang{We don't absolutely need to explain why we consider one CS. I think this is much less of a problem than many confusions such as parameter choice in experiments later in the paper, but let's discuss this on Monday.} In this manner, any combination of CS and QT operations can be simplified to a CS operation followed by a QT operation. 

\subsection{Error Bound Analysis}\label{subsec: Error bound analysis for CS+QT}

The error bound for CS + QT is stated as follows.

\begin{theorem}\label{thm: coreset + quanization}
After applying a $\Delta$-maximum-error quantizer to an $\epsilon_\textrm{CS}$-coreset $\mathcal{S}$ of the original dataset $\mathcal{Y}$, the quantized coreset $\mathcal{S}'$ is an $(\epsilon_\textrm{CS}+\rho\Delta+\epsilon_\textrm{CS} \cdot \rho\Delta)$-coreset for $\mathcal{Y}$ w.r.t. any cost function satisfying:
\begin{enumerate}
    \item $\cost(\ybf, \xbf) \geq 1$ \label{cond1}
    \item $\cost(\ybf, \xbf)$ is $\rho$-Lipschitz-continuous in $\ybf \in \mathcal{Y}$, $\forall \xbf \in \mathcal{X}$. \label{cond2}
\end{enumerate}
\end{theorem}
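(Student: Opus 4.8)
The plan is to chain two approximation steps: first relate $\cost(\mathcal{S}, \xbf)$ to $\cost(\mathcal{Y},\xbf)$ using the $\epsilon_\textrm{CS}$-coreset property, then relate $\cost(\mathcal{S}',\xbf)$ to $\cost(\mathcal{S},\xbf)$ using the quantization error bound and Lipschitz continuity, and finally compose the two multiplicative errors. Fix an arbitrary $\xbf \in \mathcal{X}$. By Definition~\ref{def:epsilon-coreset} we already have $(1-\epsilon_\textrm{CS})\cost(\mathcal{Y},\xbf) \le \cost(\mathcal{S},\xbf) \le (1+\epsilon_\textrm{CS})\cost(\mathcal{Y},\xbf)$, so the work is in bounding $|\cost(\mathcal{S}',\xbf) - \cost(\mathcal{S},\xbf)|$.

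For the quantization step, I would pair each $\mathbf{q}\in\mathcal{S}$ with its quantized counterpart $\mathbf{q}'\in\mathcal{S}'$, keeping the same weight $u_\mathbf{q}$, and use $\dist(\mathbf{q},\mathbf{q}')\le\Delta$ together with $\rho$-Lipschitz-continuity to get $|\cost(\mathbf{q},\xbf)-\cost(\mathbf{q}',\xbf)| \le \rho\Delta$ for every $\mathbf{q}$. In the sum-cost case, $|\cost(\mathcal{S}',\xbf)-\cost(\mathcal{S},\xbf)| \le \sum_{\mathbf{q}\in\mathcal{S}} u_\mathbf{q}\,|\cost(\mathbf{q},\xbf)-\cost(\mathbf{q}',\xbf)| \le \rho\Delta \sum_{\mathbf{q}\in\mathcal{S}} u_\mathbf{q}$; in the max-cost case a standard argument shows $|\max_\mathbf{q}\cost(\mathbf{q}',\xbf) - \max_\mathbf{q}\cost(\mathbf{q},\xbf)| \le \max_\mathbf{q}|\cost(\mathbf{q}',\xbf)-\cost(\mathbf{q},\xbf)| \le \rho\Delta$. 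Here is where condition~\ref{cond1} enters: since $\cost(\ybf,\xbf)\ge 1$, we have $\cost(\mathcal{S},\xbf)\ge 1$ (for the max case directly; for the sum case this follows if the weights sum to at least one, or more simply because an $\epsilon_\textrm{CS}$-coreset inherits $\cost(\mathcal{S},\xbf)\ge(1-\epsilon_\textrm{CS})\cost(\mathcal{Y},\xbf)$ and $\cost(\mathcal{Y},\xbf)\ge 1$), which lets me convert the absolute error $\rho\Delta$ into a relative error: $|\cost(\mathcal{S}',\xbf)-\cost(\mathcal{S},\xbf)| \le \rho\Delta\cdot\cost(\mathcal{S},\xbf)$. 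Hence $(1-\rho\Delta)\cost(\mathcal{S},\xbf) \le \cost(\mathcal{S}',\xbf) \le (1+\rho\Delta)\cost(\mathcal{S},\xbf)$.

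Composing the two sandwich inequalities gives $(1-\epsilon_\textrm{CS})(1-\rho\Delta)\cost(\mathcal{Y},\xbf) \le \cost(\mathcal{S}',\xbf) \le (1+\epsilon_\textrm{CS})(1+\rho\Delta)\cost(\mathcal{Y},\xbf)$. Expanding the upper factor yields $1+\epsilon_\textrm{CS}+\rho\Delta+\epsilon_\textrm{CS}\rho\Delta$, matching the claimed bound; for the lower side, $(1-\epsilon_\textrm{CS})(1-\rho\Delta) = 1-\epsilon_\textrm{CS}-\rho\Delta+\epsilon_\textrm{CS}\rho\Delta \ge 1-(\epsilon_\textrm{CS}+\rho\Delta+\epsilon_\textrm{CS}\rho\Delta)$, so $\mathcal{S}'$ satisfies Definition~\ref{def:epsilon-coreset} with parameter $\epsilon_\textrm{CS}+\rho\Delta+\epsilon_\textrm{CS}\rho\Delta$. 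Since $\xbf$ was arbitrary, this completes the proof.

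The main obstacle is being careful about the two cost-function forms simultaneously (sum versus max) and making precise exactly how condition~\ref{cond1} guarantees $\cost(\mathcal{S},\xbf)\ge 1$ so that an additive Lipschitz perturbation $\rho\Delta$ can legitimately be rewritten as a relative perturbation; everything else is routine algebra. I would also double-check the max-cost inequality $|\max_i f_i - \max_i g_i|\le\max_i|f_i-g_i|$, which is where the weight-free form of the bound comes from.
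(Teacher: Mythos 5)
Your proposal is correct and follows essentially the same route as the paper: the paper isolates your middle step as a standalone lemma showing that quantization yields a $\rho\Delta$-relative perturbation of the cost (using Lipschitz continuity plus $\cost(\ybf,\xbf)\geq 1$, treating sum and max costs separately), and then composes it multiplicatively with the $\epsilon_\textrm{CS}$-coreset guarantee exactly as you do. The only refinement worth noting is that in the sum-cost case the clean way to absorb the weights is pointwise---$|\cost(\mathbf{q},\xbf)-\cost(\mathbf{q}',\xbf)|\leq\rho\Delta\leq\rho\Delta\cdot\cost(\mathbf{q},\xbf)$ before summing with weights $u_{\mathbf{q}}$---rather than via the aggregate bound $\cost(\mathcal{S},\xbf)\geq 1$, since the latter alone does not convert $\rho\Delta\sum_{\mathbf{q}}u_{\mathbf{q}}$ into $\rho\Delta\cdot\cost(\mathcal{S},\xbf)$.
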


Theorem \ref{thm: coreset + quanization} is directly implied by the following Lemma \ref{lem: quantization impact}, which gives the ML error after one single quantization. 

\begin{lemma}\label{lem: quantization impact}
Given a set of points $\mathcal{Y} \subseteq \mathbb{R}^d$, let $\mathcal{Y}'$ be the corresponding set of quantized points with a maximum quantization error of $\Delta$. Then, $\mathcal{Y}'$ is a $\rho\Delta$-coreset of $\mathcal{Y}$ w.r.t. any cost function satisfying the conditions in Theorem~\ref{thm: coreset + quanization}. 
\end{lemma}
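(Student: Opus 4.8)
The plan is to establish a pointwise (per-point) approximation guarantee for the quantized data and then lift it to the cost function by exploiting Lipschitz continuity together with the normalization $\cost(\ybf,\xbf)\geq 1$. First I would fix an arbitrary model $\xbf\in\mathcal{X}$ and an arbitrary point $\ybf\in\mathcal{Y}$ with quantized counterpart $\ybf'\in\mathcal{Y}'$. By the $\rho$-Lipschitz-continuity of $\cost(\cdot,\xbf)$ and the fact that $\dist(\ybf,\ybf')\leq\Delta$, we get $|\cost(\ybf,\xbf)-\cost(\ybf',\xbf)|\leq\rho\,\dist(\ybf,\ybf')\leq\rho\Delta$. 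Using condition~\ref{cond1} that $\cost(\ybf,\xbf)\geq 1$, this additive bound becomes a relative bound: $|\cost(\ybf,\xbf)-\cost(\ybf',\xbf)|\leq\rho\Delta\,\cost(\ybf,\xbf)$, i.e.
\begin{align}\label{eq:pointwise}
(1-\rho\Delta)\cost(\ybf,\xbf)\leq\cost(\ybf',\xbf)\leq(1+\rho\Delta)\cost(\ybf,\xbf).
\end{align}

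Next I would aggregate \eqref{eq:pointwise} over the dataset, handling the two cases in Definition~\ref{def:epsilon-coreset} separately. For a sum cost, $\cost(\mathcal{Y}',\xbf)=\sum_{\ybf'\in\mathcal{Y}'}\cost(\ybf',\xbf)$, where each quantized point carries unit weight (since $\mathcal{Y}'$ has the same cardinality as $\mathcal{Y}$); summing \eqref{eq:pointwise} termwise over all points and using $\cost(\mathcal{Y},\xbf)=\sum_{\ybf\in\mathcal{Y}}\cost(\ybf,\xbf)$ yields $(1-\rho\Delta)\cost(\mathcal{Y},\xbf)\leq\cost(\mathcal{Y}',\xbf)\leq(1+\rho\Delta)\cost(\mathcal{Y},\xbf)$. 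For a maximum cost, $\cost(\mathcal{Y}',\xbf)=\max_{\ybf'}\cost(\ybf',\xbf)$; since \eqref{eq:pointwise} holds for every point, taking the maximum over both sides preserves the two-sided inequality (the upper bound because $\max$ is monotone, the lower bound by picking the point achieving $\max_{\ybf}\cost(\ybf,\xbf)$ and noting its quantized version has cost at least $(1-\rho\Delta)$ times it). In either case, since $\xbf$ was arbitrary, $\mathcal{Y}'$ satisfies \eqref{eq: epsilon coreset definition} with $\epsilon_\textrm{CS}$ replaced by $\rho\Delta$, which is exactly the claim of Lemma~\ref{lem: quantization impact}.

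I do not expect any serious obstacle here; the lemma is essentially a one-line Lipschitz estimate dressed up with the normalization trick. The only point that requires a little care is the justification that summing/maximizing a relative error bound termwise preserves it — trivial for the sum case, and for the max case one must be slightly careful about which point attains the maximum on each side — and the implicit assumption that the quantizer is a bijection on points so that $\mathcal{Y}'$ inherits unit weights (or, more generally, that the weights are transferred unchanged), so that comparing $\cost(\mathcal{Y}',\xbf)$ to $\cost(\mathcal{Y},\xbf)$ is meaningful. Finally, Theorem~\ref{thm: coreset + quanization} follows by composing the $\epsilon_\textrm{CS}$-coreset guarantee of $\mathcal{S}$ for $\mathcal{Y}$ with the $\rho\Delta$-coreset guarantee of $\mathcal{S}'$ for $\mathcal{S}$ (Lemma~\ref{lem: quantization impact} applied to $\mathcal{S}$ in place of $\mathcal{Y}$): multiplying the two pairs of multiplicative bounds $(1\pm\epsilon_\textrm{CS})$ and $(1\pm\rho\Delta)$ gives the stated $(\epsilon_\textrm{CS}+\rho\Delta+\epsilon_\textrm{CS}\cdot\rho\Delta)$-coreset bound, using $(1+\epsilon_\textrm{CS})(1+\rho\Delta)=1+\epsilon_\textrm{CS}+\rho\Delta+\epsilon_\textrm{CS}\rho\Delta$ and the analogous lower-bound product.
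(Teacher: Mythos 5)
Your proposal is correct and follows essentially the same route as the paper's proof: the pointwise Lipschitz bound $|\cost(\ybf,\xbf)-\cost(\ybf',\xbf)|\leq\rho\Delta$, converted to a relative bound via $\cost(\ybf,\xbf)\geq 1$, and then aggregated separately for the sum and maximum cost cases (with the same care about which point attains each maximum). No gaps to report.
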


\begin{proof}[Proof of Lemma~\ref{lem: quantization impact}]
For each $\ybf \in \mathcal{Y}$, we know $\dist(\ybf, \ybf') \leq \Delta$. By the $\rho$-lipschitz-continuity of $\cost(\cdot,\xbf)$, we have 
\begin{equation}
    | \cost(\ybf, \xbf) - \cost(\ybf', \xbf) | \leq \rho \Delta.
\end{equation}
Moreover, since $\cost(\ybf, \xbf) \geq 1$, we have
\begin{equation}
    \frac{| \cost(\ybf, \xbf) - \cost(\ybf', \xbf) |}{\cost(\ybf, \xbf)} \leq \rho \Delta,
\end{equation}
and thus
\begin{align}
&\hspace{-1em}    (1-\rho\Delta)\cost(\ybf, \xbf) \leq \cost(\ybf', \xbf) \leq (1+\rho\Delta)\cost(\ybf, \xbf). \label{eq:bound, quantize coreset}
\end{align}
If $\cost(\mathcal{Y}, \xbf) = \sum_{\ybf\in \mathcal{Y}} \cost(\ybf, \xbf)$, then treating $\mathcal{Y}'$ as a coreset with unit weights, its cost is $\cost(\mathcal{Y}', \xbf) = \sum_{\ybf'\in \mathcal{Y}'} \cost(\ybf', \xbf)$. Summing (\ref{eq:bound, quantize coreset}) over all $\ybf \in \mathcal{Y}$ (or $\ybf'\in \mathcal{Y}'$), we have
\begin{equation}\label{eq:overall bound, quantize coreset}
    (1 - \rho \Delta) \cost(\mathcal{Y}, \xbf) \leq \cost(\mathcal{Y}', \xbf) \leq (1 + \rho \Delta) \cost(\mathcal{Y}, \xbf). 
\end{equation}
If $\cost(\mathcal{Y}, \xbf) = \max_{\ybf \in \mathcal{Y}} \cost(\ybf, \xbf)$, then the cost of $\mathcal{Y}'$ is $\cost(\mathcal{Y}', \xbf) = \max_{\ybf'\in \mathcal{Y}'} \cost(\ybf', \xbf)$. Suppose that the maximum is achieved at $\ybf_1$ for $\cost(\mathcal{Y}, \xbf)$, and $\ybf'_2$ for $\cost(\mathcal{Y}', \xbf)$. Based on (\ref{eq:bound, quantize coreset}), we have 
\begin{subequations}
\begin{align}
& (1-\rho\Delta)\cost(\ybf_1, \xbf) \leq \cost(\ybf'_1, \xbf) \leq \cost(\ybf'_2, \xbf) \\
& \leq (1+\rho\Delta)\cost(\ybf_2, \xbf)    \leq (1+\rho\Delta)\cost(\ybf_1, \xbf)
\end{align}
\end{subequations}
which again leads to (\ref{eq:overall bound, quantize coreset}) as $\cost(\mathcal{Y}, \xbf) = \cost(\ybf_1, \xbf)$ and $\cost(\mathcal{Y}', \xbf) = \cost(\ybf'_2, \xbf)$.
\end{proof}

\begin{proof}[Proof of Theorem~\ref{thm: coreset + quanization}]
By Lemma~\ref{lem: quantization impact} and Definition~\ref{def:epsilon-coreset}, we have that 
$\forall \xbf \in \mathcal{X}$, 
\begin{align}
%\begin{split}
    & (1\hspace{-.15em}-\hspace{-.15em}\epsilon_\textrm{CS})(1\hspace{-.15em}-\hspace{-.15em}\rho\Delta)\cost(\mathcal{Y},\xbf) \hspace{-.15em} \leq \hspace{-.15em} (1\hspace{-.15em}-\hspace{-.15em}\rho\Delta)\cost(\mathcal{S}, \xbf) \hspace{-.15em} \nonumber \\  
    & \leq \hspace{-.15em}  \cost(\mathcal{S}', \xbf) \leq (1+\rho\Delta) \cost(\mathcal{S}, \xbf) \nonumber\\
    & \leq (1+\epsilon_\textrm{CS})(1+\rho\Delta) \cost(\mathcal{Y}, \xbf), \label{eq: cs+qt error bound}
%\end{split}
\end{align}
which yields the result.
\end{proof}

\subsection{Configuration Optimization}

\subsubsection{Abstract Formulation}

Our objective is to minimize the ML error under bounded communication costs, through the joint configuration of coreset construction and quantization. Given a $n$-point dataset in $\mathbb{R}^d$ and a communication budget of $B$, we aim to find a quantized coreset $\mathcal{S}$ with $k$ points and a precision of $b$ bits per attribute, that can be represented by no more than $B$ bits.  
%Denote the error bound of $S$ by $\epsilon(k,b)$, i.e., $S$ is an $\epsilon(k, b)$-coreset. 
Our goal is to \emph{Minimize the Error under a given Communication Budget (MECB)}, formulated as
\begin{subequations}\label{prob: general}
\begin{align}
\min_{b,k} \quad & \epsilon_\textrm{CS}(k)+\rho\Delta(b)+\epsilon_\textrm{CS}(k)\cdot\rho\Delta(b) \label{MECB:obj}\\ 
\textrm{s.t.} \quad &  b \cdot k \cdot d \leq  B,\\
           & b,\:  k \in \mathbb{Z}^+,
\end{align} 
\end{subequations}
where $\epsilon_\textrm{CS}(k)$ represents the ML error of a $k$-point coreset constructed by the given coreset construction algorithm, and $\Delta(b)$ is the maximum quantization error of $b$-bit quantization by the given quantizer. We want to find the optimal values of $k$ and $b$ to minimize the error bound (\ref{MECB:obj}) according to Theorem~\ref{thm: coreset + quanization}, under the given budget $B$. Note that our focus is on finding the optimal configuration of known CS/QT algorithms instead of developing new algorithms. 
%any ML problem with a $\rho$-Lipschitz-continuous cost function. However, there exist two key challenges in the practical design of these two parameters. %\ting{the two are basically the same, combine them}
%\begin{description}
%    \item[Challenge 1]
%    How to explicitly bound the error $\epsilon(k,b)$ as a function of $k$ and $b$?
%    \item[Challenge 2]
%    How to effectively and efficiently solve for the optimal $(k, b)$?
%\end{description}

\subsubsection{Concrete Formulation}

We now concretely formulate and solve an instance of MECB for two practical CS/QT algorithms. 
%Addressing the first challenge requires an analysis of the ML error of the coreset construction and the quantization algorithms at hand. 
%Suppose that each attribute in the original dataset is in a floating point representation with an $m_e$-bit exponent. %, and has been normalized to $[0,1]$ 
Suppose that the CS operation is by the $k$-means based \emph{robust coreset construction (RCC)} algorithm in \cite{coreset19:report}, which is proven to yield a $\rho\sqrt{f(k)}$-coreset for all ML tasks with $\rho$-Lipschitz-continuous cost functions, where $f(k) := \opt(k) - \opt(2k)$ is the difference between the $k$-means and the $2k$-means costs. Moreover, suppose that the QT operation is by the rounding-based quantizer defined in Section~\ref{subsec: quantizer analysis}, which has a maximum quantization error of $\Delta(b) := 2^{-(b-1-m_e)}\max_{\ybf_i\in \mathcal{Y}}\|\ybf_i\|$ to generate a $b$-bit quantization with $s=b-1-m_e$ significant digits according to (\ref{eq:Delta simplified}).
Then, by Theorem~\ref{thm: coreset + quanization}, the MECB problem in this case becomes:
\begin{subequations}\label{prob: concrete}
\begin{align}
\min_{b,k} \quad & \rho \sqrt{f(k)} + \rho \Delta(b) + \rho^2 \Delta(b) \sqrt{f(k)} \label{MD: obj}\\ 
\textrm{s.t.} \quad & b \cdot k \cdot d \leq  B, \label{HM-RCC:B}\\
           %& f(k) = \opt(k)-\opt(2k),\label{HM-RCC:f}\\
           %& \Delta(b) = 2^{-(b-1-m_e)}\sqrt{d}, \label{HM-RCC:Delta}\\
           & b, k \in \mathbb{Z}^+. 
\end{align} 
\end{subequations}
%where $f(k)$ and $\Delta(b)$ are dependent variables used to link the main decision variables $k$ and $b$ to the objective function.
%\shiqiang{We note that it is NP-hard to compute the $k$-means costs $\opt(k)$ and $\opt(2k)$ in (\ref{HM-RCC:f}).}
%constraints (\ref{HM-RCC:f}) and (\ref{HM-RCC:Delta}) are used to link the decision variables $b$ and $k$ to the objective function. 
% , where $f$ represents the gap between $k$-clustering and $2k$-clustering costs, and $\Delta$ represents the maximum quantization error. The optimal solution $(k^*, b^*)$ to this optimization will provide the optimal configuration of the cardinality $k$ and the precision $b$ of the output coreset. 

\subsubsection{Straightforward Solution} \label{subsubsec:baselineEM}

In (\ref{prob: concrete}), only $b$ (or $k$) is the free decision variable. Thus, a straightforward way to solve (\ref{prob: concrete}) is to evaluate the objective function\footnote{ We note that it is NP-hard to compute the $k$-means costs $\opt(k)$ and $\opt(2k)$ in evaluating $f(k)$. Nevertheless, one can compute an approximation using existing $k$-means heuristics, e.g., \cite{Arthur07SODA}. } (\ref{MD: obj}) for each possible value of $b$ and then select $b^*$ that minimizes the objective value. We refer to this solution as the \emph{EMpirical approach (EM)} later in the paper. However, this approach is computationally expensive for large datasets, as evaluating $f(k)$ requires solving $k$-means problems for large values of $k$. 
To address this challenge, we will develop efficient heuristic algorithms for approximately solving (\ref{prob: concrete}) in the following section by identifying proxies of the objective function that can be evaluated efficiently.  % Ting: modified from "relaxed error bounds" as the new objective may not be an upper bound of the original. 
%relaxing the error bound and developing effective algorithms corresponding to the relaxed error bounds.

\section{Efficient Algorithms for MECB %Heuristic $k$-Means-based Robust Coreset Construction
}\label{sec: Two Effective Heuristic Solutions}
In this section, we propose two algorithms to effectively and efficiently solve 
the concrete MECB problem given in (\ref{prob: concrete}). % in (\ref{prob: general}). 

\subsection{Eigenvalue Decomposition Based Algorithm for MECB (EVD-MECB)}\label{subsec: Eigenvalue Decomposition Based Heuristic algorithm}

\subsubsection{Re-formulating the Optimization Problem}

This algorithm is motivated by the following bound derived in \cite{ding2004k}. 

\begin{theorem}[Bound for $k$-means costs \!\!\cite{ding2004k}]
The optimal $k$-means cost for $\mathcal{Y}$ is bounded by
\begin{align}
\opt(k) \geq n \widebar{\ybf^2} - \sum_{i = 1}^{k-1} \lambda_i,    
\label{eq:EVDLowerBound}
\end{align}
where $n \widebar{\ybf^2} := \sum_{i=1}^n \ybf_i^T\ybf_i$ is the total variance and $\lambda_i$ is the $i$-th principal eigenvalue of the covariance matrix $\mathbf{Y}\mathbf{Y}^T$.
\end{theorem}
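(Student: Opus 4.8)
The plan is to relate the $k$-means cost of $\mathcal{Y}$ to the trace of a suitably reduced covariance matrix, and then invoke a classical eigenvalue inequality (a Ky Fan type bound). First I would write the optimal $k$-means cost in its standard form: if $C_1,\ldots,C_k$ is the optimal partition of the data indices into $k$ clusters with centroids $\mu_1,\ldots,\mu_k$, then $\opt(k) = \sum_{\ell=1}^{k}\sum_{i\in C_\ell}\|\ybf_i - \mu_\ell\|^2$. Using the identity $\sum_{i\in C_\ell}\|\ybf_i - \mu_\ell\|^2 = \sum_{i\in C_\ell}\|\ybf_i\|^2 - |C_\ell|\,\|\mu_\ell\|^2$ and summing over $\ell$, this becomes $\opt(k) = n\widebar{\ybf^2} - \sum_{\ell=1}^{k}|C_\ell|\,\|\mu_\ell\|^2$, where I have used $n\widebar{\ybf^2}=\sum_i \ybf_i^T\ybf_i$. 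So the lower bound on $\opt(k)$ is equivalent to the upper bound $\sum_{\ell=1}^{k}|C_\ell|\,\|\mu_\ell\|^2 \le \sum_{i=1}^{k-1}\lambda_i$ on the ``between-cluster'' scatter, for any partition into $k$ parts.

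The key step is to recognize $\sum_{\ell=1}^{k}|C_\ell|\,\|\mu_\ell\|^2$ as a trace of $\mathbf{Y}\mathbf{Y}^T$ restricted to a subspace of dimension at most $k-1$. Encode the partition by the normalized indicator matrix $H \in \mathbb{R}^{n\times k}$ whose $\ell$-th column has entries $1/\sqrt{|C_\ell|}$ on the indices in $C_\ell$ and $0$ elsewhere; then $H^T H = I_k$, and a direct computation gives $\sum_{\ell=1}^{k}|C_\ell|\,\|\mu_\ell\|^2 = \mathrm{tr}(H^T \mathbf{Y}^T \mathbf{Y} H)$. Since $\mathbf{Y}$ has been centered (each attribute has zero mean), the all-ones-type direction corresponding to $\mu(\mathcal{Y})=0$ contributes nothing: more precisely, the vector $v_0=(\sqrt{|C_1|},\ldots,\sqrt{|C_k|})^T/\sqrt n$ satisfies $Hv_0 = \frac{1}{\sqrt n}\mathbf{1}_n$, and $\mathbf{Y}\mathbf{1}_n = n\,\mu(\mathcal{Y}) = 0$, so the column space of $\mathbf{Y}H$ has dimension at most $k-1$. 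Therefore $\mathrm{tr}(H^T\mathbf{Y}^T\mathbf{Y}H)$ equals the trace of $\mathbf{Y}\mathbf{Y}^T$ compressed onto a $(k-1)$-dimensional subspace, which by the Ky Fan / Courant–Fischer maximum principle is at most the sum $\lambda_1+\cdots+\lambda_{k-1}$ of the top $k-1$ eigenvalues of $\mathbf{Y}\mathbf{Y}^T$. Combining with the identity for $\opt(k)$ yields the claimed inequality.

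The main obstacle I anticipate is handling the centering correctly so as to squeeze the bound from $\sum_{i=1}^{k}\lambda_i$ (the naive Ky Fan estimate for a rank-$k$ projection) down to $\sum_{i=1}^{k-1}\lambda_i$; this requires carefully exhibiting the one-dimensional direction in the range of $H$ that is annihilated by $\mathbf{Y}$ and arguing that the effective rank of the compression drops to $k-1$. A clean way to make this rigorous is to pick an orthonormal basis of $\mathrm{span}(H)$ whose first vector is $v_0$, extend $\mathbf{Y}H$ accordingly, and observe that the first column becomes zero, so only $k-1$ nonzero columns remain; then apply Ky Fan to those $k-1$ orthonormal directions. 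Everything else — the centroid identity, $H^TH=I$, and the trace computation — is routine linear algebra.
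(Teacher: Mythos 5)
Your argument is correct. The paper itself gives no proof of this statement --- it is imported verbatim from the cited reference \cite{ding2004k} --- and your derivation (centroid decomposition $\opt(k)=n\widebar{\ybf^2}-\sum_\ell |C_\ell|\|\mu_\ell\|^2$, the normalized indicator matrix $H$ with $H^TH=I_k$, the observation that centering kills the direction $Hv_0=\tfrac{1}{\sqrt n}\mathbf{1}_n$ so the effective rank drops to $k-1$, and then Ky Fan) is essentially the argument of that reference. The only point worth tightening is that Ky Fan is most naturally applied to the $n\times n$ Gram matrix $\mathbf{Y}^T\mathbf{Y}$ (whose orthonormal test vectors $Hv_1,\ldots,Hv_{k-1}$ live in $\mathbb{R}^n$), after which you pass to $\mathbf{Y}\mathbf{Y}^T$ via equality of the nonzero spectra; your closing ``clean way to make this rigorous'' paragraph already does exactly this, so no gap remains.
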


\begin{algorithm}[tb]
\caption{EVD-MECB}
\label{Alg:heuristic kmeans coreset}
\small
\SetKwInOut{Input}{input}\SetKwInOut{Output}{output}
\Input{A dataset $\mathcal{Y}$, Lipschitz constant $\rho$ for the targeted ML task, communication budget $B$. }
\Output{Optimal $(k^*, b^*)$ to configure a quantized $\epsilon$-coreset $\mathcal{S}'$ for $\mathcal{Y}$ within budget $B$.}
%An $\epsilon$-coreset $S'$ for $Y$ within budget $B$ w.r.t. a cost function satisfying Condition~\ref{cond1} and Condition~\ref{cond2} in Theorem~\ref{thm: coreset + quanization}. }
\BlankLine
Calculate eigenvalues $\{\lambda_i\}_{i=1}^d$ for $\mathbf{Y} \mathbf{Y}^T$\;
$\Lambda_j \leftarrow \sum_{i=1}^j \lambda_i, \forall 1 \leq j \leq d$\;
\ForEach{$b=[1+m_e,\: 2 + m_e, \ldots, b_0]$}
{
    $k \leftarrow \Bigl\lfloor B/d/b \Bigr\rfloor$\;
    $f(k) \leftarrow \Lambda_{2k-1} - \Lambda_{k-1}$\;
    $\Delta(b) \leftarrow 2^{-(b-1-m_e)}\max_{\ybf_i\in \mathcal{Y}}\|\ybf_i\|$\;
    $\epsilon(k, b) \leftarrow \rho \cdot \sqrt{f(k)} + \rho \cdot \Delta(b) + \rho^2 \cdot \Delta(b) \cdot \sqrt{f(k)} $\;
}
%\If{$\epsilon(k, b)$ is the smallest among all calculated $\epsilon$}
%{
$(k^*, b^*) \leftarrow \argmin{\epsilon(k,b) }$\;
%    break\;
%}
%Do $k^*$-clustering, then do rounding to the obtained $k^*$-clustering centers to $b^*$ bits left, set $S \leftarrow $ this dataset\; 
%Each $s \in S$ has its weight $w_s \leftarrow $ the number of original data points whose nearest center is $s$\;
%$W \leftarrow \{w_s\}$\;
%Construct a $k^*$-point \textit{RCC} coreset $S$, with weights $W$\;
%Rounding each number in $S$ with $b^*$ bits to construct $S'$\; 
\textbf{return} $(k^*, b^*)$\;
% \vspace{-.05em}
\end{algorithm}
\normalsize
\setlength{\textfloatsep}{.5em}% Remove \textfloatsep

% \hl{Since the $k$ clustering cost is hard to calculate, the heuristic technique we propose is to use these lower bounds as heuristic approximation.} 
We use the bound in (\ref{eq:EVDLowerBound}) as an approximation of $k$-means cost that is much faster to evaluate than the exact $k$-means cost.
Replacing $\opt(k)$ by this bound, we obtain an approximation of~(\ref{prob: concrete}), where $f(k)$ is approximated by
\begin{align}
f(k) \approx n \widebar{\ybf^2} - \sum_{i = 1}^{k-1} \lambda_i - (n \widebar{\ybf^2} - \sum_{i = 1}^{2k-1} \lambda_i) = \sum_{i = k}^{2k-1} \lambda_i. \label{eq:EVD-new constraint}
\end{align}

\subsubsection{EVD-MECB Algorithm}\label{subsec: Heuristic for HM-RCC}
%One baseline approach to solve \ref{MD: obj} is to empirically implement $k$-means as in RCC under varying $k, b$ and then select $k^*, b^*$ that results in the minimum $\epsilon(k,b)$. However, this baseline approach is computationally expensive, which is thus infeasible in practice. Next, we investigate how to efficiently solve \ref{MD: obj} by leveraging a relaxation of the $k$-means cost.

The righthand side of (\ref{eq:EVD-new constraint}) is easier to calculate than the exact value of $f(k)$, as we can compute the eigenvalue decomposition once \cite{stewart2002krylov}, and use the results to evaluate $\sum_{i = k}^{2k-1} \lambda_i$ for all possible values of $k$. As each number in $\mathcal{Y}$ has $b_0 - 1 - m_e$ significant digits, the number of feasible values for $b$ (and hence $k$) is $b_0 - 1 - m_e$. By enumerating all the feasible values, we can easily find the optimal solution $(k^*, b^*)$ to this approximation of (\ref{prob: concrete}). We summarize the algorithm in Algorithm~\ref{Alg:heuristic kmeans coreset}.

\begin{algorithm}[tb]
\caption{$k$-center cost computation }%\ting{remove [45] as the algorithm is modified} \cite{kleinberg2006algorithm}}
%\hl{\cite{kleinberg2006algorithm}}}
\label{Alg:greedy k-center}
\small
\SetKwInOut{Input}{input}\SetKwInOut{Output}{output}
\Input{A dataset $\mathcal{Y}$, the maximum number of centers $K$. }
\Output{The costs $(g(k))_{k=1}^K$ for greedy $k$-center clustering for $k=1,\ldots,K$.}% $\opt_{\infty}(Y,k)$ for $k$-center clustering }
\BlankLine
$\mathcal{G} \leftarrow \emptyset$\;
\ForEach{$\ybf \in \mathcal{Y}$}
{
    $d(y) \leftarrow \infty$\;
}
\While{$|\mathcal{G}| < K$}
{
    find $\ybf \leftarrow \argmax_{\mathbf{q} \in \mathcal{Y}}{d(\mathbf{q})}$\; \label{kcenter:1} % such that $d(y)$ is maximized\;
    $\mathcal{G} \leftarrow \mathcal{G} \cup \{\ybf\}$\; \label{kcenter:2} %add $p$ to $G$\; 
    \ForEach{$j = 1,\ldots,|\mathcal{Y}|$}
    {
        $d(\ybf_j) \leftarrow \min(d(\ybf_j),\: \dist(\ybf_j, \ybf))$\; \label{kcenter:3}
    }
    $g(|\mathcal{G}|) \leftarrow \max_{\ybf_j\in \mathcal{Y}} d(\ybf_j)$\; \label{kcenter:4} %$\opt_{\infty}(Y,k) \leftarrow \max_{y_j\in Y} d(y_j)$;
}
\textbf{return} $(g(k))_{k=1}^K$\; %$(G, \opt_{\infty}(Y,k))$\;
% \vspace{-.05em}
\end{algorithm}
\normalsize
\setlength{\textfloatsep}{.5em}% Remove \textfloatsep

\subsection{Max-distance Based Algorithm for MECB (MD-MECB)}\label{subsec: Max-distance based Heuristic algorithm}
\subsubsection{Re-formulating the Optimization Problem}\label{Error bound analysis for max-distance-based CS+QT}

Alternatively, we can bound the ML error based on the maximum distance between each data point and its corresponding point in the coreset. Let $f_2(k) := \max_{i=1,\ldots,k}\max_{\ybf\in \mathcal{Y}_i} \dist(\ybf, \mu(\mathcal{Y}_i))$ be the maximum distance between any data point and its nearest $k$-means center, where $\{\mu(\mathcal{Y}_i)\}_{i=1}^k$ are the $k$-means clusters. Then, a similar proof as that of Lemma~\ref{lem: quantization impact} implies the following.

\begin{lemma}\label{lem:distance-based coreset}
% \shiqiang{(Probably this should be a corollary of Lemma~\ref{lem: quantization impact} instead of a lemma. A lemma is usually used to prove another theorem.)} \hl{Lem V.1 is not corollary of Lem IV.1 since Lem V.1 talks about CS's impact on $\epsilon$ and Lem IV.1 talks about QT's impact on $\epsilon$. And both of them is going to give the $\epsilon$ for CS+QT as stated in Theorem IV.1. But we just said ``Based on Theorem IV.1" instead of giving the spcecific theorem which Lem V.1 is implying. } \shiqiang{Not sure if I understand your explanation... Is this Lemma V.1 used to prove Theorem IV.1?}
The centers of the optimal $k$-means clustering of $\mathcal{Y}$, each weighted by the number of points in its cluster,  provide a $\rho f_2(k)$-coreset for $\mathcal{Y}$ w.r.t. any cost function satisfying the conditions in Theorem~\ref{thm: coreset + quanization}.
\end{lemma}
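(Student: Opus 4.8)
The plan is to mirror the proof of Lemma~\ref{lem: quantization impact}, treating the map that sends each data point to the center of its $k$-means cluster as a (many-to-one) ``quantization'' whose per-point displacement is bounded by $f_2(k)$. Write $\mathcal{Y}_1,\ldots,\mathcal{Y}_k$ for the optimal $k$-means clusters and $\mathcal{S}=\{\mu(\mathcal{Y}_i)\}_{i=1}^k$ with weights $u_i=|\mathcal{Y}_i|$ (discarding any empty clusters, which carry zero weight and do not affect a maximum). For any $\ybf\in\mathcal{Y}_i$ the definition of $f_2(k)$ gives $\dist(\ybf,\mu(\mathcal{Y}_i))\leq f_2(k)$, so by the $\rho$-Lipschitz-continuity of $\cost(\cdot,\xbf)$ and $\cost(\ybf,\xbf)\geq 1$ we obtain, exactly as in~(\ref{eq:bound, quantize coreset}),
\begin{equation}
(1-\rho f_2(k))\cost(\ybf,\xbf) \leq \cost(\mu(\mathcal{Y}_i),\xbf) \leq (1+\rho f_2(k))\cost(\ybf,\xbf)
\end{equation}
for every $\xbf\in\mathcal{X}$.

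For a sum cost I would expand $\cost(\mathcal{S},\xbf)=\sum_{i=1}^k u_i\cost(\mu(\mathcal{Y}_i),\xbf)=\sum_{i=1}^k\sum_{\ybf\in\mathcal{Y}_i}\cost(\mu(\mathcal{Y}_i),\xbf)$ and compare it term by term against $\cost(\mathcal{Y},\xbf)=\sum_{i=1}^k\sum_{\ybf\in\mathcal{Y}_i}\cost(\ybf,\xbf)$; summing the per-point inequality above over all points in all clusters yields $(1-\rho f_2(k))\cost(\mathcal{Y},\xbf)\leq\cost(\mathcal{S},\xbf)\leq(1+\rho f_2(k))\cost(\mathcal{Y},\xbf)$, i.e. the $\epsilon_\textrm{CS}$-coreset inequality of Definition~\ref{def:epsilon-coreset} with $\epsilon_\textrm{CS}=\rho f_2(k)$. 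For a maximum cost I would repeat the chaining step from Lemma~\ref{lem: quantization impact}: let $\ybf_1$ attain $\cost(\mathcal{Y},\xbf)$ with $\ybf_1\in\mathcal{Y}_{i_1}$, and let $\mu(\mathcal{Y}_{i_2})$ attain $\cost(\mathcal{S},\xbf)=\max_i\cost(\mu(\mathcal{Y}_i),\xbf)$, picking any $\ybf_2\in\mathcal{Y}_{i_2}$. The lower bound follows from $\cost(\mathcal{S},\xbf)\geq\cost(\mu(\mathcal{Y}_{i_1}),\xbf)\geq(1-\rho f_2(k))\cost(\ybf_1,\xbf)$, and the upper bound from $\cost(\mathcal{S},\xbf)=\cost(\mu(\mathcal{Y}_{i_2}),\xbf)\leq(1+\rho f_2(k))\cost(\ybf_2,\xbf)\leq(1+\rho f_2(k))\cost(\ybf_1,\xbf)$, using that $\ybf_1$ is the global maximizer over $\mathcal{Y}$ and that $\cost(\mathcal{Y},\xbf)=\cost(\ybf_1,\xbf)$.

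I do not anticipate a genuinely hard step; the only point requiring care is the bookkeeping in the sum-cost case. In Lemma~\ref{lem: quantization impact} the ``coreset'' had unit weights and the correspondence $\ybf\mapsto\ybf'$ was a bijection, whereas here several original points collapse to a single center, and it is precisely the weight $u_i=|\mathcal{Y}_i|$ that rewrites $\cost(\mathcal{S},\xbf)$ as a sum over the original points and makes the term-by-term comparison valid. Apart from this substitution of $f_2(k)$ for $\Delta$ and of ``cluster center'' for ``quantized point,'' the argument is identical to that of Lemma~\ref{lem: quantization impact}.
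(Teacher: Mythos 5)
Your proposal is correct and follows exactly the route the paper intends, which states only that ``a similar proof as that of Lemma~\ref{lem: quantization impact} implies'' the result; you have simply written out that analogy in full, with the one genuinely new bookkeeping point (the weights $u_i=|\mathcal{Y}_i|$ turning the weighted sum over centers back into a sum over original points) handled correctly. No gaps.
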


This lemma provides an alternative error bound for the RCC algorithm in \cite{coreset19:report}, which constructs the coreset as in Lemma~\ref{lem:distance-based coreset}. Using $\epsilon_\textrm{CS} = \rho f_2(k)$, if we apply the rounding-based quantization after RCC, we can apply Theorem~\ref{thm: coreset + quanization} to obtain an alternative error bound for the resulting quantized coreset, which is $\rho  f_{2}(k) + \rho  \Delta(b) + \rho^2  f_{2}(k)  \Delta(b)$.
% Specifically, we found that $f_2(k)$ will help us calculate the coreset error. 
We note that minimizing $f_2(k)$ is exactly the objective of \emph{$k$-center clustering} \cite{lim2005k, khuller2000capacitated, khuller2000fault}. Hence, we would like to use the optimal $k$-center cost, denoted by $\opt_\infty(k)$, as a heuristic to calculate $f_2(k)$. By using this alternative error bound and approximating $f_2(k) \approx \opt_\infty(k)$, we can reformulate the MECB problem as follows:
\begin{subequations}\label{eq:MD-MECB}
\begin{align}
\min_{b,k} \quad &  \rho \cdot  \opt_\infty(k) + \rho  \Delta(b) + \rho^2 \cdot  \opt_\infty(k)  \Delta(b) \label{EVD: obj}\\ 
\textrm{s.t.} \quad &  b \cdot k \cdot d \leq  B, \label{framework:B}\\
           %& f_{\infty}(k) = \opt_{\infty}(k), \label{framework:f}\\
           %& \Delta(b) = 2^{-(b-1-m_e)}\sqrt{d}, \label{framework:Delta}\\
           & b, k \in \mathbb{Z}^+,
\end{align} 
\end{subequations}
where $\Delta(b)$ is defined as in (\ref{prob: concrete}).

\subsubsection{MD-MECB Algorithm}\label{subsec: Greedy algorithm for GM-RCC}
The re-formulation (\ref{eq:MD-MECB}) allows us to leverage algorithms for $k$-center clustering to efficiently evaluate $\opt_\infty(k)$. 
Although $k$-center clustering is a NP-hard problem \cite{hochbaum1985best}, a number of efficient heuristics have been proposed. In particular, it has been proved \cite{hochbaum1985best} that the best possible approximation for $k$-center clustering is $2$-approximation, achieved by a simple greedy algorithm \cite{kleinberg2006algorithm} that keeps adding the point farthest from the existing centers to the set of centers until $k$ centers are selected. 
The beauty of this algorithm is that we can modify it to compute the $k$-center clustering costs for all possible values of $k$ in one pass, as shown in Algorithm~\ref{Alg:greedy k-center}. Specifically, after adding each center (lines~\ref{kcenter:1}--\ref{kcenter:2}) and updating the distance from each point to the nearest center (line~\ref{kcenter:3}), we record the clustering cost (line~\ref{kcenter:4}). As the greedy algorithm achieves $2$-approximation \cite{hochbaum1985best}, the returned costs satisfy $\opt_\infty(k) \leq g(k) \leq 2 \opt_\infty(k)$, where $g(\cdot )$ is defined in line~\ref{kcenter:4}. 
Based on this algorithm, the MD-MECB algorithm, shown in Algorithm~\ref{Alg:greedy max-distance coreset}, solves an approximation of (\ref{eq:MD-MECB}) with $\opt_\infty(k)$ approximated by $g(k)$. %, by enumerating all possible solutions. 

\begin{algorithm}[tb]
\caption{MD-MECB}
\label{Alg:greedy max-distance coreset}
\small
\SetKwInOut{Input}{input}\SetKwInOut{Output}{output}
\Input{A dataset $\mathcal{Y}$, Lipschitz constant $\rho$ for the targeted ML task; communication budget $B$. }
\Output{Optimal $(k^*, b^*)$ to configure a quantized $\epsilon$-coreset $S'$ for $Y$ within budget $B$.}
%An $\epsilon$-coreset $S'$ for $Y$ w.r.t. a cost function satisfying Condition~\ref{cond1} and Condition~\ref{cond2} in Theorem~\ref{thm: coreset + quanization}.}
%\BlankLine
Run Algorithm \ref{Alg:greedy k-center} with input  $\mathcal{Y}$ and $K\!=\!\min\{\bigl\lfloor \frac{B}{d \cdot (1+m_e)} \bigr\rfloor, n\}$\;
\ForEach{$b=[1+m_e,\: 2+m_e, \ldots, b_0]$}
{$k \leftarrow \Bigl\lfloor B/d/b \Bigr\rfloor$\;
$\opt_\infty(k) \leftarrow g(k)$ % \opt_{\infty}(Y,k)$ 
by the output of Algorithm~\ref{Alg:greedy k-center}\;
$\Delta(b) \leftarrow 2^{-(b-1-m_e)}\max_{\ybf_i\in \mathcal{Y}}\|\ybf_i\|$\;
$\epsilon(k, b) \leftarrow \rho \cdot \opt_\infty(k) + \rho \Delta(b) + \rho^2 \cdot \opt_\infty(k) \Delta(b)$\;
}
%\If{$\epsilon(k, b)$ is the smallest among all calculated $\epsilon$}{$k^* = k$, $b^* = b$\;break\;}
$(k^*, b^*)\leftarrow \argmin \epsilon(k,b)$\;
%Do $k^*$-means clustering, quantize the obtained $k^*$-means centers to $b^*$ bits, let $S = $ this dataset\; \label{do kmeans after (k, b)}
%Each $s \in S$ has its weight equal to \#original data points whose nearest center is $s$\;
%Construct a $k^*$-point \textit{RCC} coreset $S$, with weights $W$\;
%Rounding each number in $S$ to $b^*$ bits to construct $S'$\; 
\textbf{return} $(k^*, b^*)$\;
% \vspace{-.05em}
\end{algorithm}
\normalsize
\setlength{\textfloatsep}{.5em}% Remove \textfloatsep

\subsection{Discussions} % on EVD-MECB and MD-MECB} Ting: as you also discuss EM here
\label{subsec: Discussion on HM-RCC and GM-RCC}
\subsubsection{Performance Comparison}
The straightforward solution EM (Section~\ref{subsubsec:baselineEM}) directly minimizes the error bound (\ref{MD: obj}) and is thus expected to find the best configuration for CS + QT. In comparison, the two proposed algorithms (EVD-MECB and MD-MECB) only optimize approximations of the error bound. It is difficult to theoretically analyze or compare the ML errors of  these algorithms since the bound may be loose and the approximations may be smaller than the bound. %their approximations are not directly related. 
Instead, we will use empirical evaluations to compare the actual ML errors achieved by these algorithms (see Section~\ref{sec: Performance Evaluation}).

\subsubsection{Complexity Comparison}
In terms of complexity, EM involves executions of the $k$-means algorithm for all $(k,b)$ pairs, which is thus computationally complicated. In comparison, EVD-MECB only requires one eigenvalue decomposition (EVD) and one matrix multiplication,  while MD-MECB only needs to invoke Algorithm~\ref{Alg:greedy k-center} once. 
%implement Algorithm~\ref{Alg:greedy max-distance coreset} for once. 
Therefore, both of them can be implemented efficiently. As EVD can be computed with complexity $O(n^3)$ \cite{demmel2007fast}, EVD-MECB has a complexity of $O(n^3 + d + b_0)$. Since the computational complexity of  Algorithm~\ref{Alg:greedy k-center} is $O(n^2)$ (achieved at $K=n$), MD-MECB has a complexity of $O(n^2 + b_0)$. 
%\shiqiang{(Probably better to give the complexity of Algorithm 2 first, since Algorithm 3 includes Algorithm 2)} 
Hence, MD-MECB is expected to be more efficient than EVD-MECB, which will be further validated in Section~\ref{sec: Performance Evaluation}.
%$O(\min(\bigl\lfloor \frac{B}{d \cdot b_0} \bigr\rfloor, n) \cdot n)
% Complexity NOTES!!!
%$O(N*(n^2 + nb_0d * n^2)) + O((nb_0d)^2) +O(N \log(nb_0d)) = O(N*n^3 b_0d + (nb_0d)^2)$

%$O(N*(n^3 + nb_0d * n^3)) + O((nb_0d)^2) +O(N \log(nb_0d)) = O(N*n^4 b_0d + (nb_0d)^2)$
%\ting{but line 4 in Alg 1 is not constant time}
%\hl{yes, I originally thought we can calculate this line by passing all engenvalues once, see line 2 in Alg 1}
%\ting{looks okay now}

\section{Distributed Setting}\label{sec: distributed setting }
We now describe how to construct a quantized coreset under a global communication budget in distributed settings. Suppose that the data are distributed over $N$ nodes as $\{\mathcal{Y}_1,\ldots,\mathcal{Y}_N\}$. Our goal is to configure the construction of local coresets $\{\mathcal{S}_1,\ldots,\mathcal{S}_N\}$ such that $\bigcup_{i=1}^N \mathcal{S}_i$ can be represented by no more than $B$ bits and is an $\epsilon$-coreset for $\bigcup_{i=1}^N \mathcal{Y}_i$ for the smallest $\epsilon$. Given a distribution of the budget to each node, we can use the algorithms in Section~\ref{sec: Two Effective Heuristic Solutions} to make each $\mathcal{S}_i$ an $\epsilon_i$-coreset of the local dataset $\mathcal{Y}_i$ for the smallest $\epsilon_i$. However, the following questions remain: (1) How is $\epsilon$ related to $\epsilon_i$'s? (2) How can we distribute the global budget $B$ to minimize $\epsilon$?
In this section, we answer these questions by formulating and solving the distributed version of the MECB problem. \looseness=-1

%Similar as in the centralized setting, suppose this quantized coreset is an $\epsilon_u$-coreset for the union of local datasets, then our problem is to minimize $\epsilon_u$ under a global budget $B$ that is shared among all nodes. If we distribute the global budget to each node, each node can construct a local $\epsilon_i$-coreset for its local dataset. This leads to the following two questions: 1. How is $\epsilon_i$ related to $\epsilon_u$? 2. How to distribute the global budget $B$ to each node to minimize $\epsilon_u$? In this section, we answer these questions and solve the CS + QT configuration problem in the distributed setting. 
%\ting{wording changes and changed $\epsilon_u$ to $\epsilon$ (don't think it is worth to introduce a notation that is only used in this paragraph)}

%At the high level, our method partitions the global budget and assigns a local budget to each node. The union of the local coresets constructed by nodes using their local budgets will be our final coreset. In Section \ref{subsec: Problem formulation in distributed setting}, we formulate our problem in the distributed setting, then in Section \ref{subsec: MECBD Algorithm}, we explain our proposed algorithm. 

\subsection{Problem Formulation in Distributed Setting}\label{subsec: Problem formulation in distributed setting}

In the following, we first show that $\epsilon = \max_i\epsilon_i$, and then formulate the MECB problem in the distributed setting. 

\begin{lemma}\label{lem: coreset of merged coreset}
If $\mathcal{C}_1$ and $\mathcal{C}_2$ are $\epsilon_1$-coreset and $\epsilon_2$-coreset for datasets $\mathcal{Y}_1$ and $\mathcal{Y}_2$, respectively, w.r.t. a cost function, then $\mathcal{C}_1 \bigcup \mathcal{C}_2$ is a $\max\{\epsilon_1, \epsilon_2\}$-coreset for $\mathcal{Y}_1 \bigcup \mathcal{Y}_2$ w.r.t. the same cost function. %This lemma holds for both sum cost and max cost.
\end{lemma}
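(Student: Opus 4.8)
The plan is to handle separately the two forms of cost function allowed by Definition~\ref{def:epsilon-coreset} — a sum cost and a maximum cost — and in both cases to set $\epsilon := \max\{\epsilon_1,\epsilon_2\}$. First I would note that, since $\epsilon\ge\epsilon_i$ and all costs are nonnegative, the defining inequalities of an $\epsilon_i$-coreset give the weaker two-sided bound $(1-\epsilon)\cost(\mathcal{Y}_i,\xbf)\le\cost(\mathcal{C}_i,\xbf)\le(1+\epsilon)\cost(\mathcal{Y}_i,\xbf)$ for $i=1,2$ and every $\xbf\in\mathcal{X}$. This reduces the claim to checking that this two-sided bound is preserved when $\cost$ on a union is assembled from $\cost$ on the parts.

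For a sum cost I would use additivity: $\cost(\mathcal{Y}_1\cup\mathcal{Y}_2,\xbf)=\cost(\mathcal{Y}_1,\xbf)+\cost(\mathcal{Y}_2,\xbf)$ and, with the union of weighted sets read as a multiset union (weights of shared points added), $\cost(\mathcal{C}_1\cup\mathcal{C}_2,\xbf)=\cost(\mathcal{C}_1,\xbf)+\cost(\mathcal{C}_2,\xbf)$; adding the two displayed bounds termwise over $i=1,2$ yields $(1-\epsilon)\cost(\mathcal{Y}_1\cup\mathcal{Y}_2,\xbf)\le\cost(\mathcal{C}_1\cup\mathcal{C}_2,\xbf)\le(1+\epsilon)\cost(\mathcal{Y}_1\cup\mathcal{Y}_2,\xbf)$. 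For a maximum cost I would instead use monotonicity of $\max$: from $\cost(\mathcal{C}_i,\xbf)\le(1+\epsilon)\cost(\mathcal{Y}_i,\xbf)$ for $i=1,2$ we get $\max_i\cost(\mathcal{C}_i,\xbf)\le(1+\epsilon)\max_i\cost(\mathcal{Y}_i,\xbf)$, and symmetrically for the lower side; since $\cost(\cdot\cup\cdot,\xbf)$ is exactly this maximum for both the datasets and the coresets, the same two-sided bound follows. As each inequality already holds for all $\xbf\in\mathcal{X}$, $\mathcal{C}_1\cup\mathcal{C}_2$ is an $\epsilon$-coreset for $\mathcal{Y}_1\cup\mathcal{Y}_2$, which is the claim.

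The only point that needs care is bookkeeping rather than analysis: pinning down what ``$\mathcal{C}_1\cup\mathcal{C}_2$'' and ``$\mathcal{Y}_1\cup\mathcal{Y}_2$'' mean when the parts overlap, so that the additivity of a sum cost and the $\max$-decomposition of a maximum cost over the union hold literally. Once the union is taken as a multiset union with weights summed on shared points, the rest is a one-line termwise manipulation, so I do not expect a real obstacle here.
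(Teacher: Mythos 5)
Your proof is correct and follows essentially the same route as the paper's: relax both coreset bounds to $\epsilon=\max\{\epsilon_1,\epsilon_2\}$, then add the two-sided inequalities for a sum cost and take maxima for a maximum cost. Your treatment of the maximum case and the multiset-union bookkeeping is slightly more explicit than the paper's, but the argument is the same.
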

\begin{proof}
We consider both sum and maximum cost functions. Without loss of generality, we assume $\epsilon_2 \geq \epsilon_1$.

\emph{Sum cost:} Given a feasible solution $\xbf$, we consider sum cost as $\cost(\mathcal{Y}, \xbf)= \sum_{\ybf\in \mathcal{Y}} \cost(\ybf, \xbf)$. %Because $\mathcal{C}_1$ and $\mathcal{C}_2$ are $\epsilon_1$-coreset and $\epsilon_2$-coreset for $\mathcal{Y}_1$ and $\mathcal{Y}_2$, respectively, 
According to Definition \ref{def:epsilon-coreset}, we have $ (1 - \epsilon_1) \sum_{\ybf \in \mathcal{Y}_1} \cost(\ybf, \xbf) \leq \sum_{\mathbf{c} \in \mathcal{C}_1} \cost(\mathbf{c}, \xbf) \leq (1 + \epsilon_1) \sum_{\ybf \in \mathcal{Y}_1} \cost(\ybf, \xbf)$ and $ (1 - \epsilon_2) \sum_{\ybf \in \mathcal{Y}_2} \cost(\ybf, \xbf) \leq \sum_{\mathbf{c} \in \mathcal{C}_2} \cost(\mathbf{c}, \xbf) \leq (1 + \epsilon_2) \sum_{\ybf \in \mathcal{Y}_2} \cost(\ybf, \xbf)$. 
Summing up these two equations and noting that $\epsilon_2 \geq \epsilon_1$, we can conclude that $\mathcal{C}_1 \bigcup \mathcal{C}_2$ is an $\epsilon_2$-coreset for $\mathcal{Y}_1 \bigcup \mathcal{Y}_2$.

\emph{Maximum cost:} The proof for maximum cost function is similar as above but taking the maximum instead. %the proof for sum cost function. 
\end{proof}

We can easily extend Lemma VI.1 to multiple nodes to compute the global $\epsilon$ error as: $\epsilon = \max_i\epsilon_i$. Thus the objective of minimizing $\epsilon$ is equivalent to minimizing the largest $\epsilon_i$ for $i\in\{1,\ldots,N\}$. 
% The proof is implied by Lemma \ref{lem: coreset of merged coreset}, which proves this claim in a two-node situation. After that, if we keep merging coresets until we only have one global coreset, then this will be a $(\max_i \epsilon_i)$-coreset for the union of local datasets. 

% Implied by Lemma \ref{lem: coreset of merged coreset}, we are able to qualitatively re-write $\epsilon_u$ as $\max_i \epsilon_i$. We suppose we want
Let $B_i$ denote the local budget for the $i$-th node. Intuitively, the larger the local budget $B_i$, the smaller %the optimal
$\epsilon_i$. % (the objective value of the MECB problem in (\ref{prob: concrete})). 
Therefore, we model $\epsilon_i$ as a non-increasing function w.r.t. the local budget $B_i$, denoted by $\epsilon_i(B_i)$.  
% \shiqiang{I changed to non-bold $\epsilon$ for the function, will use another letter to denote the value later if needed. Bold and non-bold font for $\epsilon$ is not clear anyway and we should not use this to distingush.} 
% To make it clear, we will use $\boldsymbol{\epsilon}_i$ to denote this function and use $\epsilon_i$ to denote the output value of this function. 
\begin{figure}[t!]
   \centerline{\includegraphics[width=.5\linewidth]{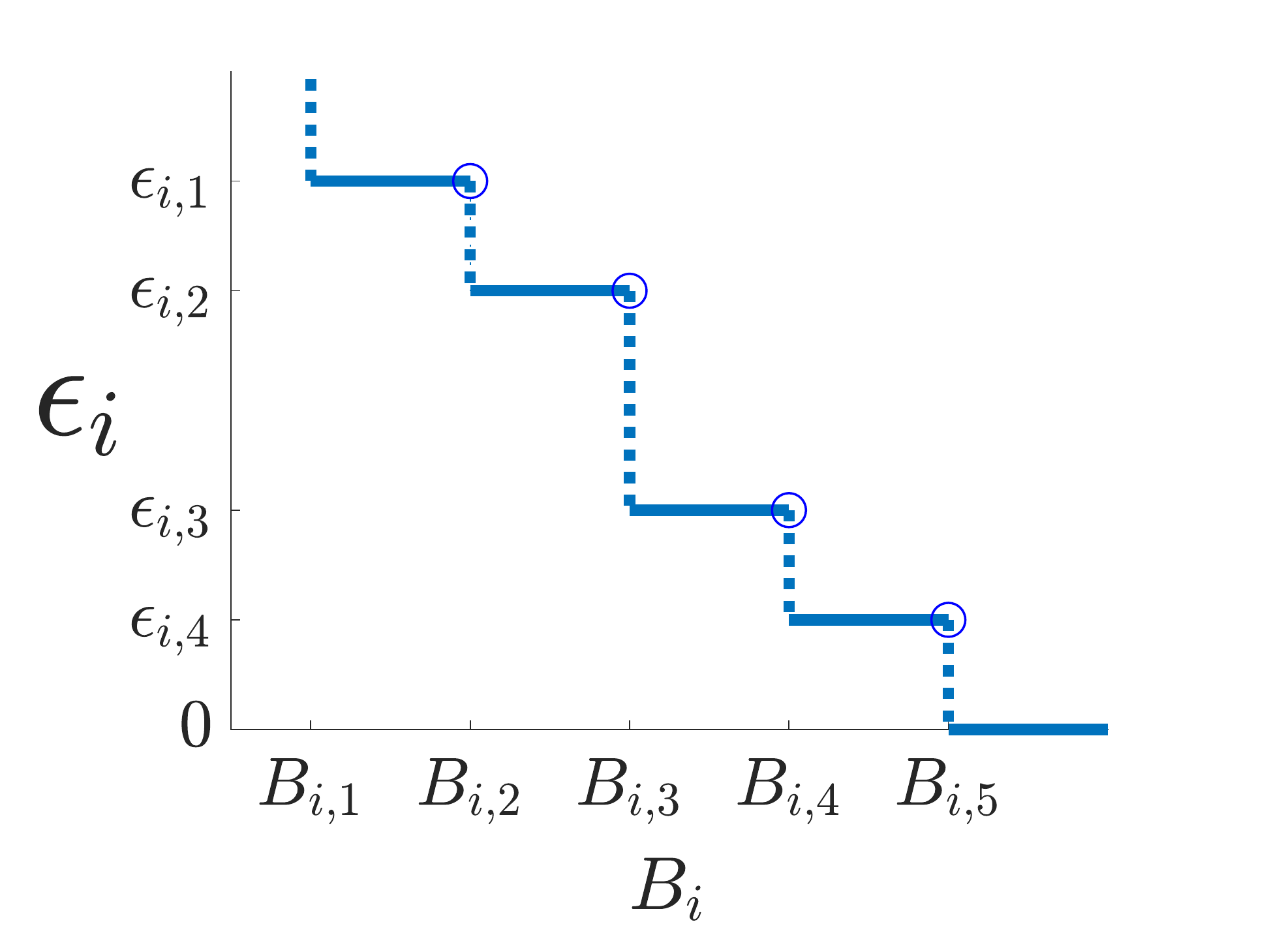}}
     \vspace{-.5em}
    \caption{Illustration of step objective functions. }
    \label{fig:step_func}
     \vspace{-.5em}
\end{figure}

Then, we can formulate the MECB problem in the distributed setting (MECBD) as follows:
\begin{subequations}\label{eq:distributed formulation}
\begin{align}
\min\quad &{\max_{i\in \{1,\ldots,N\}}}  \epsilon_i(B_i) \label{distributed: obj}\\ 
\textrm{s.t.} \quad & \sum_{i=1}^N B_i \leq B. \label{distributed: constraint}
\end{align} 
\end{subequations}
Note that to compute $\epsilon_i(B_i)$ for a given $B_i$, we need to solve an instance of the MECB problem in (\ref{prob: concrete}) for dataset $\mathcal{Y}_i$ and budget $B_i$.

\begin{algorithm}[tb]
\caption{OBA-MECBD}% \ting{name not defined; can we just call it MECBD?}\hl{I added a paragraph to name OBA-MECBD...}\ting{personally I don't like the name as MECBD is about budget allocation, so what is special about this optimal budget allocation (OBA and MECBD are duplciates)? let's use it anyway due to deadline}}
\label{Alg:OBA-RCC}
\small
\SetKwInOut{Input}{input}\SetKwInOut{Output}{output}
\SetKwFor{EachNode}{each node $i = 1, ..., N$:}{}{}
\SetKwFor{TheServer}{the server:}{}{}
\Input{Distributed datasets $\{\mathcal{Y}_i\}_{i=1}^N$, Lipschitz constant $\rho$, communication budget $B$. }
\Output{Optimal $\{(k^*_i, b^*_i)\}_{i=1}^N$ to configure the construction of local quantized coresets within a global budget $B$.}
\BlankLine
%\For{$B = [B_0, \max \|\mathcal{Y}_i\|\cdot b_0\cdot d]$}
%{$\mathcal{F}(B) \leftarrow \{b\}$ s.t. $B$ is divisible by $(1+m_e+b)$\;}

\nonl \EachNode{}
{
    $B_{0} \leftarrow 1\cdot (1+m_e)\cdot d$\; \label{line: OBA-step1-start}
    compute $\epsilon_i(B_0)$ by MD-MECB or EVD-MECB\; \label{line: OBA-subroutine1} %(\ref{MD: obj}) or (\ref{EVD: obj})\; 
    $\mathcal{E}_i \leftarrow \{(B_0, \epsilon_i(B_0))\}$\;
    %$\mathbf{E}_i \leftarrow \epsilon_i(B_0)$, $\mathbf{B}_i \leftarrow B_0$\;
    \ForEach{integer $B_i \in [B_0+1, |\mathcal{Y}_i|\cdot b_0\cdot d] $}
    {
        compute $\epsilon_i(B_i)$ by MD-MECB or EVD-MECB\;\label{line: step1 MD/EVD}
        \If{$\epsilon_i(B_i) < \min\{\epsilon_{i,j}:\: (B_{i,j}, \epsilon_{i,j})\in \mathcal{E}_i\} $ }
        {
            $\mathcal{E}_i \leftarrow \mathcal{E}_i \cup \{(B_i, \epsilon_i(B_i))\}$\;
        }
        % \ForEach{integer $b \in [(m_e+2), b_0]$}
        % {
        %     \If{$B_i$ is divisible by $bd$}
        %     {
        %         calculate $\epsilon$ with precision $b$ and cardinality $B_i/(db)$ by (\ref{MD: obj}) or (\ref{EVD: obj})\;
        %         \If{$\epsilon < \min_{\epsilon_i \in \mathbf{E}_i} \epsilon_i$ and $B_i \notin \mathbf{B}_i$}
        %         {
        %         append $\epsilon$ to $\mathbf{E}_i$,
        %         append $b_i$ to $\mathbf{B}_i$\; %$\mathcal{B}_i \leftarrow \mathcal{B}_i \cup \{B_i\}$\;
        %         }
        %         \If{$\epsilon < \min_{\epsilon_i \in \mathbf{E}_i} \epsilon_i$ and $B_i \in \mathbf{B}_i$}
        %         {
        %         update the last element of $\mathbf{E}_i$ to be $\epsilon$\;
        %         }
        %     }
        % }
    }

    %construct $\boldsymbol{\epsilon}_i(b_i)$ step function for $b_i \in [B_{0}, |Y_i|\cdot b_0\cdot d]$ by either \textit{MD-MECB} or \textit{EVD-MECB}\;
    
    %$E_i \leftarrow $ all possible output $\epsilon$ values from $\boldsymbol{\epsilon}_i$\;\label{line: OBA-step2-start}
    %$B_i \leftarrow $ all $b_i$ values which is one-to-one corresponded with each $\epsilon \in E_i$\;
    report $\mathcal{E}_i$ to the server;  \label{line: OBA-step1-end}
}
 \nonl \TheServer{}
{
    $\mathbf{E}$ is an ordered list of $\epsilon$-values in $\bigcup_i \mathcal{E}_i$, sorted in descending order\;\label{line: OBA-step2-start}
    %$\epsilon_{max} \leftarrow \max_{i\in \{1,\ldots,N\}} \epsilon_i(B_{0})$\; %\label{line: OBA-step3-start}
    $I_{max} \leftarrow $ first index in $\mathbf{E}$\; %of $\epsilon_{max}$ in $\mathbf{E}$, i.e., $\mathbf{E}_{I_{max}} = \epsilon_{max}$\;
    %$\epsilon_{min} \leftarrow 0$\; %\max_{i\in \{1,\ldots,N\}} \epsilon_i(|\mathcal{Y}_i| \cdot b_0 \cdot d)$ 
    %\ting{isn't this always $0$?} \ting{both are $\max$ due to lemma VI.1} \hl{I agree we can just set $\epsilon_{min} \leftarrow 0$, but as a initialization, we just want to select one global $\epsilon$ not collecting local coresets, maybe lemma VI.1 is not needed here}\ting{agree}\; 
    $I_{min} \leftarrow $ last index in $\mathbf{E}$\; %of $\epsilon_{min}$ in $\mathbf{E}$, i.e., $\mathbf{E}_{I_{min}} = \epsilon_{min}$\;
    \While{true \label{line: OBA-step2-start while}}
    {
        $I \leftarrow \Bigl\lfloor \frac{I_{max} + I_{min}}{2}\Bigr\rfloor$\;
        $\epsilon_I \leftarrow $ the $I$-th element in $\mathbf{E}$\; %$\epsilon_I' \leftarrow \mathcal{E}_{I + 1}$\; 
        \For{$i = 1, \ldots, N$ \label{line: OBA-step2-for}}
        {
            $B_i(\epsilon_I)\leftarrow \min\{B_{i,j}:\: (B_{i,j}, \epsilon_{i,j})\in \mathcal{E}_i, \: \epsilon_{i,j}\leq \epsilon_I\}$\; 
            %\ting{cannot refer to discussion, but pairing between values in $\mathbf{B}_i$ and $\mathbf{E}_i$ are not recorded, so cannot directly obtain $\epsilon(B_i)$ from previous steps} \hl{$\mathbf{E}_i, \mathbf{B}_i$ are not deleted, server keep $\mathbf{E}, \mathbf{E}_i, \mathbf{B}_i$, so the pairing of $\mathbf{B}_i$ and $\mathbf{E}_i$ are recorded} \ting{no, $\mathbf{E}_i$ may also contain suboptimal $\epsilon$'s for a given $B_i$, not 1-1 correspondence; furthermore, sets do not have order (hence no pairing)} \hl{Sorry, I make you confused. See the revised $\epsilon_i$ construction please. They are paired. }\ting{thanks, saw it}
            % is calculated as discussion for Figure \ref{fig:step_func}\;%, $b_i' \leftarrow \boldsymbol{\epsilon}_i^{-1}(\epsilon_I')$\;
        }\label{line: OBA-step2-forend}
        \eIf{$I_{min} = I_{max} + 1$ \label{line: OBA-stopping-condition}}
        {
            %$B_i \leftarrow b_i, \forall i \in [1, \ldots, N]$\;
            send $B_i(\epsilon_I)$ to node $i$ for each $i=1,\ldots,N$\;
            break \textbf{while} loop\;
        }
        {
            \eIf{$\sum_{i=1}^N B_i(\epsilon_I) > B$\label{line: OBA-binary-begin}}
            {
                $I_{min} = I$\;
            }
            {
                $I_{max} = I$\;
            }\label{line: OBA-binary-end}
        }
    }\label{line: OBA-step2-end}%\label{line: OBA-step3-end}
}
 \nonl \EachNode{}
{
    find local $(k_i^*, b_i^*)$ under budget $B_i(\epsilon_I)$ given by the server by MD-MECB or EVD-MECB\;\label{line: OBA-step3}
}
\textbf{return} $\{(k^*_i, b^*_i)\}_{i=1}^N$\;\label{line: OBA-step5}
\end{algorithm}
\normalsize
\setlength{\textfloatsep}{.5em}% Remove \textfloatsep

\subsection{Optimal Budget Allocation Algorithm for MECBD (OBA-MECBD)}\label{subsec: MECBD Algorithm}
The MECBD problem in (\ref{eq:distributed formulation}) is a \emph{minimax knapsack problem} \cite{luss1991nonlinear, luss1987algorithm} with a nonlinear non-increasing objective function. Special cases of this problem with strictly decreasing objective functions have been solved in \cite{luss1987algorithm}. However, the objective function of MECBD is a step function as shown below, which is not strictly decreasing. 
%\hl{Separable nonlinear minimax problems with strictly decreasing objective functions are solved efficiently in \cite{luss1987algorithm}. Our MECBD problem is a variant of it in the sense that the objective functions of MECBD are step functions and they are not strictly decreasing, not continuous and not invertible. Therefore, in this section, we will discuss how to efficiently solve our MECBD problem. } %In this section, we will discuss how to efficiently solve it given all objective functions. % Ting: this is not right
%\ting{in one sentence, talk about the difficulty (NP-hard?) of solving the general problem if mentioned in the reference} 
%\hl{I didn't find any paper saying it's NP-hard. Instead, I found some papers give algorithm to solve its variants}
%\ting{then why not using solution given in these papers? you have to say something about limitation/inapplicability of existing solutions to our instance of the problem}
%\hl{Please check the revised sentences. I just say MECBD is a variant of some porblem, which is solved efficiently. }
Below we will develop a polynomial-time algorithm to solve our instance of the minimax knapsack problem using the following property of $\epsilon_i(B_i)$.\looseness=-1

We note that $\epsilon_i(B_i)$ is a non-increasing step function of $B_i$ (see Figure~\ref{fig:step_func}). This is because the configuration parameters $k$ and $b$ in the CS + QT procedure are integers. %, a small change in $B_i$ may not be sufficient enough to obtain a different integer value of $k$ or $b$. 
Therefore, there exist intervals $[B_{i,j}, B_{i,j+1})$ ($j = 0,1,2,...$) such that for any $B_i, B'_i \in [B_{i,j}, B_{i,j+1})$, we have $\epsilon_i(B_i) = \epsilon_i(B'_i)$, as shown in  Figure~\ref{fig:step_func}.
Given a target value of $\epsilon_i$, the minimum $B_i$ for reaching this target is thus always within the set $\{B_{i,j}\}$. \looseness=-1%i.e., $\arg\min_{B_i:\epsilon_i(B_i)\leq \epsilon} \in \mathcal{B}_i$.

Our algorithm, shown in Algorithm~\ref{Alg:OBA-RCC}, has three main steps. First, in lines~\ref{line: OBA-step1-start}--\ref{line: OBA-step1-end}, each node computes the set $\mathcal{E}_i$ of all pairs of $B_{i,j}$ and the corresponding $\epsilon_i(B_{i,j})$. This is achieved by evaluating $\epsilon_i(B_i)$ according to MD-MECB or EVD-MECB for gradually increasing $B_i$ and recording all the points where $\epsilon_i(\cdot)$ decreases. 
%vector $\mathbf{B}_i$ and the corresponding values $\mathbf{E}_i := \{ \epsilon_i(B_i) : B_i \in \mathbf{B}_i\}$ using either the MD-MECB or the EVD-MECB algorithm \shiqiang{(Need to explain how this is done. I wrote this because this is shown in the algorithm)} at each node $i$. \hl{Specifically, let us use $\mathbf{E}_i$ and $\mathbf{B}_i$ to denote the vector containing all possible $\epsilon_i$s and corresponding $B_i$s. Given $B_i$, we will consider each $b \in [(m_e+2), \ldots, b_0]$. If $B_i/d$ is divisible by this $b$, we will calculate an $\epsilon$ error with this $b$ by (\ref{MD: obj}) or (\ref{EVD: obj}). Then put this $\epsilon$ and $B_i$ in $\mathbf{E}_i$ and $\mathbf{B}_i$ if this $\epsilon < \min_{\epsilon_i \in \mathbf{E}_i}\epsilon_i$, otherwise, nothing need to be done. }%, $k = \bigl\lfloor \frac{B_i+1}{d \cdot (1+m_e+b)} \bigr\rfloor$. } 
After that, the set $\mathcal{E}_i$ is sent to a server.

Second, the server allocates the global budget to the nodes according to lines~\ref{line: OBA-step2-start}--\ref{line: OBA-step2-end}. 
To this end, it computes an ordered list $\mathbf{E}$ of all possible values of the global $\epsilon := \max_i \epsilon_i$.
%\hl{$\epsilon$ as $E = \bigcup \mathcal{E}_i$.} \hl{why define $\epsilon$?}. 
%$E = \bigcup \mathcal{E}_i$?
% Note that because all the $\epsilon_i$ values in $E_i$ are distinct by definition, there is a one-to-one mapping between $\mathcal{B}_i$ and $E_i$. 
Let $B_i(\epsilon)$ denote the smallest value of $B_i$ such that $\epsilon_i(B_i) \leq \epsilon$. 
The main idea is to perform a binary search for the target value of $\epsilon \in \mathbf{E}$ (lines~\ref{line: OBA-step2-start while}--\ref{line: OBA-step2-end}). %, where the number of possible choices reduces by half in every round. 
For each candidate value of $\epsilon$, we compute $B_i(\epsilon)$ for all $i$. If $\sum_i B_i(\epsilon) < B$ (i.e., we are below the budget when targeting at the current choice of $\epsilon$), we will eliminate all $\epsilon'\in\mathbf{E}$  such that $\epsilon' > \epsilon$; otherwise, we will eliminate all $\epsilon'\in\mathbf{E}$  such that $\epsilon' < \epsilon$. %We then use $\epsilon'$ in place of $\epsilon$ in the next round. 
After finding the target value of $\epsilon$ such that $\sum_i B_i(\epsilon)$ achieves the largest value within $B$, the server sends the corresponding local budget $B_i(\epsilon)$ to each node. \looseness=-1

Finally, each node uses MD-MECB or EVD-MECB to compute its local configuration $(k_i^*, b_i^*)$ under the given budget. \looseness=-1

\emph{Complexity:} We analyze the complexity step by step. First, computing $\mathcal{E}_i$ at each node (lines~\ref{line: OBA-step1-start}--\ref{line: OBA-step1-end}) has a complexity of $O((n^2 + b_0)db_0 n)$ if using MD-MECB and $O((n^3 + d + b_0)db_0 n)$ if using EVD-MECB, dominated by line~\ref{line: step1 MD/EVD}. %Thus, $N$ nodes will take $O((n^3 + d + b_0)Ndb_0 n)$ or $O((n^3 + d + b_0)Ndb_0 n)$. 
Second, computing the budget allocation at the server (lines~\ref{line: OBA-step2-start}--\ref{line: OBA-step2-end}) has a complexity of $O(db_0 n \log(db_0 n))$. Specifically, as $\mathbf{E}$ has $O(db_0 n)$ elements, sorting it takes $O(db_0 n \log(db_0 n))$. The \textbf{while} loop is repeated $O(\log(db_0 n))$ times, as each loop eliminates half of the candidate $\epsilon$ values in $\mathbf{E}$, and each loop takes $O(db_0 n)$, dominated by lines~\ref{line: OBA-step2-for}--\ref{line: OBA-step2-forend}. Finally, computing the local configuration at each node (line~\ref{line: OBA-step3}) takes $O(n^2 + b_0)$ using MD-MECB and $O(n^3 + d + b_0)$ using EVD-MECB.
%{Under a given $B_i$, the complexity of $\epsilon_i$ construction is $O(db_0n \times b_0)$ \shiqiang{(any reason for not writing $O(db^2_0n)$?)}. In addition, the complexity for sorting $\mathbf{E}$ and the binary search is $O((db_0n)^2)$ and $O(N \log (db_0n))$, respectively. Therefore, OBA-MECBD has computational complexity of $O((db_0n)^2 + N \log (db_0n))$.} We will show the efficiency of OBA-MECBD and its advantages over previous methods in Section~\ref{sec: Performance Evaluation}.

%$O(N*(n^2 + nb_0d * n^2)) + O((nb_0d)^2) +O(N \log(nb_0d)) = O(N*n^3 b_0d + (nb_0d)^2)$

%$O(N*(n^3 + nb_0d * n^3)) + O((nb_0d)^2) +O(N \log(nb_0d)) = O(N*n^4 b_0d + (nb_0d)^2)$

%\ting{lines 1-8: $O((n^2 + b_0)db_0 n)$, dominated by line 5, assuming using MD-MECB}

%\ting{lines 9-24: $O(db_0 n \log(db_0 n))$, where $\mathbf{E}$ has $O(db_0 n)$ elements, sorting $\mathbf{E}$ takes $O(db_0 n \log(db_0 n))$, the while loop is repeated $O(\log(db_0 n))$ times, each loop has complexity $O(db_0 n)$ dominated by lines 15-16 (assuming $\mathcal{E}_i$ is also sorted into descending order, you need to search it for $B_i(\epsilon_I)$), taking $O(db_0 |\mathcal{Y}_i|)$ time}

%\ting{line 25: $O(n^2+b_0)$}

\emph{{Optimality: }}%Performance:}  
{Next, we prove the optimality of OBA-MECBD in budget allocation.} Let $\epsilon_i^\pi(B_i)$ be the error bound for a given solution $\pi$ of the MECB problem for dataset $\mathcal{Y}_i$ and budget $B_i$. We show that OBA-MECBD is optimal in the following sense. \looseness=-1

\begin{theorem}\label{thm:OBA-MECBD optimality}
Using a given MECB algorithm $\pi$ as the subroutine called in lines~\ref{line: OBA-subroutine1}, \ref{line: step1 MD/EVD}, and \ref{line: OBA-step3}, OBA-MECBD solves MECBD optimally w.r.t. $\pi$, i.e., its budget allocation $(B_i)_{i=1}^N$ is the optimal solution to (\ref{eq:distributed formulation}) with $\epsilon_i(B_i)$ replaced by $\epsilon_i^\pi(B_i)$.
\end{theorem}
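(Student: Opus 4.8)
The plan is to establish optimality in two parts: first argue that the search over the ordered list $\mathbf{E}$ of all attainable values of the global objective $\epsilon := \max_i \epsilon_i^\pi$ is exhaustive, and second argue that for each candidate $\epsilon$ the per-node quantities $B_i(\epsilon)$ computed by the algorithm are exactly the minimum budgets needed at each node to attain local error at most $\epsilon$, so that the feasibility test $\sum_i B_i(\epsilon) \leq B$ is correct. Combining these, the binary search returns the smallest $\epsilon \in \mathbf{E}$ that is feasible, which is optimal since the true optimum of (\ref{eq:distributed formulation}) (with $\epsilon_i$ replaced by $\epsilon_i^\pi$) must itself be an element of $\mathbf{E}$.

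First I would observe that, because $\pi$ chooses integer configuration parameters $(k,b)$, the function $B_i \mapsto \epsilon_i^\pi(B_i)$ is a non-increasing step function, so its range over the feasible budget interval $[B_0,\,|\mathcal{Y}_i| b_0 d]$ is a finite set, and the pairs recorded in $\mathcal{E}_i$ are precisely the breakpoints $(B_{i,j}, \epsilon_i^\pi(B_{i,j}))$ — i.e., for any target value $v$, the minimum $B_i$ with $\epsilon_i^\pi(B_i) \leq v$ lies in $\{B_{i,j}\}_j$ (as already noted in the text preceding the theorem). Hence $B_i(\epsilon) := \min\{B_{i,j} : (B_{i,j},\epsilon_{i,j}) \in \mathcal{E}_i,\ \epsilon_{i,j} \leq \epsilon\}$, as computed in the algorithm, equals the true minimal local budget to reach error $\leq \epsilon$. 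I would also note $\mathbf{E} = \bigcup_i \{\epsilon_{i,j}\}$ contains every value that $\max_i \epsilon_i^\pi(B_i)$ can take under any allocation, because the maximizing coordinate's error is some $\epsilon_{i,j}$.

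Next I would prove the key monotonicity/feasibility fact: an allocation achieving global error $\leq \epsilon$ exists if and only if $\sum_i B_i(\epsilon) \leq B$. The ``if'' direction is immediate — allocate $B_i(\epsilon)$ to node $i$ (and discard slack). For ``only if'', any allocation $(B_i')$ with $\max_i \epsilon_i^\pi(B_i') \leq \epsilon$ satisfies $\epsilon_i^\pi(B_i') \leq \epsilon$ for every $i$, hence $B_i' \geq B_i(\epsilon)$ by minimality, so $\sum_i B_i' \geq \sum_i B_i(\epsilon)$; since $\sum_i B_i' \leq B$, we get $\sum_i B_i(\epsilon) \leq B$. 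This also shows the predicate ``$\epsilon$ is feasible'' is monotone in $\epsilon$ along $\mathbf{E}$ (larger $\epsilon$ only shrinks each $B_i(\epsilon)$), which validates the binary search: the loop maintains the invariant that the optimal feasible $\epsilon$ lies in the index window $[I_{max}, I_{min}]$ of $\mathbf{E}$ (recall $\mathbf{E}$ is sorted in descending order, so larger index = smaller $\epsilon$ = harder to satisfy), and on termination ($I_{min} = I_{max}+1$) it returns the boundary element, the smallest feasible value. Finally, the global optimum $\epsilon^*$ of (\ref{eq:distributed formulation}) under $\pi$ lies in $\mathbf{E}$ and is feasible, so the returned $\epsilon_I \leq \epsilon^*$; and $\epsilon_I$ is itself attainable by the allocation $(B_i(\epsilon_I))_i$, so $\epsilon_I \geq \epsilon^*$, giving $\epsilon_I = \epsilon^*$ and optimality of the allocation.

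The main obstacle I anticipate is the bookkeeping around the binary search, specifically pinning down the exact loop invariant and checking the boundary cases — that the descending sort, the floor in $I \leftarrow \lfloor (I_{max}+I_{min})/2 \rfloor$, and the asymmetric update of $I_{min}$ versus $I_{max}$ (lines~\ref{line: OBA-binary-begin}--\ref{line: OBA-binary-end}) together never skip over the optimal element and terminate with $I$ pointing exactly at it. One must be careful that when $\sum_i B_i(\epsilon_I) > B$ we move toward smaller index (larger $\epsilon$) and otherwise toward larger index, and that the stopping test fires correctly; a clean way is to assert the invariant ``every $\epsilon$ at index $< I_{max}$ is infeasible or untested-but-dominated, and the element at index $I_{min}$ is feasible'' and verify it is preserved. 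Everything else (finiteness of $\mathcal{E}_i$, correctness of $B_i(\epsilon)$, the iff for feasibility) is routine given the step-function structure already established in the text.
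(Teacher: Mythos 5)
Your proposal is correct and follows essentially the same route as the paper's proof: it identifies $B_i(\epsilon)$ as the minimal per-node budget achieving local error at most $\epsilon$, shows the binary search maintains the invariant that $\epsilon_{I_{max}}$ is feasible and $\epsilon_{I_{min}}$ is not, and concludes that the returned $\epsilon_I$ is the smallest feasible value in $\mathbf{E}$, so any other allocation within budget has $\max_i \epsilon_i^\pi(B'_i) \geq \epsilon^*$. You actually supply more detail than the paper does (the explicit ``only if'' argument via minimality of $B_i(\epsilon)$, and the observation that the true optimum must lie in $\mathbf{E}$), which the paper leaves implicit.
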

\begin{proof}
Let $B_i^\pi(\epsilon)$ denote the smallest value of $B_i$ such that $\epsilon_i^\pi(B_i)\leq \epsilon$. 
By lines~\ref{line: OBA-binary-begin}--\ref{line: OBA-binary-end} in Algorithm~\ref{Alg:OBA-RCC}, $I_{min}$ and $I_{max}$ should always satisfy $\sum_{i=1}^N B_i^\pi(\epsilon_{I_{min}}) > B$ and $\sum_{i=1}^N B_i^\pi(\epsilon_{I_{max}}) \leq B$ for all nontrivial values of $B$. Let $\epsilon^*$ denote the value of $\epsilon_I$ at the end of budget allocation, which is the value of (\ref{distributed: obj}) achieved by OBA-MECBD.
%\hl{Line \ref{line: OBA-stopping-condition} in} \cc{Algorithm~\ref{Alg:OBA-RCC}}. %when the stopping condition in line~\ref{line: OBA-stopping-condition} holds.
As $I = I_{max}$ and $I_{min} = I_{max}+1$ at this time, $\epsilon^*$ must be the smallest value of $\epsilon$ such that  $\sum_{i=1}^N B_i^\pi(\epsilon) \leq B$. Therefore, for any other budget allocation $(B'_i)_{i=1}^N$ such that $\sum_{i=1}^N B'_i \leq B$, we must have $\max_i \epsilon_i^\pi(B'_i) \geq \epsilon^*$.
\end{proof}

\begin{figure}[t]
\begin{subfigure}{0.25\textwidth}
\centerline{
\includegraphics[width=\textwidth,height=3.2cm]{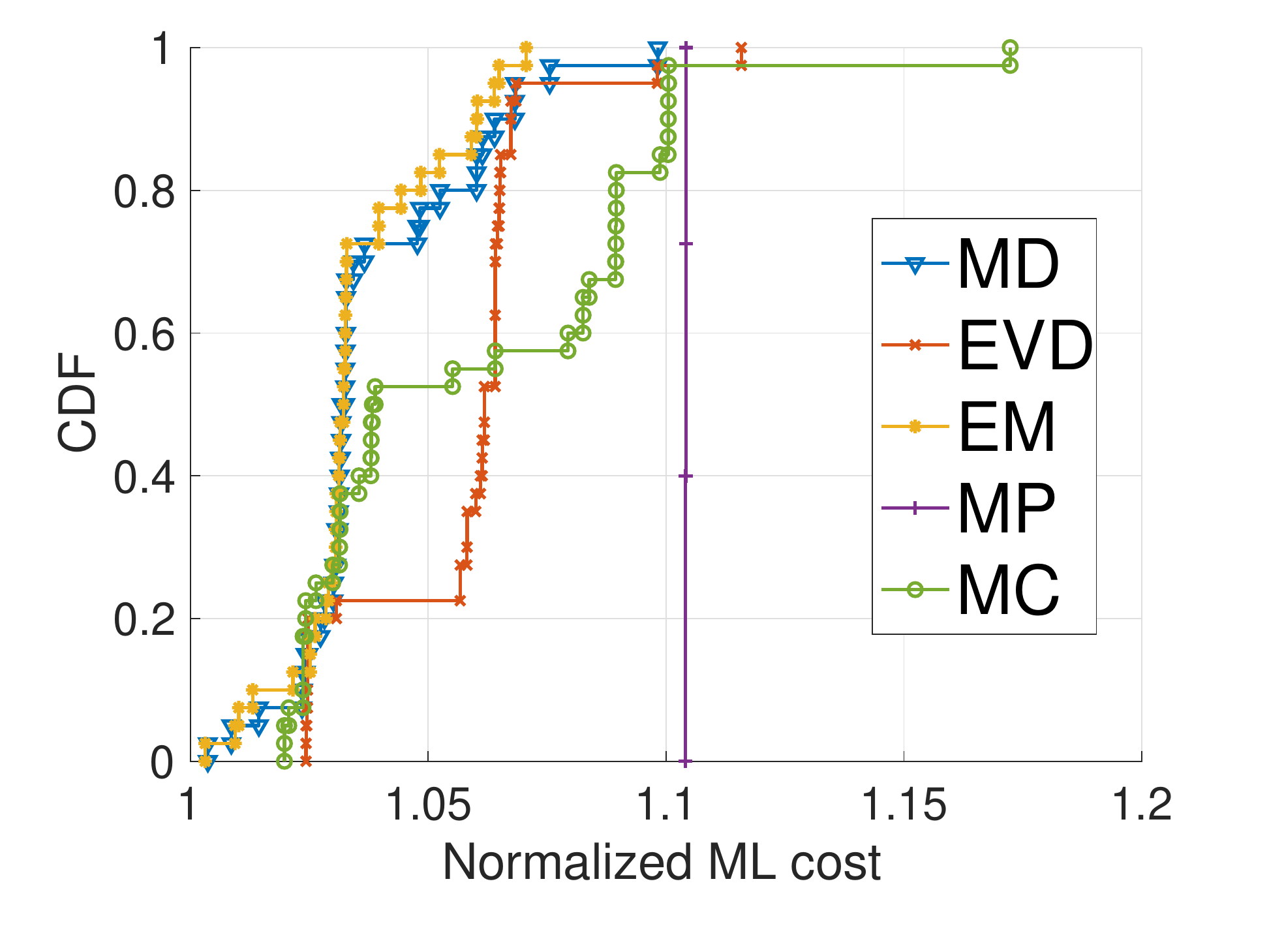}}
\vspace{-.1em}
\caption{MEB}
\end{subfigure}
 \hspace{-6em}
\begin{subfigure}{0.25\textwidth}
\centerline{
\includegraphics[width=\textwidth,height=3.2cm]{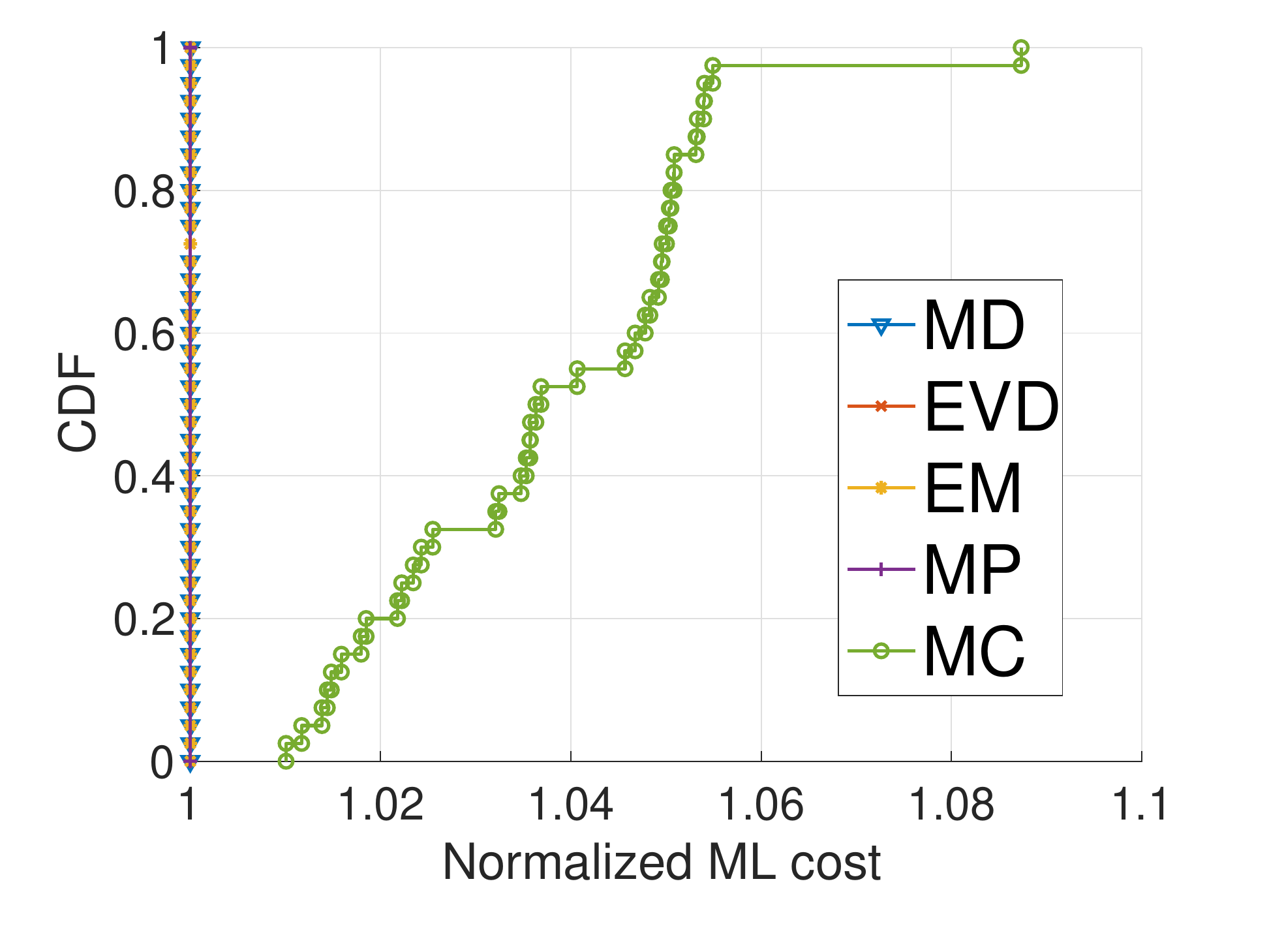}}
\vspace{-.1em}
\caption{$k$-means ($k=2$)}
\end{subfigure}
  \begin{subfigure}{.25\textwidth}
  \centerline{
   \includegraphics[width=\textwidth,,height=3.2cm]{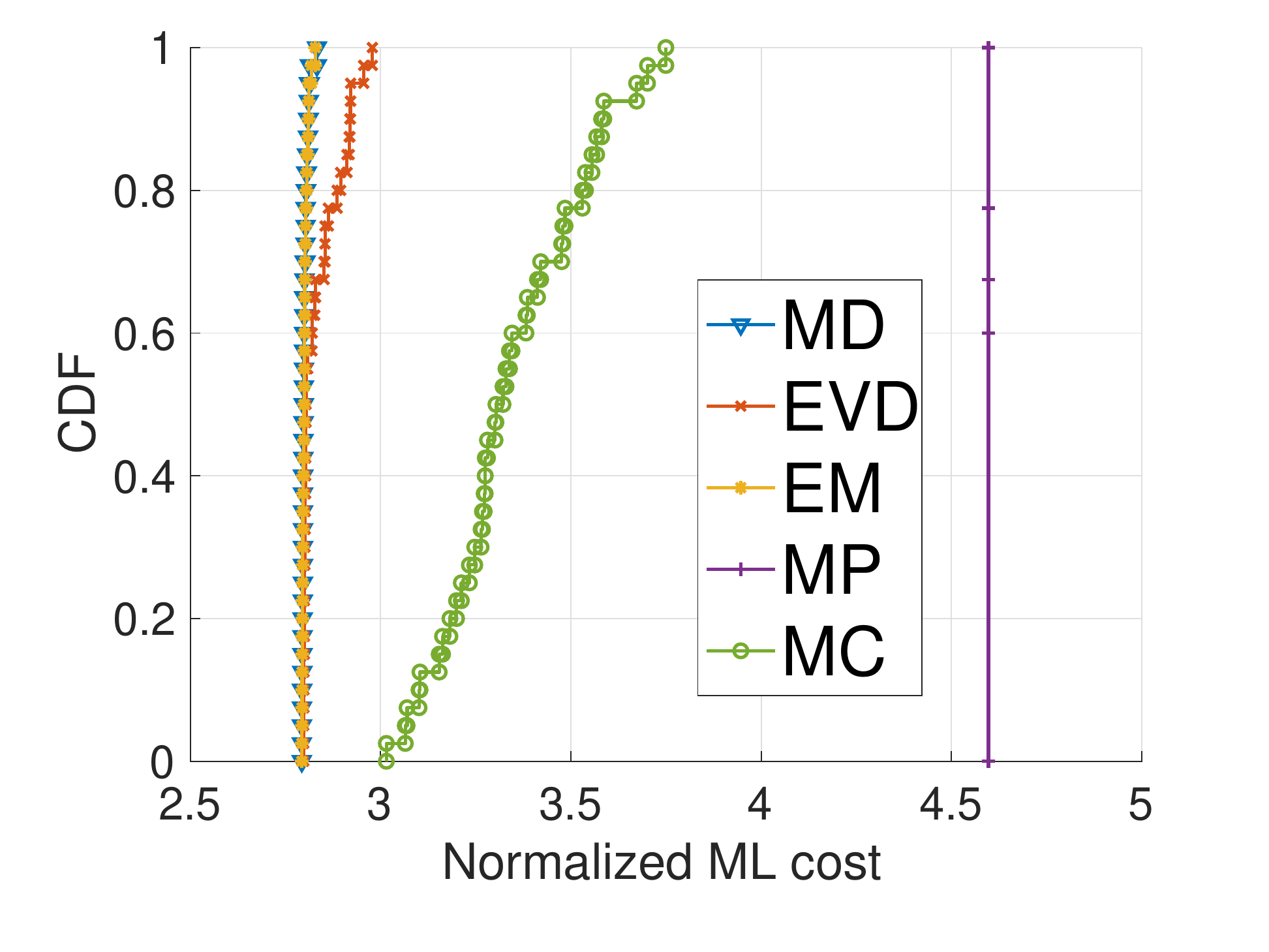}}
   \vspace{-.1em}
    \caption{PCA (3 components) }
  \end{subfigure}
       \hspace{-1.5em}
  \begin{subfigure}{0.25\textwidth}
    \centerline{
  \includegraphics[width=\textwidth,height=3.2cm]{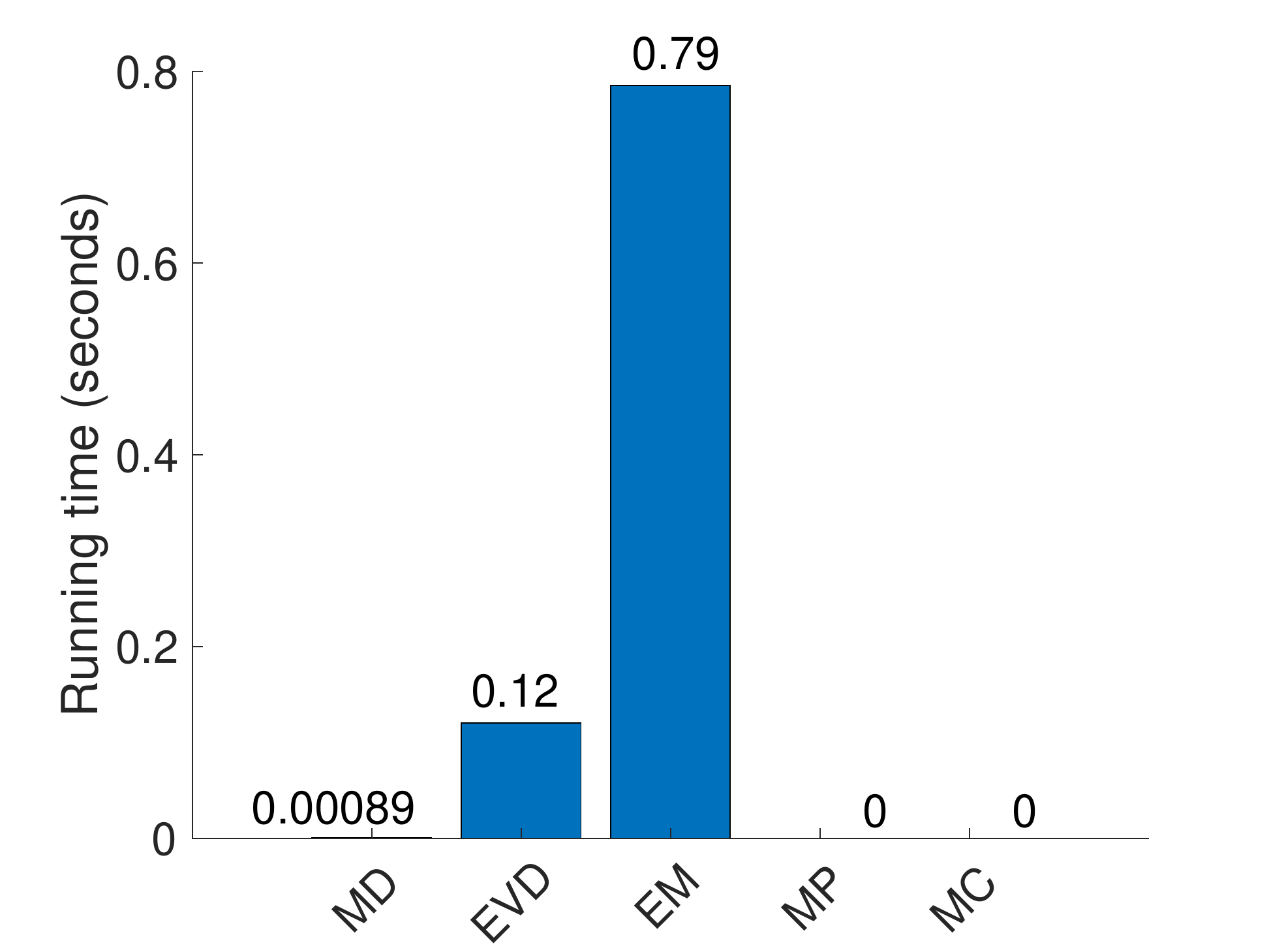}}
  \vspace{-.1em}
\caption{Running time}   
   \end{subfigure}    
%   \vspace{-.5em}
\caption{Evaluation on Fisher's Iris dataset (centralized setting). %\shiqiang{Move these figures to earlier in the paper so that they interleave with the text and don't all appear on the last page.}\ting{remove all figure titles (already shown in the sub-caption); add y-axis label to all (d)'s `running time (second)'; not clear why showing the exact running times, make the plot look a bit dirty} \hl{I've moved earlier and remove all titles}
}
\label{fig:fisheriris centralized}
   \vspace{1em}
\end{figure}

\section{Performance Evaluation}\label{sec: Performance Evaluation}
In this section, we evaluate our proposed algorithms using multiple real-world datasets for various ML tasks. Our objective is to validate the effectiveness and efficiency of our proposed algorithms (EVD-MECB, MD-MECB, OBA-MECBD) against benchmarks.

\subsection{Datasets}
In our experiments, we use four real-world datasets to evaluate our algorithms: (1) Fisher's Iris dataset \cite{FisherIris}, with 3 classes, 50 data points in each class, 5 attributes for each data point; (2) Facebook  metric dataset \cite{FacebookMetrics}, which has 494 data points with 19 attributes; (3) Pendigits dataset \cite{Pendigits}, which has $7,494$ data points and 17 attributes; % space in our experiment. Each dataset is normalized to $[-1, 1]$ with zero mean for each attribute; 
(4) MNIST handwritten digits dataset in a 784-dimensional space~\cite{MNIST}, where we use $60,000$ data points for training and $10,000$ data points for testing. We leverage the approach in \cite{coreset19:report} to pre-process the labels, i.e., each label is mapped to a number such that distance between points with the same label is smaller than distance between points with different labels. All the original data are represented in the standard IEEE 754 double-precision binary floating-point format \cite{IEEE754}.
%intra-label points distance is smaller than inter-label points distance. 

\subsection{ML Tasks}
%To investigate the applicability of our algorithms, our experiments 
We consider four ML tasks: (1) minimum enclosing ball (MEB) \cite{clarkson2010coresets}; (2) $k$-means ($k=2$ in our experiments); (3) principal component analysis (PCA); and (4) neural network (NN) (with three layers, 100 neurons in the hidden layer). Tasks (1--3) are unsupervised, and task (4) is supervised.

\subsection{Algorithms}
For the centralized setting, we consider five different algorithms for comparison. The first two are the proposed algorithms, i.e.,  EVD-MECB in Algorithm~\ref{Alg:heuristic kmeans coreset} (denoted by \emph{EVD}), MD-MECB in Algorithm~\ref{Alg:greedy max-distance coreset} (denoted by \emph{MD}). The third algorithm is the straightforward solution \emph{EM} (see Section~\ref{subsubsec:baselineEM}). The fourth algorithm aims to Maximize the Precision (\emph{MP}), i.e., using the configuration $k=\Bigl\lfloor \frac{B}{d\cdot b_0}\Bigr\rfloor$ and $b= b_0$ to construct coresets. The fifth algorithm aims to Maximize the Cardinality (\emph{MC}), i.e., using $k=\min(n, \Bigl\lfloor \frac{B}{d \cdot (1+m_e)}\Bigr\rfloor)$ and $b = \max(1+m_e, \Bigl\lfloor \frac{B}{d \cdot n}\Bigr\rfloor)$ to construct coresets, where $1+m_e$ is the minimum number of bits required to represent a number by the rounding-based quantizer (Section~\ref{subsec: quantizer analysis}). 
%\ting{we used `attribute', `element', and `number'. we should pick one and be consistent. } \hl{I think we can use `attribute' when it's discussed with point, since `attribute of point' makes sense. And if it's discussed alone, we can use `number'. This is what I just changed. }
%All these algorithms give $(k^*, b^*)$ as the output. Then, a $k^*$-point coreset $\mathcal{S}$ with weights $\mathcal{W}$ is constructed using the \textit{RCC} algorithm. Afterwards, each number in $\mathcal{S}$ is rounded to $b^*$ bits to construct $\mathcal{S}'$, which, together with $\mathcal{W}$, constitute our final output coreset. 

For the distributed setting, we consider six algorithms for comparison. The first five algorithms correspond to instances of OBA-MECBD in Algorithm~\ref{Alg:OBA-RCC} that use EVD-MECB, MD-MECB, EM, MP, and MC as their subroutines, respectively. We denote these algorithms by OBA-EVD, OBA-MD, OBA-EM, OBA-MP and OBA-MC, respectively. The sixth algorithm is DRCC as proposed in \cite{coreset19:report} that optimizes the allocation of a given coreset cardinality to individual nodes.

\begin{figure}[t]
\begin{subfigure}{0.25\textwidth}
\centerline{
\includegraphics[width=\textwidth,height=3.2cm]{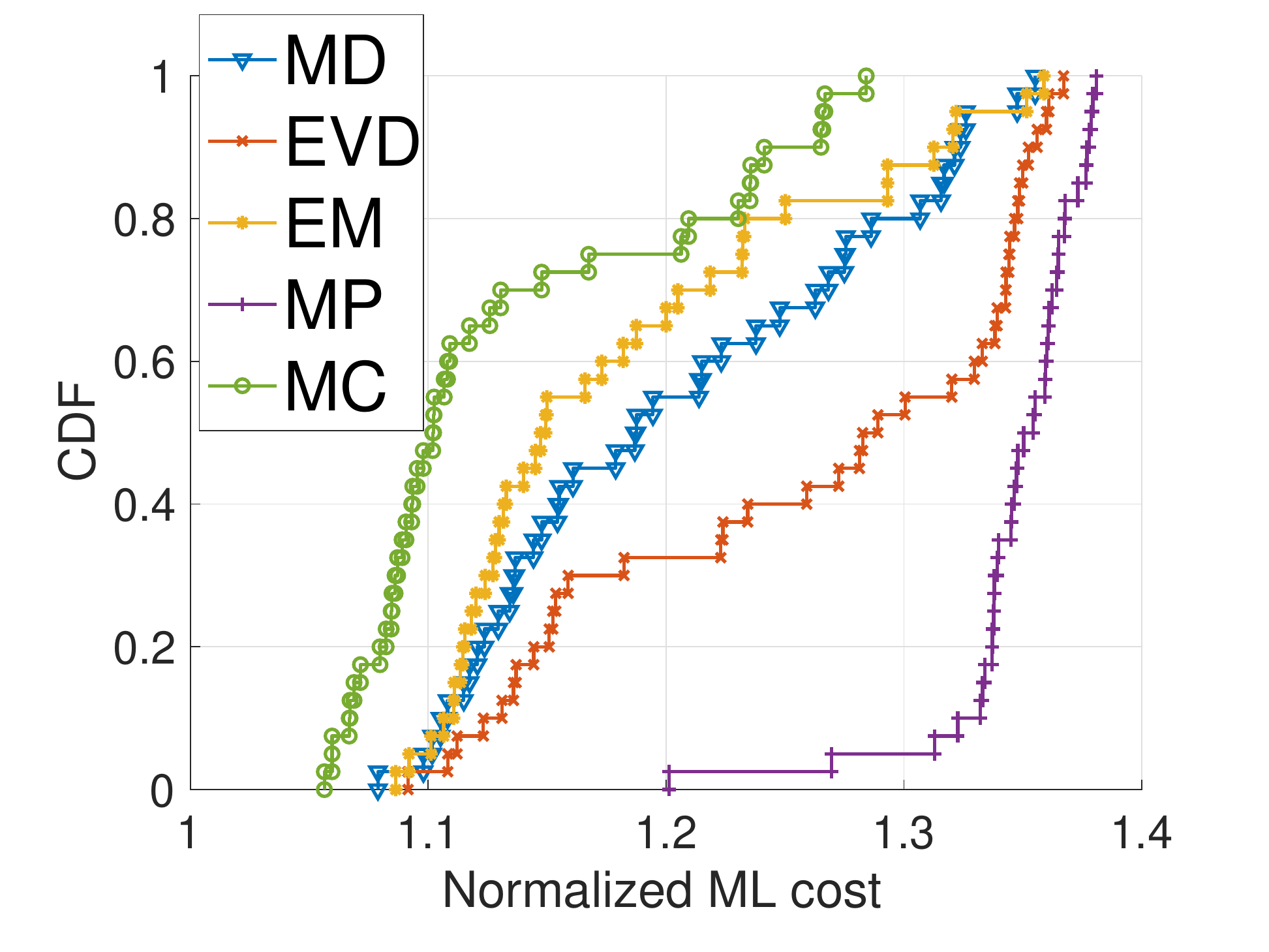}}
\vspace{-.1em}
\caption{MEB}
\end{subfigure}
 \hspace{-6em}
\begin{subfigure}{0.25\textwidth}
\centerline{
\includegraphics[width=\textwidth,height=3.2cm]{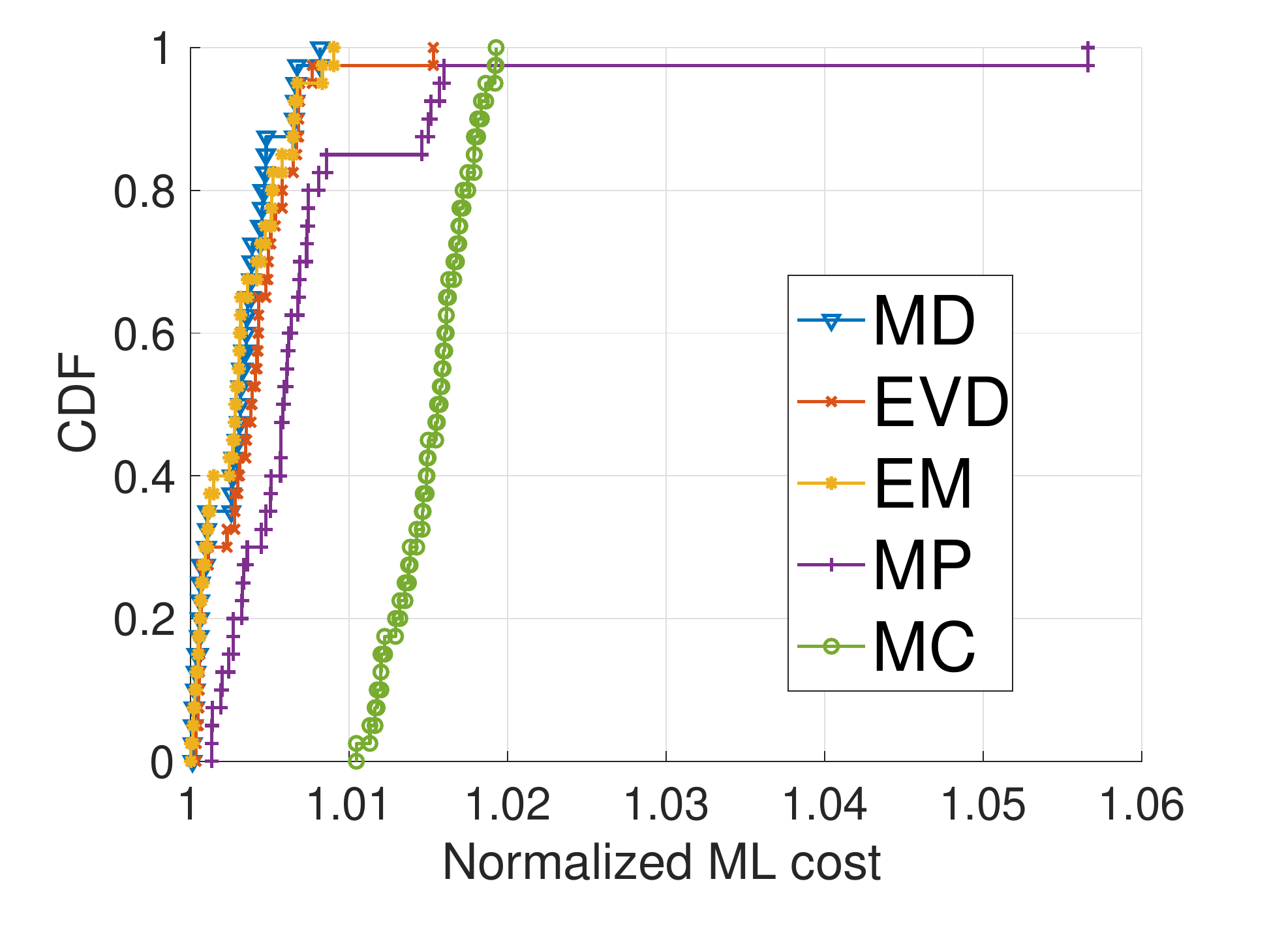}}
\vspace{-.1em}
\caption{$k$-means ($k=2$)}
\end{subfigure}
  \begin{subfigure}{.25\textwidth}
  \centerline{
   \includegraphics[width=\textwidth,,height=3.2cm]{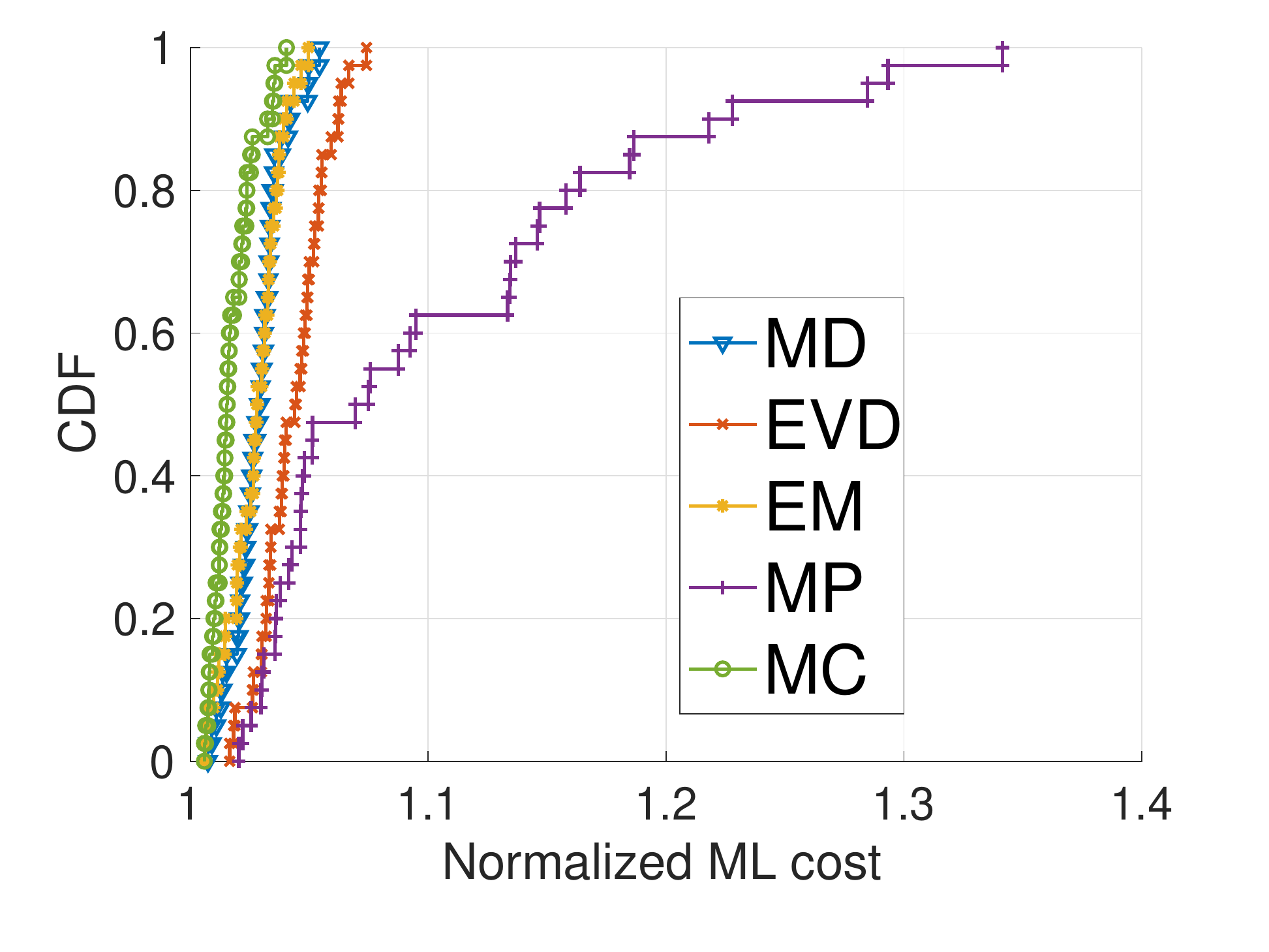}}
   \vspace{-.1em}
    \caption{PCA (3 components) }
  \end{subfigure}
   \hspace{-1.5em}
  \begin{subfigure}{0.25\textwidth}
    \centerline{
  \includegraphics[width=\textwidth,height=3.2cm]{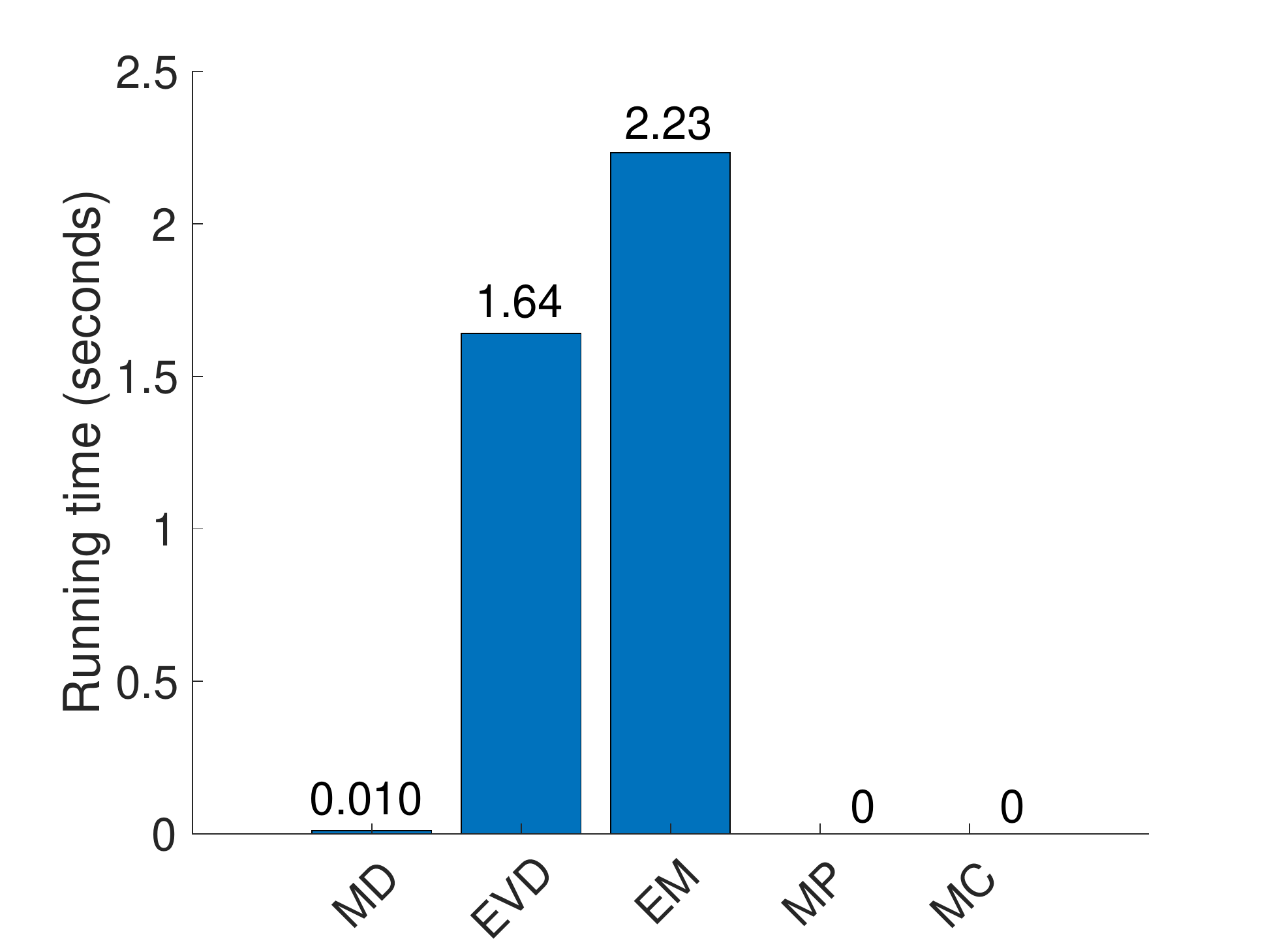}}
  \vspace{-.1em}
\caption{Running time}   
   \end{subfigure}    
%   \vspace{-.5em}
\caption{Evaluation on Facebook metric dataset (centralized setting). }
\label{fig:facebook centralized}
   \vspace{1em}
\end{figure}

\subsection{Performance Metrics}
We use the \emph{normalized ML cost} to measure the performance of unsupervised ML tasks. The normalized ML cost is defined as $\cost(\mathcal{Y}, \xbf_{\mathcal{S}})/\cost(\mathcal{Y}, \xbf^*)$, where $\xbf_\mathcal{S}$ is the model learned from coreset $\mathcal{S}$ and $\xbf^*$ is the model learned from the original dataset $\mathcal{Y}$. For supervised ML tasks, we use \emph{classification accuracy} to measure the performance. Furthermore, we report the running time of each algorithm. % to measure its efficiency. 
All metrics are computed over $40$ Monte Carlo runs unless stated otherwise. 

\subsection{Results in Centralized Setting}
\subsubsection{Unsupervised Learning}
We first evaluate the unsupervised learning tasks: MEB, $k$-means, and PCA. We perform this evaluation on three datasets: %show the performance of the five algorithms (EVD-MECB, MD-MECB, EM, MP, MC) on
Fisher's Iris, Facebook metric, and Pendigits. Figures~\ref{fig:fisheriris centralized}--\ref{fig:pendigits centralized} show the cumulative distribution function (CDF) of normalized ML costs as well as the average running time of each algorithm, when the budget is set to $2\%$ of the size of the original dataset, i.e., $B = 960$, $12014$ and $163069$, respectively. We also list the $b^*$ values over the Monte Carlo runs for EVD-MECB (\emph{EVD}), MD-MECB (\emph{MD}), and EM in Tables~\ref{tab: fisher iris}--\ref{tab: Pendigits}. We have the following observations: 
%\begin{itemize}
    1) In most cases, our proposed algorithms EVD-MECB and MD-MECB yield coresets that are much smaller ($98\%$ smaller) than the original dataset but support these ML tasks with less than $10\%$ degradation in performance. 
    %It is interesting to note that the degradation is $0\%$ for $k$-means clustering on Fisher's Iris dataset. 
    2) Compared to the proposed algorithms, EM achieves a slightly better ML performance, but has a much higher running time. % than the other algorithms. 
    3) Compared to EVD-MECB, MD-MECB is not only faster, but also more closely approximates EM. %output $b^*$ is closer to that of EM.
    4) Compared with MP and MC that relies on a single operation, the algorithms jointly optimizing coreset constrution and quantization (EM, EVD-MECB, MD-MECB) achieve much better ML performance over all.
    %\item MD-MECB is the most efficient algorithm among EVD-MECB, MD-MECB, and EM. In particular, compared to EM, MD-MECB is significantly more efficient with several orders of decrease in running time, while achieving similar performance. 
%\end{itemize}{}

\begin{figure}[t]
\begin{subfigure}{0.25\textwidth}
    \centerline{
    \includegraphics[width=\textwidth,height=3.2cm]{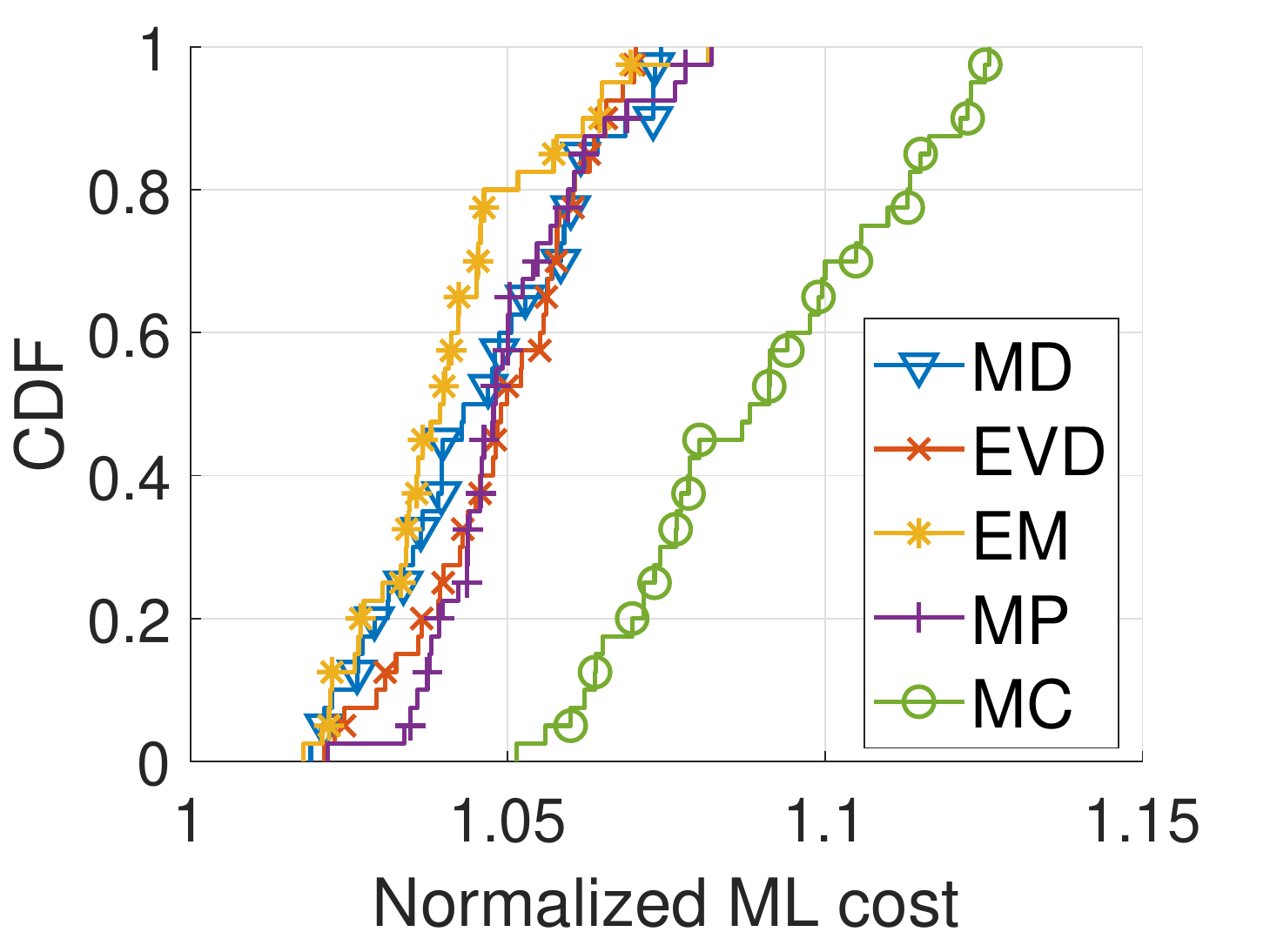}}
    \vspace{-.1em}
    %\centerline{(a) MEB}
    \caption{MEB}
\end{subfigure}
 \hspace{-6em}
\begin{subfigure}{0.25\textwidth}
    \centerline{
    \includegraphics[width=\textwidth,height=3.2cm]{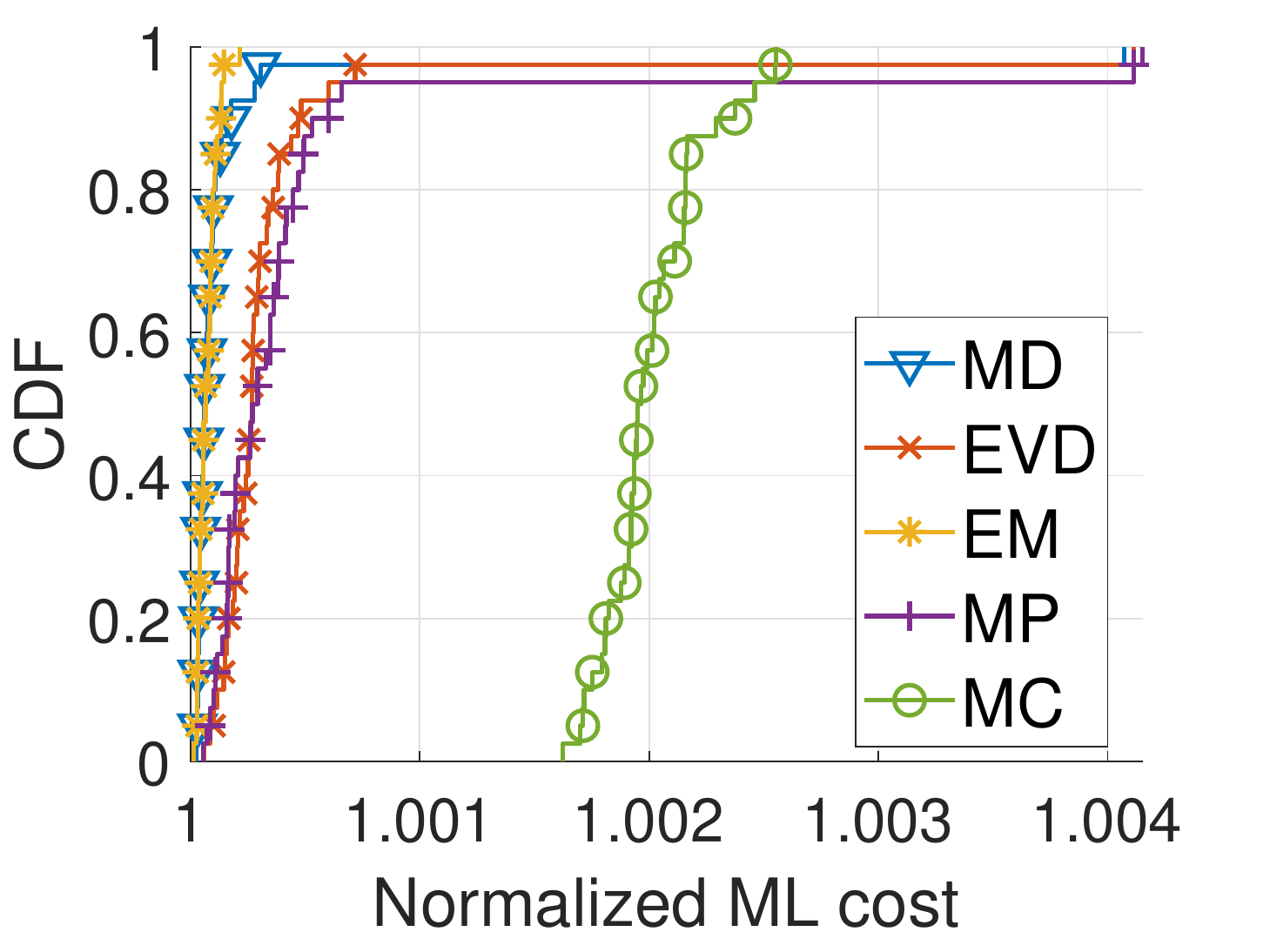}}
    \vspace{-.1em}
    \caption{$k$-means ($k=2$)}
\end{subfigure}
\begin{subfigure}{.25\textwidth}
    \centerline{
    \includegraphics[width=\textwidth,,height=3.2cm]{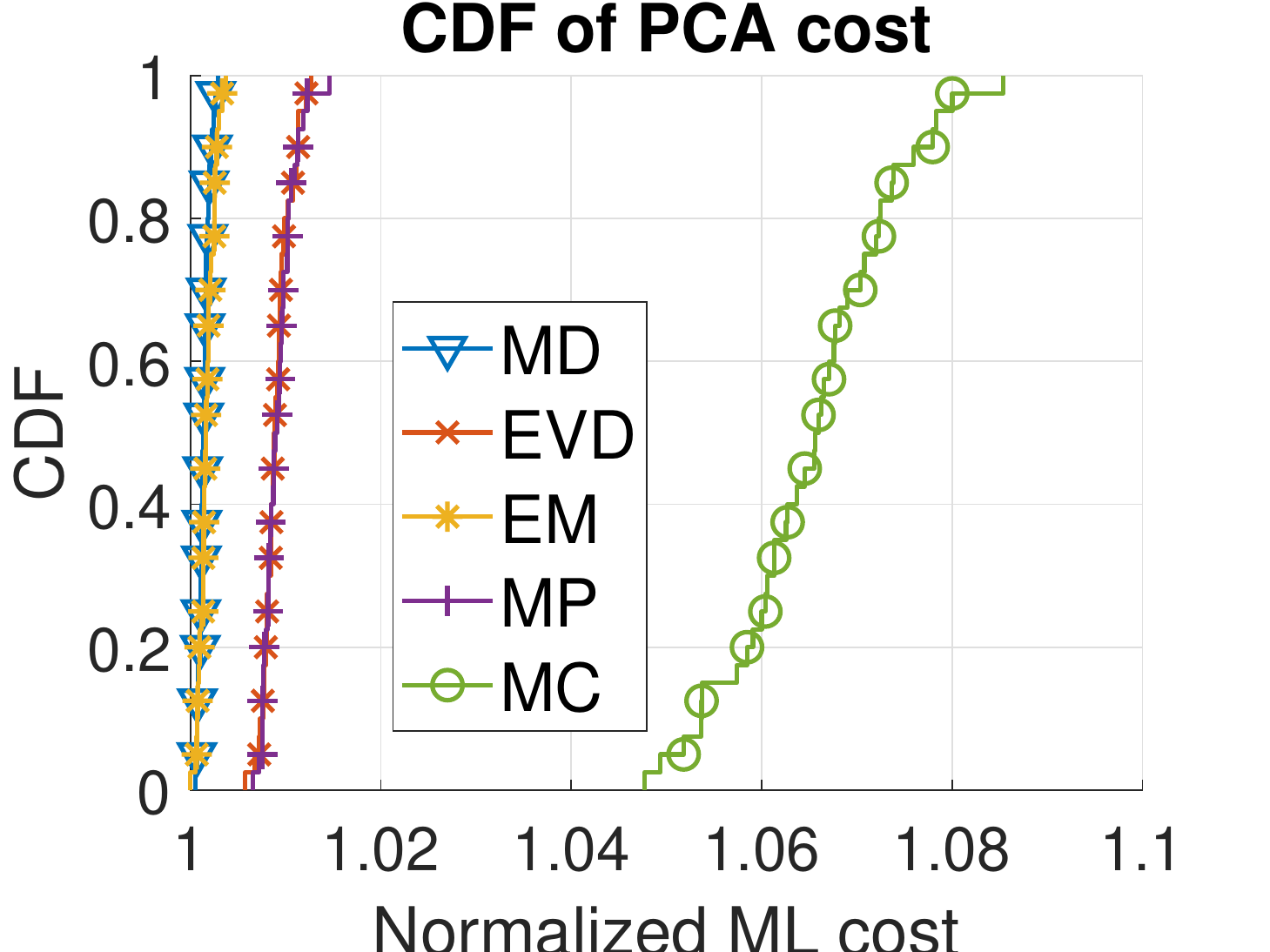}}
    \vspace{-.1em}
    %\centerline{(c) PCA (10 components) }
    \caption{PCA (11 components)}
\end{subfigure}
 \hspace{-1.5em}
\begin{subfigure}{0.25\textwidth}
    \centerline{
    \includegraphics[width=\textwidth,height=3.2cm]{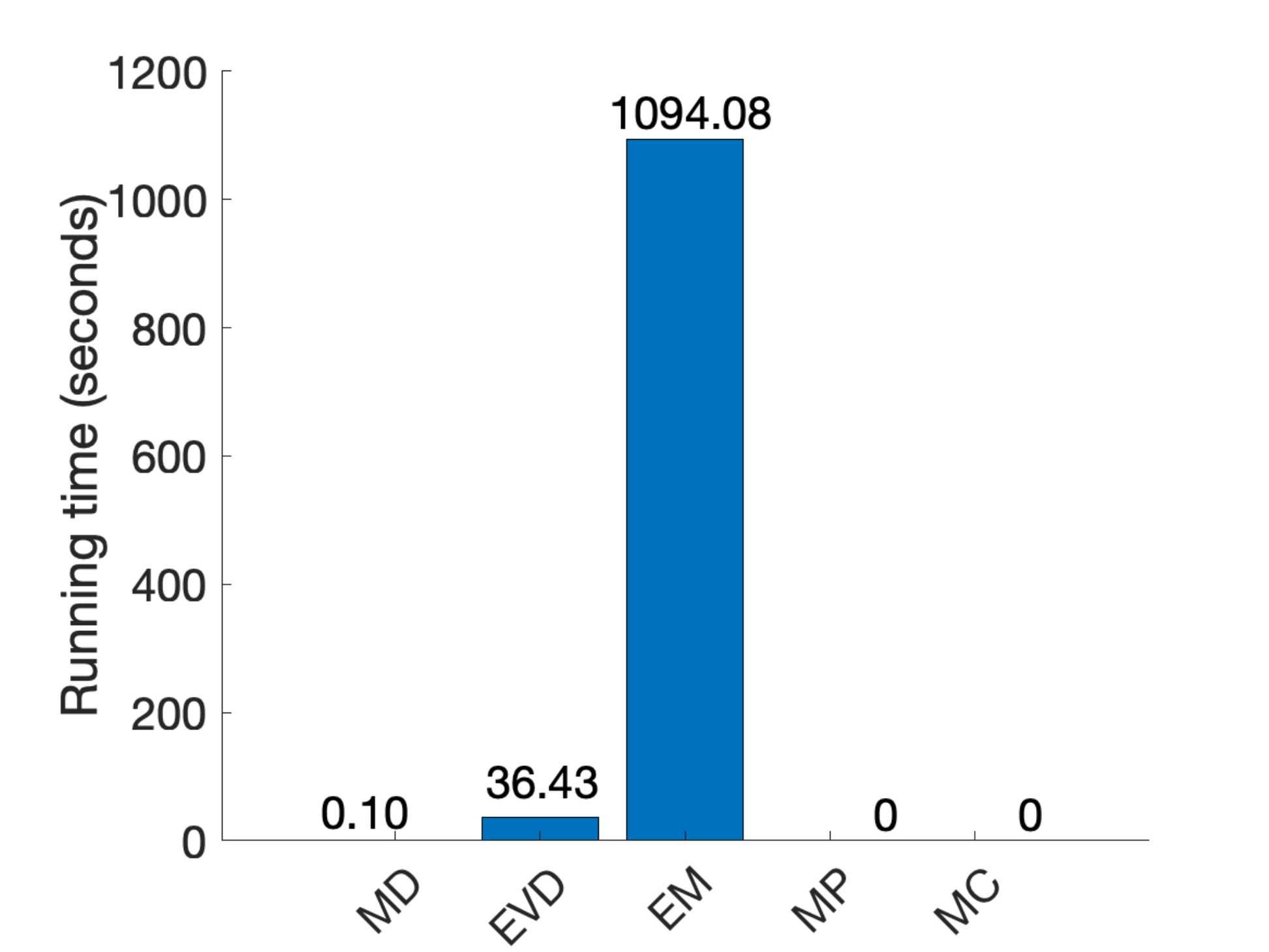}}
    \vspace{-.1em}
    %\centerline{(d) Running time}   
    \caption{Running time}
\end{subfigure}
%   \vspace{-.5em}
\caption{Evaluation on Pendigits dataset (centralized setting). }
\label{fig:pendigits centralized}
   \vspace{1em}
\end{figure}

\begin{table}[!b]
% \vspace{-.0em}
\footnotesize
\renewcommand{\arraystretch}{1.2}
\caption{Returned $b^*$ for Fisher's Iris } \label{tab: fisher iris}
\centering
\begin{tabular}{ccc}
\hline
Algorithm & $b^*$ & $\#$ of occurrences  \\
\hline
EVD & [31] & [40]\\
\hline
MD & $[16, 18, 20, 23]$ & $[1, 22, 16, 1]$\\
\hline
EM & $[18, 20]$ & $[21, 19]$\\
\hline
\end{tabular}
\end{table}
\normalsize

\begin{table}[!b]
% \vspace{-.0em}
\footnotesize
\renewcommand{\arraystretch}{1.2}
\caption{Returned $b^*$ for Facebook Metric} \label{tab: facebook}
\centering
\begin{tabular}{ccc}
\hline
Algorithm & $b^*$ & $\#$ of occurrences  \\
\hline
EVD & [20] & [40]\\
\hline
MD & $[18,19,20,21,22,23]$ & $[3,5,20,6,4,2]$\\
\hline
EM & $[18,19,20,21,22,23,24,27]$ & $[1,6,9,6,10,5,2,1]$\\
\hline
\end{tabular}
\end{table}
\normalsize

\begin{table}[!b]
% \vspace{-.0em}
\footnotesize
\renewcommand{\arraystretch}{1.2}
\caption{Returned $b^*$ for Pendigits} \label{tab: Pendigits}
\centering
\begin{tabular}{ccc}
\hline
Algorithm & $b^*$ & $\#$ of occurrences  \\
\hline
EVD & [52] & [40]\\
\hline
MD & $[19,20,21]$ & $[4, 35, 1]$\\
\hline
EM & $[19,20,21,22,23,24]$ & $[5,15,6,7,6,1]$\\
\hline
\end{tabular}
\end{table}
\normalsize

\subsubsection{Supervised Learning}
For supervised learning, we evaluate neural network based classification on the MNIST dataset. % we evaluate four algorithms, EVD-MECB, MD-MECB, MP, and MC, on the MNIST dataset. 
We do not evaluate EM here because its running time for this dataset is prohibitively high. 
%high computational complexity as already shown in Figures~\ref{fig:fisheriris centralized}(d), \ref{fig:facebook centralized}(d), and \ref{fig:pendigits centralized}(d) with unsupervised learning.

Same as unsupervised learning, we only use $2\%$ of the original data, i.e., $B = 60,211,200$. % ($61,230,000$ bits). 
Figure~\ref{fig: mnist accuracy} shows the CDFs of classification accuracy over 10 Monte Carlo runs. Note that in contrast to costs, a higher accuracy means better performance. 
%We show the accuracy as in Figure \ref{fig:mnist}. We run each algorithm for 10 times and plot the CDF for accuracies. 
MD-MECB, EVD-MECB, and MP all achieve over $80\%$ accuracy, while MC only achieves less than $40\%$ accuracy, because it changes the value of each attribute too much. As we zoom in, we see that MD-MECB performs the best. % It proves that joint CS and QT can achieve a better Neural Net performance than single operation. 
{Moreover, MD-MECB is also relatively fast, with a running time of approximately $15$ minutes per run, whereas EVD-MECB takes up to 
$4.5$ hours for each run due to computing eigenvalue decomposition for a large matrix}.
{After evaluating different budgets,} we note that although MP happens to perform well with $2\%$ data, its performance is highly sensitive to the budget $B$, while the proposed algorithms (MD-MECB, EVD-MECB) adapt well to a wide range of budgets. 

%On the other hand, MD-MECB needs several minutes while EVD-MECB takes almost one hour. 
%\ting{see how the above looks; if MP appears to perform well in this case, can comment something like ``Although MP happens to perform well under this budget, we have verified that its performance degrades as we reduce $B$, while the proposed algorithms adapt well. '' of course, if it can be verified.}\cc{Hanlin is running another experiment with a smaller budget} \ting{perhaps try an extremely small $B$ such that MP cannot even cover all the classes ($k<10$)}

\begin{figure}[!t]
\begin{subfigure}{0.25\textwidth}
\centerline{
\includegraphics[width=\textwidth,height=3.2cm]{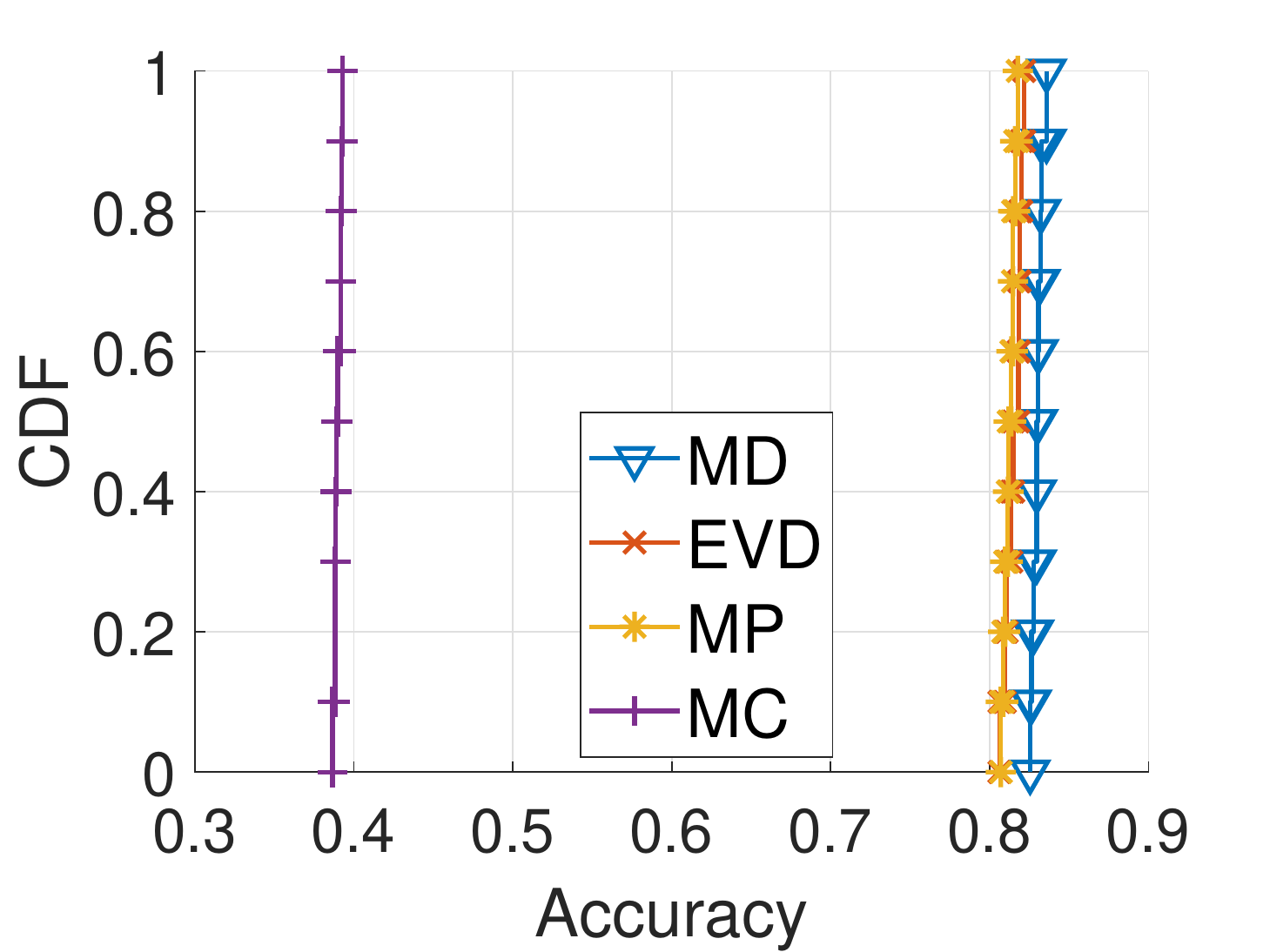}}
\vspace{-.1em}
\caption{Overall CDFs}
\end{subfigure}
 \hspace{-1.5em}
\begin{subfigure}{0.25\textwidth}
\centerline{
\includegraphics[width=\textwidth,height=3.2cm]{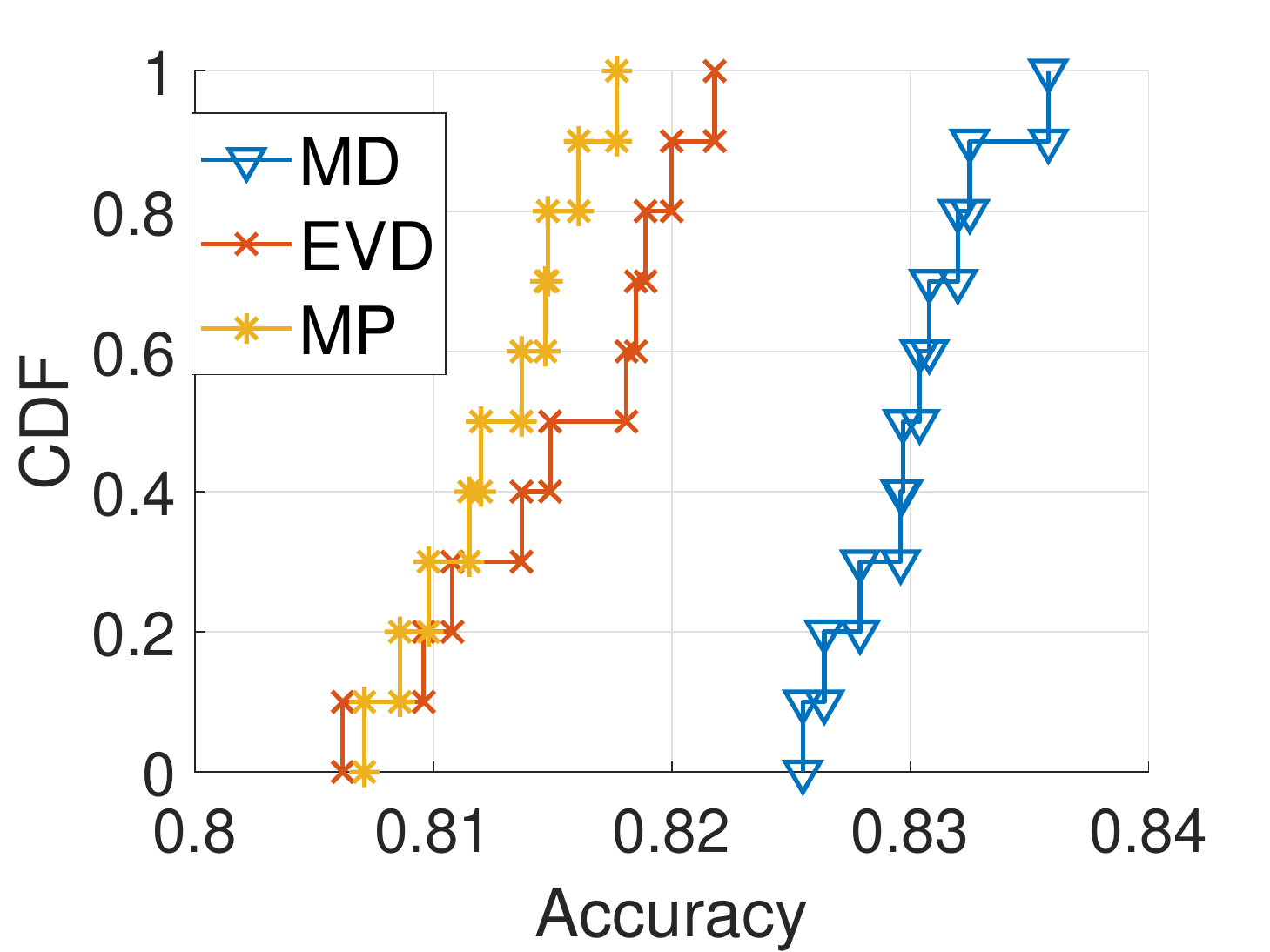}}
\vspace{-.1em}
\caption{Zoomed-in CDFs}
\end{subfigure}
%   \vspace{-.5em}
\caption{Evaluation on MNIST (Neural Net Accuracies) }
\label{fig: mnist accuracy}
   \vspace{1em}
\end{figure}

%In our evaluations, we observe that the prediction accuracy for MD-MECB, EVD-MECB, MP and MC are $88.74\%$, $89.24\%$, $86.90\%$ and $37.39\%$, respectively 
% \shiqiang{Only one run or multiple runs? If one run, list values in a table or bar plot. If multiple runs, consider drawing a CDF plot. Why are the $b^*$ and $k^*$ values not given for this experiment?}.\hl{fixed}\shiqiang{Then how many Monte Carlo runs?} \hl{10 runs. About 2 hours. } %Although MD-MECB achieves a bit lower accuracy as compared to EVD-MECB, MD-MECB is more efficient which only takes several minutes while EVD-MECB takes almost one hour. 

\subsection{Results in Distributed Setting}
In this experiment, we use Fisher's Iris dataset and Pendigits dataset to evaluate our proposed distributed algorithm (Algorithm~\ref{Alg:OBA-RCC}). The original data points are randomly distributed across 10 nodes. %Same as centralized setting, we also test 40 Monte Carlo runs for Fisher's Iris and Pendigits, respectively. 
The global communication budgets are set to $4,875$ bits for Fisher's Iris and $828,087$ bits for Pendigits, which correspond to $10\%$ of the original data size. 

We present the results for distributed setting in Figures~\ref{fig:fisheriris distributed} and \ref{fig:pendigits distributed}, from which we have the following observations: 1) With only $10\%$ data in the distributed setting, most of the algorithms equipped with OBA outperform DRCC with a small degradation in the ML performance. %By comparing with the results in centralized setting, our OBA algorithm performs pretty well for various ed budget allocation method is effective in preserving the ML performance. 
% \shiqiang{How does ``comparing with the results in centralized setting'' conclude that our method is effective in preserving the ML performance?} 
2) Compared with OBA-MP, OBA-MC, and DRCC that only rely on one operation to compress the data, our proposed OBA-EVD and OBA-MD which jointly optimize the operations of coreset construction and quantization perform significantly better. 3) OBA-MD  is the most efficient over all these algorithms. %4) Our algorithms in distributed setting shows significant advantage over the existing distributed coreset construction algorithm (DRCC).

\subsection{Summary of Experimental Results}
\begin{itemize}
    \item We demonstrate via real ML tasks and datasets that it is possible to achieve reasonable ML performance (less than $10\%$ of degradation {in most cases}) and substantial data reduction ($90$--$98\%$ smaller than the original dataset) by combining coreset construction with quantization.
    \item The proposed algorithms approximate the performance of EM, with a significantly lower running time.\looseness=-1 %which demonstrates the efficiency of our algorithms.
    %\item The proposed algorithms can significantly improve the ML performance of MP/MC, % (with a single coreset operation or quantization),     which validates the necessity of 
    \item Jointly optimizing coreset construction and quantization achieves much better ML performance than relying on only one of these operations.
    \item MD-MECB and its distributed variant (OBA-MD) achieve the best performance-efficiency tradeoff among all the evaluated algorithms, making them the most suitable for large datasets. %feasible in practical applications.  
\end{itemize}{}

\begin{comment}
\begin{figure}[t!]
   \centerline{\includegraphics[width=.5\linewidth]{}}
    % \vspace{-1em}
    \caption{Evaluation of NN on MNIST (centralized setting) \ting{show the running time side by side}\hl{You mean MNIST’s running time?} \ting{yes, and I see that MC is missing; to be updated when you get results} \hl{MC is not missing, its accuracy is  $< 40\%$. If I display it in the fig, the other 3 curves will be too close to each other. } \ting{remove title, y-axis should have label, use consistent legends as other plots; should side by side `(a) overall' (show all CDFs) and `(b) zoomed-in' (skip MC); mention running times in text }}
    \label{fig:mnist}
    % \vspace{-1em}
\end{figure}
\end{comment}{}

\begin{figure}[t]
\begin{subfigure}{0.25\textwidth}
\centerline{
\includegraphics[width=\textwidth,height=3.2cm]{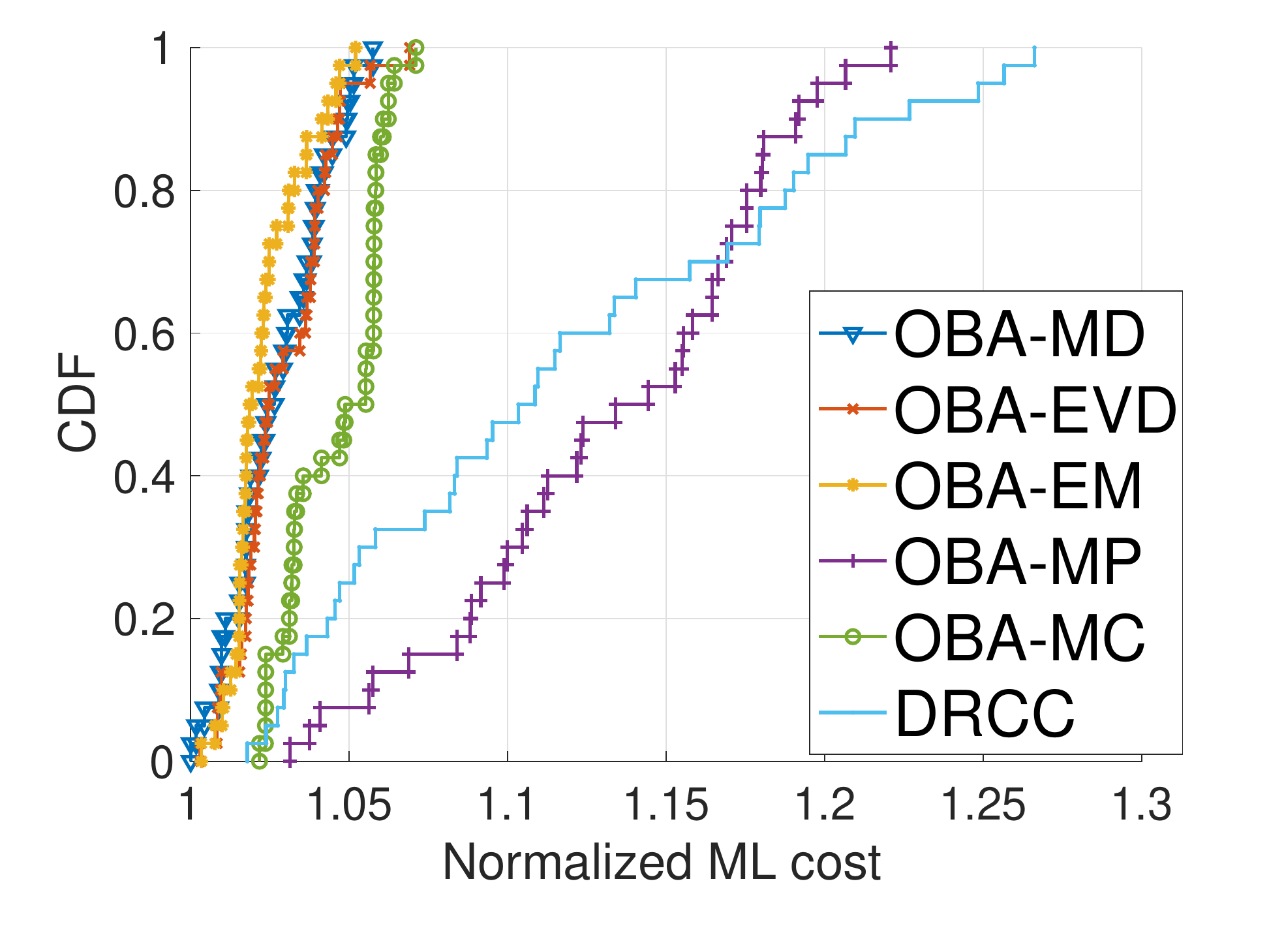}}
\vspace{-.1em}
\caption{MEB}
\end{subfigure}
 \hspace{-6em}
\begin{subfigure}{0.25\textwidth}
\centerline{
\includegraphics[width=\textwidth,height=3.2cm]{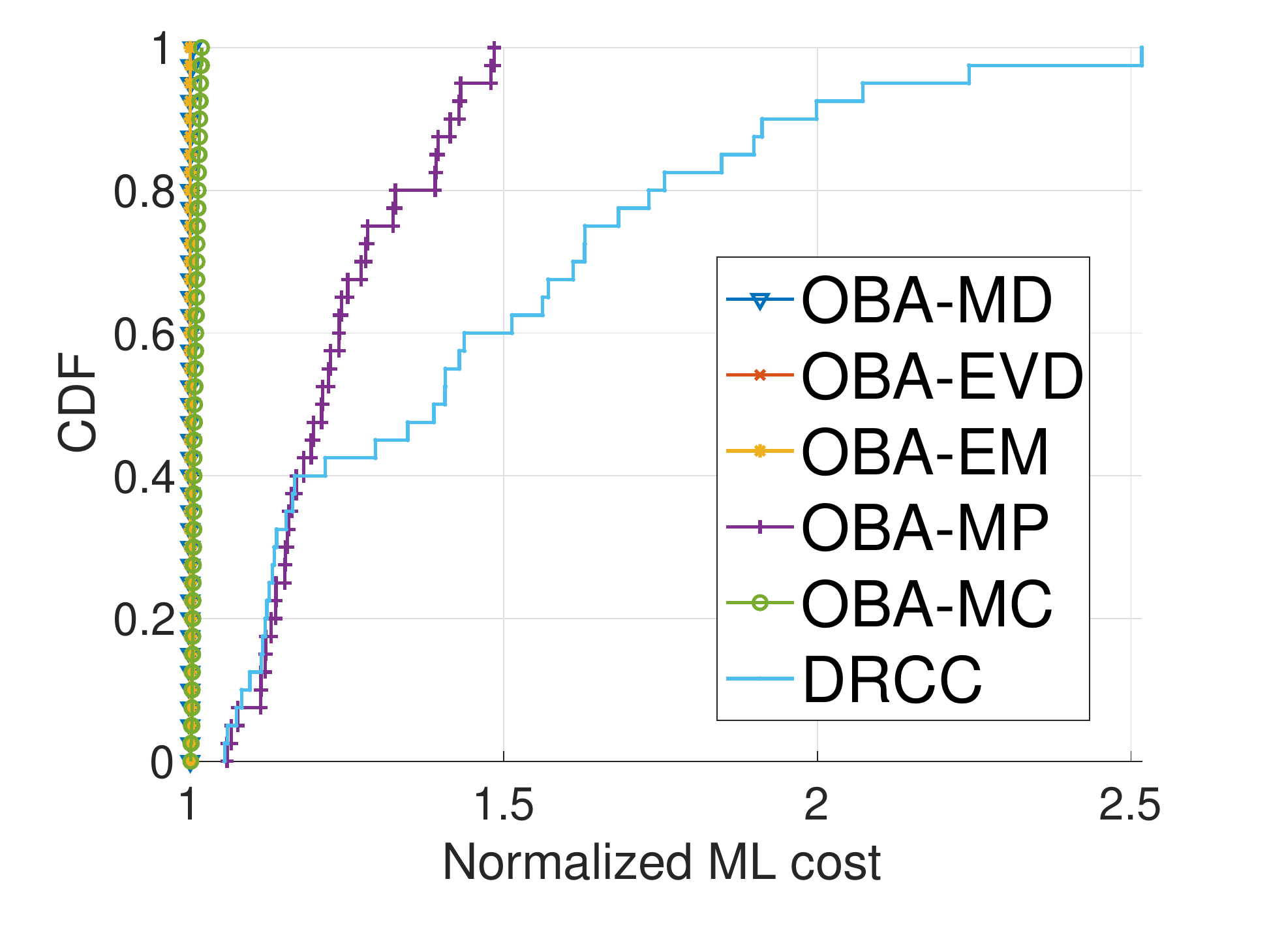}}
\vspace{-.1em}
\caption{$k$-means ($k=2$)}
\end{subfigure}
  \begin{subfigure}{.25\textwidth}
  \centerline{
   \includegraphics[width=\textwidth,,height=3.2cm]{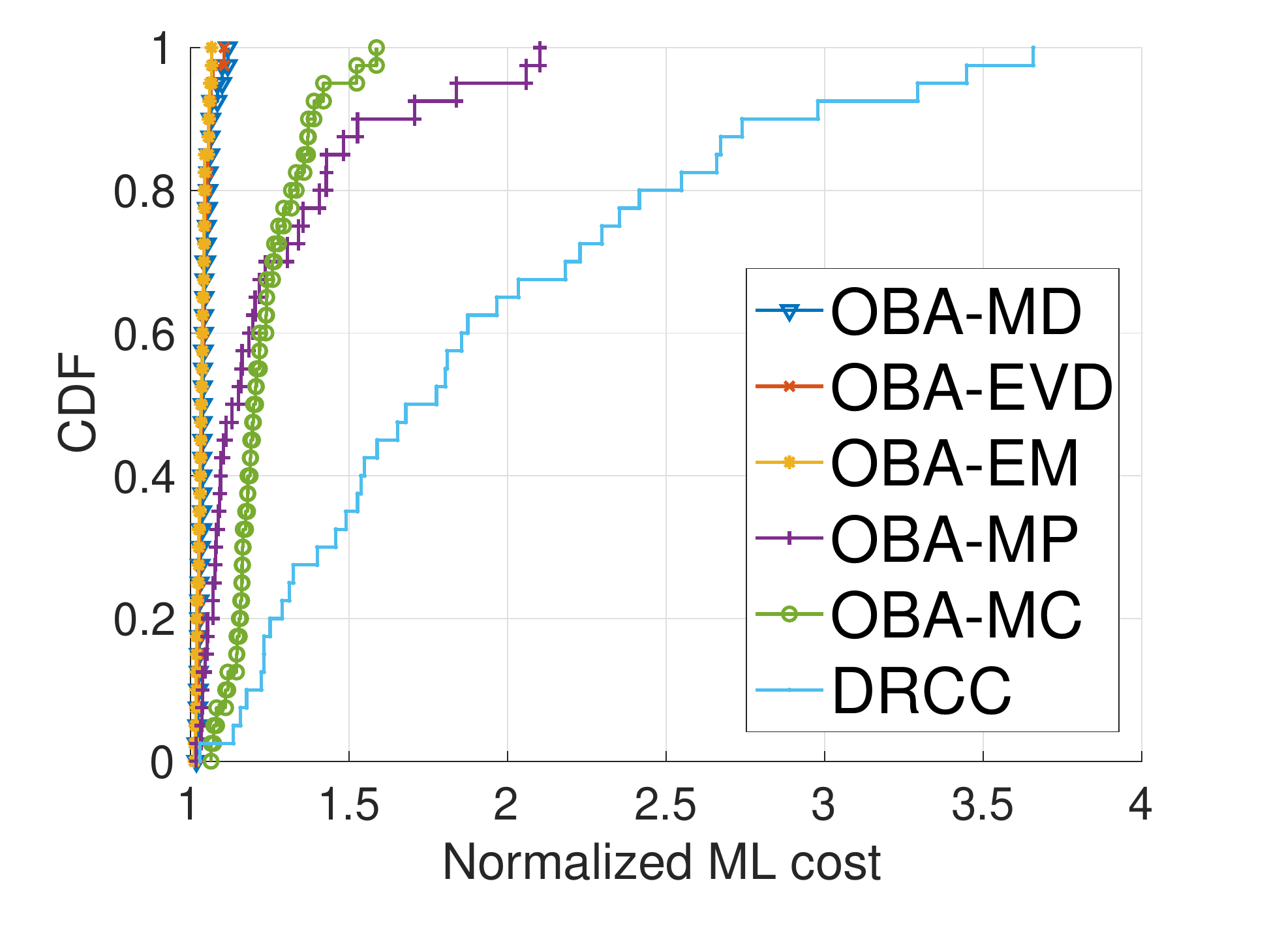}}
   \vspace{-.1em}
    \caption{PCA (3 components) }
  \end{subfigure}
   \hspace{-1.5em}
  \begin{subfigure}{0.25\textwidth}
    \centerline{
  \includegraphics[width=\textwidth,height=3.2cm]{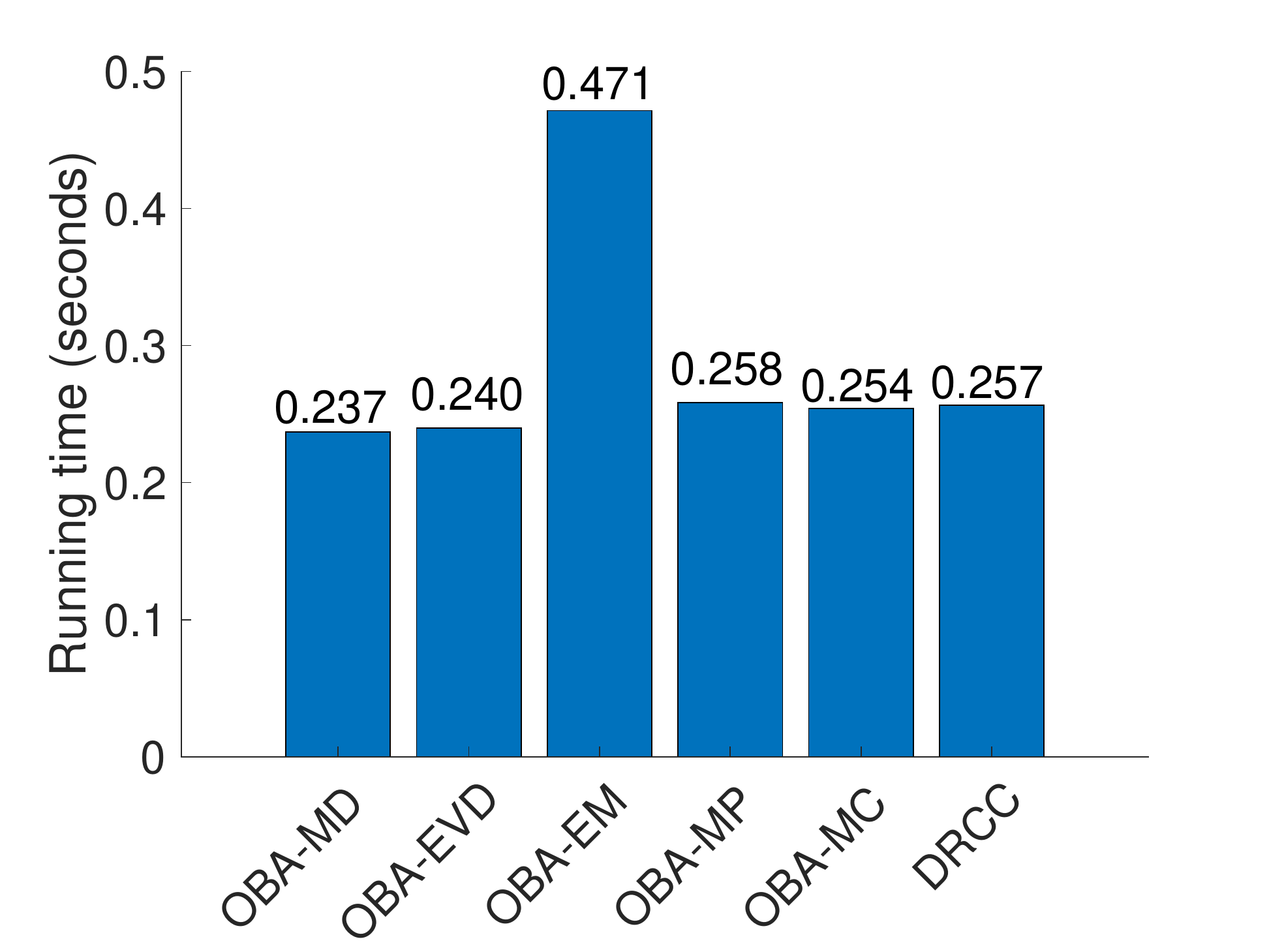}}
  \vspace{-.1em}
\caption{Running time}   
   \end{subfigure}    
%   \vspace{-.5em}
\caption{Evaluation on Fisher's Iris dataset (distributed setting). }
\label{fig:fisheriris distributed}
   \vspace{1em}
\end{figure}

\begin{figure}[t]
\begin{subfigure}{0.234\textwidth}
\centerline{
\includegraphics[width=\textwidth,height=3.2cm]{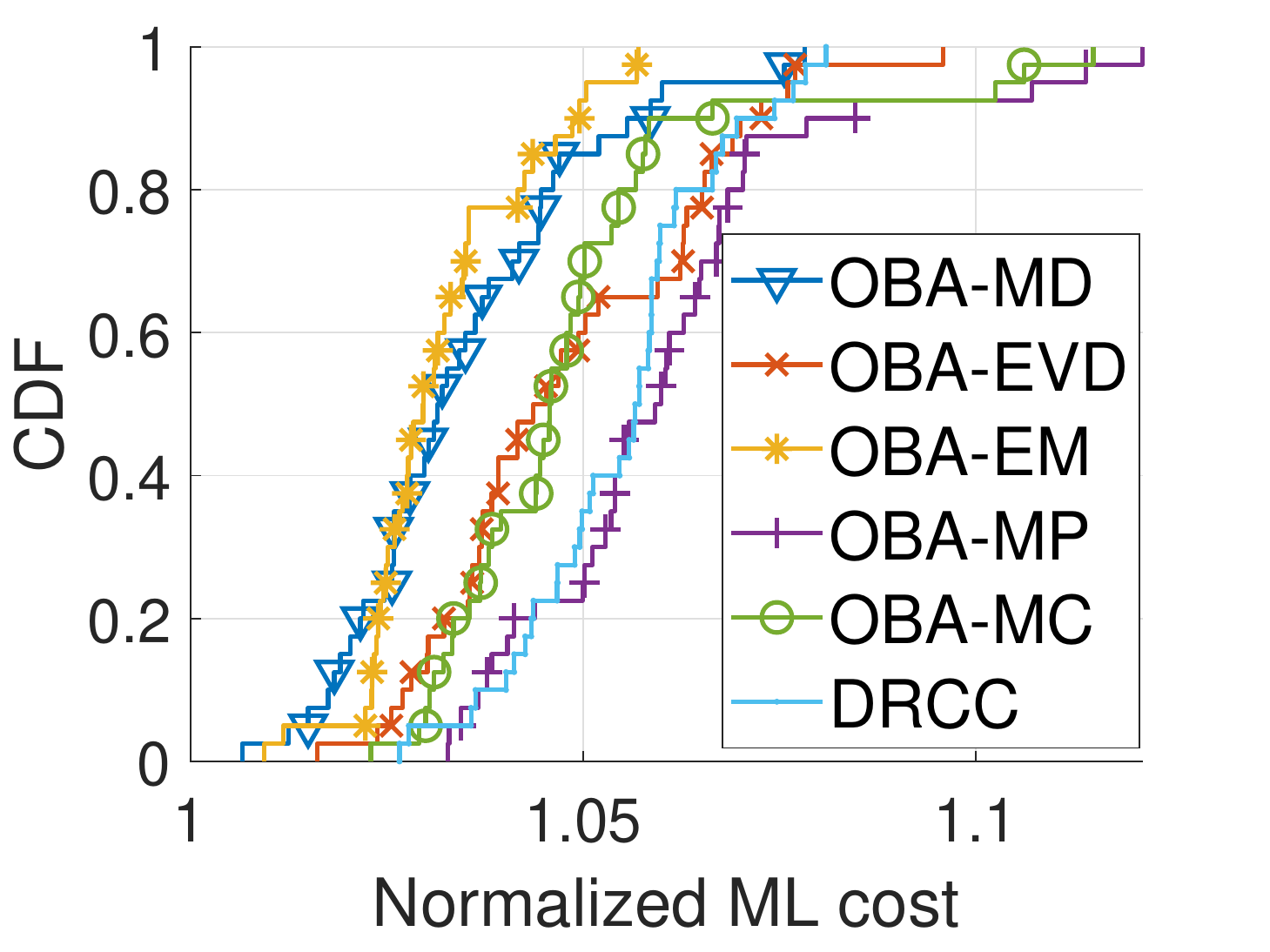}}
\vspace{-.1em}
\caption{MEB}
\end{subfigure}
 \hspace{-6em}
\begin{subfigure}{0.25\textwidth}
\centerline{
\includegraphics[width=\textwidth,height=3.2cm]{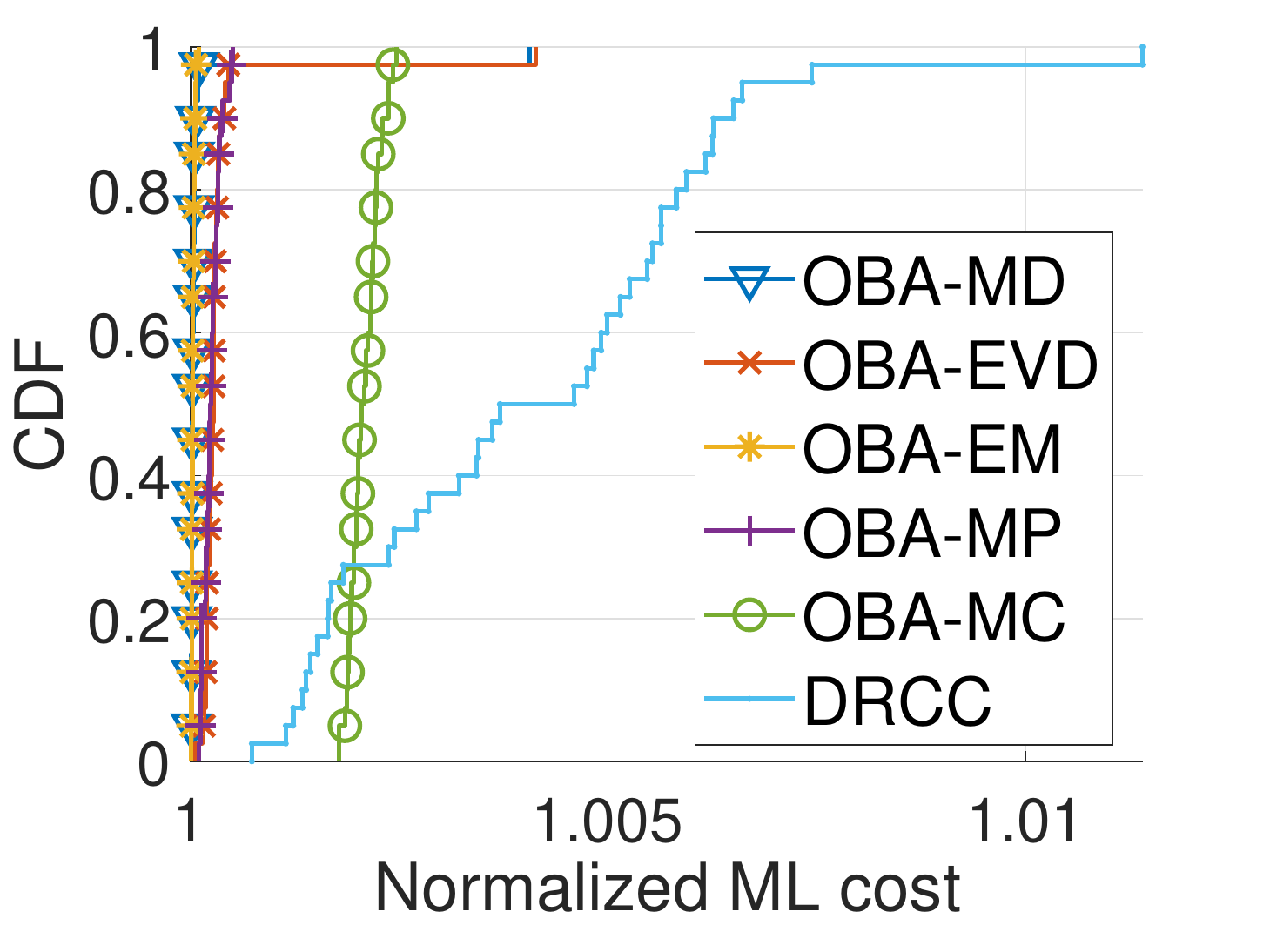}}
\vspace{-.1em}
\caption{$k$-means ($k=2$)}
\end{subfigure}
  \begin{subfigure}{.24\textwidth}
  \centerline{
   \includegraphics[width=\textwidth,,height=3.2cm]{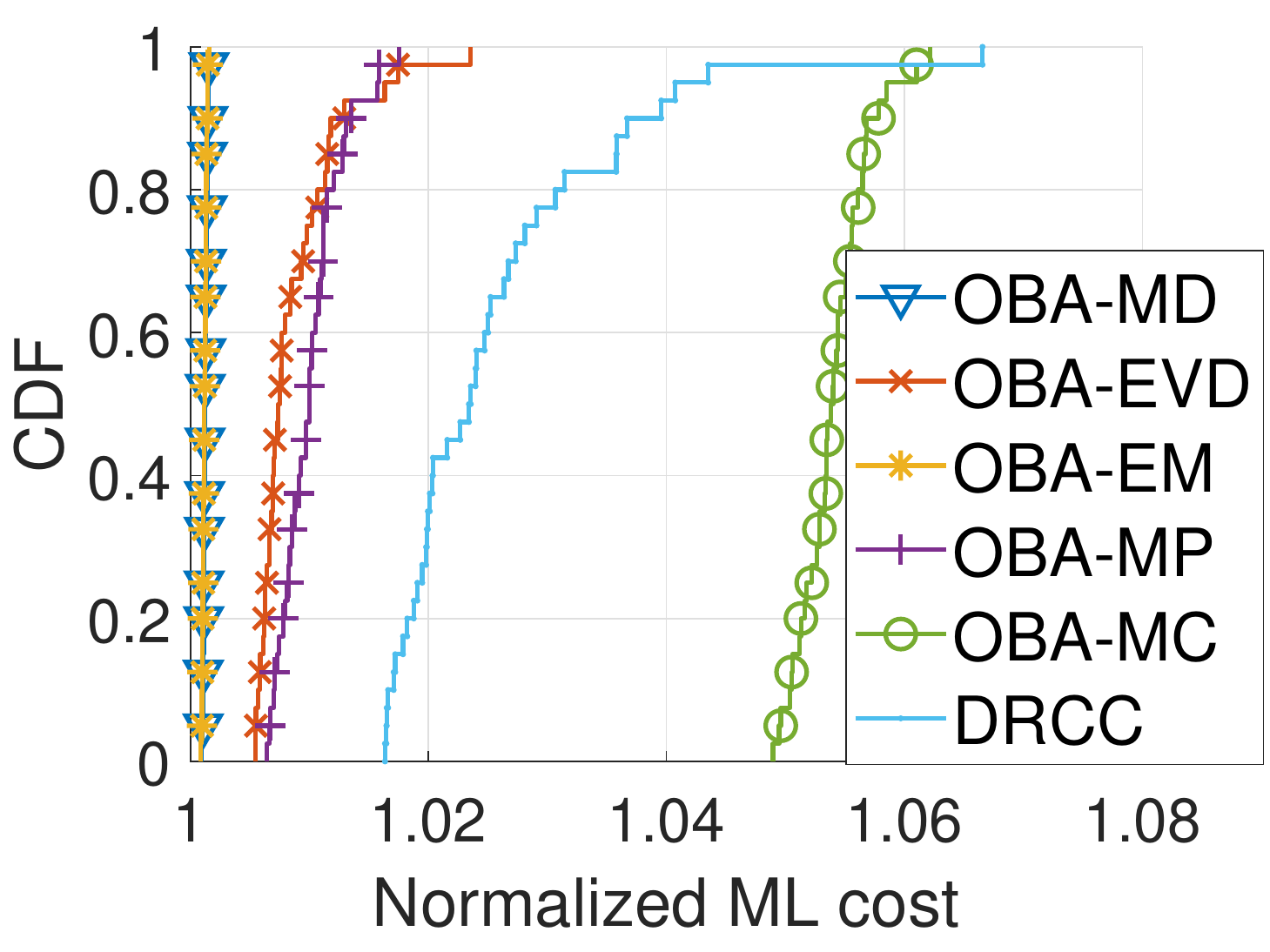}}
   \vspace{-.1em}
    \caption{PCA (11 components) }
  \end{subfigure}
  \begin{subfigure}{0.24\textwidth}
    \centerline{
  \includegraphics[width=\textwidth,height=3.2cm]{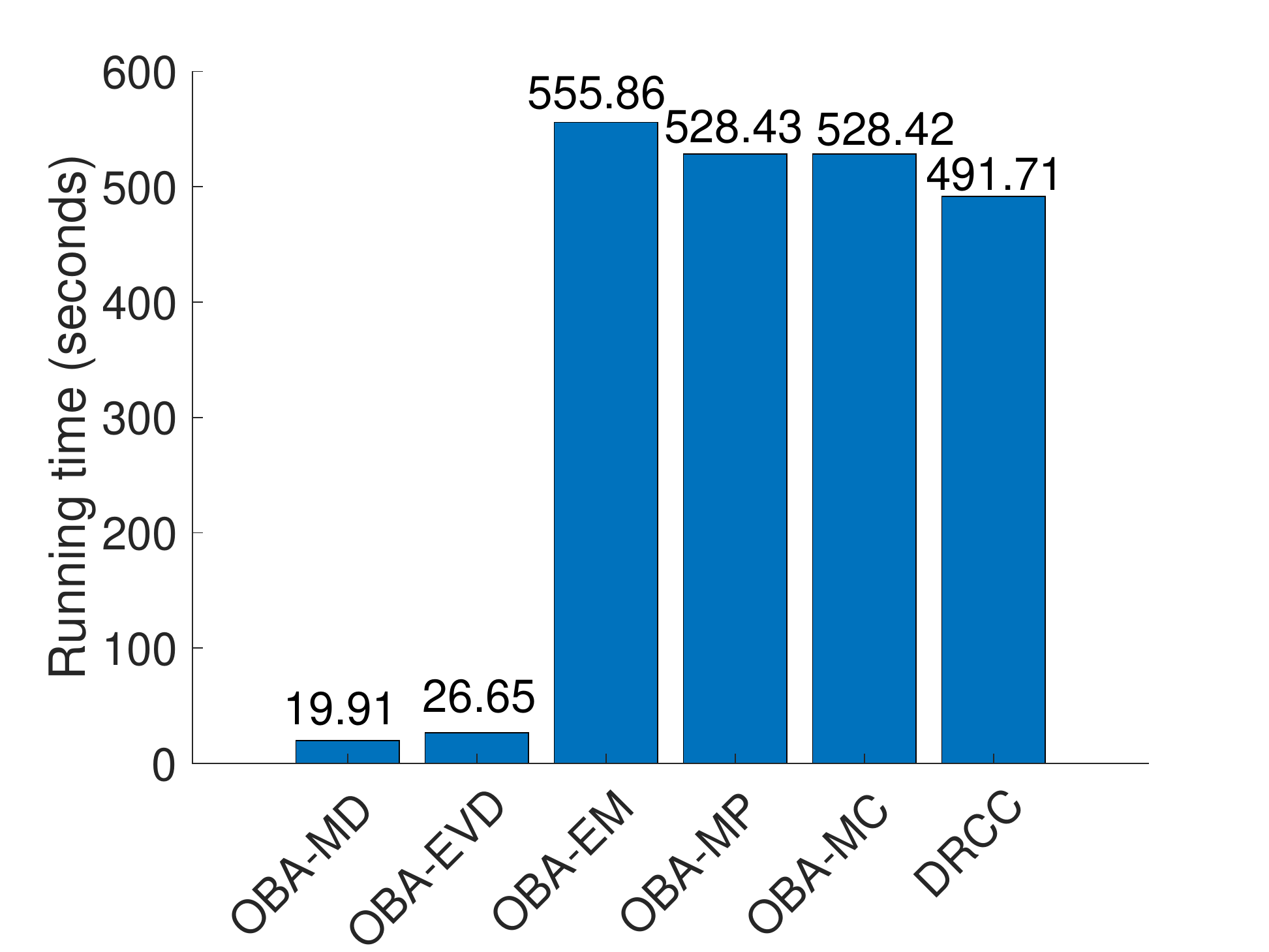}}
  \vspace{-.1em}
\caption{Running time}   
   \end{subfigure}    
%   \vspace{-.5em}
\caption{Evaluation on Pendigits dataset (distributed setting). }
\label{fig:pendigits distributed}
   \vspace{1em}
\end{figure}

\begin{comment}
\section{Discussion}

\shiqiang{I think this discussion section should be combined with conclusion. It is not very common for INFOCOM papers to have a separate discussion section.}

We are the first to explore the combination of coreset and quantization. Since coreset will reduce the cardinality, quantization will reduce the number of precision bits, the last interesting orthogonal approach will be the feature extraction. Therefore, sketching techniques can further reduce the feature dimension, thus providing another perspective for achieving different tradeoffs between ML costs and communication costs, which is thus an interesting future direction of work.

Although the original problem, formulated as a budget-limited $\epsilon$ minimization problem, is hard to solve, the proposed  algorithms are verified with multiple real-world datasets to be effective. Evaluating our algorithms on more datasets and ML tasks is an interesting work. 

Generalizing our analysis to a more complicated combination of coreset and quantization is interesting. Some $\epsilon$ analysis is already given, but the solution seems to be difficult to find. It's promising if we can make any progress towards this direction. 
\end{comment}

\section{Conclusion}\label{sec:Conclusion}

In this paper, we have proposed the first framework, MECB, to jointly configure coreset construction algorithms and quantizers in order to minimize the ML error under a given communication budget. We have proposed two algorithms to efficiently compute approximate solutions to the MECB problem, whose effectiveness and efficiency have been demonstrated through experiments based on multiple real-world datasets. We have further proposed an algorithm to extend our solutions to the distributed setting 
by carefully allocating the communication budget across multiple nodes to minimize the overall ML error, which has shown significant improvements  over alternatives when combined with our proposed solutions to MECB. 
Our solutions only depend on a smoothness parameter of the ML cost function, and can thus serve as a key enabler in reducing the communication cost for a broad range of ML tasks. \looseness=-1 %data-driven applications.

% \shiqiang{(I think the future work paragraph is not needed for INFOCOM.)} \sout{Interesting future works include evaluating our algorithms on more datasets/ML tasks, and generalizing our error bound analysis to arbitrary integration of coreset construction and quantizers. Furthermore, sketching techniques can further reduce the feature dimension, thus providing another perspective for achieving different tradeoffs between ML costs and communication costs, which is thus another important future direction of work. }

%Standing on the shoulders of giants, we extend the coreset work with quantization. In this paper, the most convenient take-away is that cardinality reduction joint with precision reduction can do better than single operation. The most important result is the framework design, where new difficulty arises, too. In order to find the best solution to minimize $\epsilon$ error, we propose heuristics and greedy algorithms, which are proved to be quite efficient. In distributed setting, \textit{OBA} is proposed and evaluated. Our experiments verify our lemmas, theorems and algorithms in this paper. 

%\clearpage
\bibliographystyle{IEEEtran}
\bibliography{coreset_INFOCOM20_sub}

\end{document}